\documentclass{article}

\usepackage{microtype}
\usepackage{graphicx}
\usepackage{subcaption}
\usepackage{booktabs} 
\usepackage{pifont}
\usepackage{hyperref}

\usepackage[preprint]{./icml_style/icml2026}

\usepackage{amsmath}
\usepackage{amssymb}
\usepackage{mathtools}
\usepackage{amsthm}
\usepackage{enumitem}
\usepackage{multirow}

\usepackage[capitalize,noabbrev]{cleveref}

\theoremstyle{plain}
\newtheorem{theorem}{Theorem}[section]
\newtheorem{proposition}[theorem]{Proposition}
\newtheorem{lemma}[theorem]{Lemma}

\theoremstyle{definition}

\theoremstyle{remark}
\newtheorem{remark}[theorem]{Remark}
\newtheorem{example}[theorem]{Example}

\newcommand{\R}{\mathbb{R}}

\newcommand{\supp}{\mathrm{supp}}
\newcommand{\Id}{\mathrm{Id}}
\newcommand{\cP}{\mathcal{P}}
\newcommand{\cX}{\mathcal{X}}
\newcommand{\cY}{\mathcal{Y}}
\newcommand{\cM}{\mathcal{M}}
\newcommand{\cL}{\mathcal{L}}
\newcommand{\eps}{\varepsilon}

\DeclareMathOperator*{\argmin}{arg\,min}
\DeclareMathOperator*{\argmax}{arg\,max}

\definecolor{darkgreen}{rgb}{0,0.4,0}
\newcommand{\dk}[1]{{\color{purple}[DK: #1]}}

\newcommand{\cmark}{\textcolor{green!60!black}{\ding{51}}}
\newcommand{\xmark}{\textcolor{red!70!black}{\ding{55}}}
\newcommand{\dashmark}{\textcolor{black}{--}}

\usepackage[textsize=tiny]{todonotes}

\icmltitlerunning{
Rate-Optimal Noise Annealing in Semi-Dual Neural Optimal Transport
}

\begin{document}

\twocolumn[

  \icmltitle{
  Rate-Optimal Noise Annealing in Semi-Dual Neural Optimal Transport: Tangential Identifiability, Off-Manifold Ambiguity, and Guaranteed Recovery
  }

  \icmlsetsymbol{equal}{*}

\begin{icmlauthorlist}
  \icmlauthor{Raymond Chu}{cmu}
  \icmlauthor{Jaewoong Choi}{equal,skku}
  \icmlauthor{Dohyun Kwon}{equal,uos,kias}
\end{icmlauthorlist}

 \icmlaffiliation{cmu}{Department of Mathematical Sciences, Carnegie Mellon University, Pittsburgh, PA, USA}
\icmlaffiliation{skku}{Department of Statistics, Sungkyunkwan University, Seoul, Republic of Korea}
\icmlaffiliation{uos}{Department of Mathematics, University of Seoul, Seoul, Republic of Korea}
\icmlaffiliation{kias}{Korea Institute for Advanced Study, Seoul, Republic of Korea}

\icmlcorrespondingauthor{Dohyun Kwon}{dh.dohyun.kwon@gmail.com}
\icmlcorrespondingauthor{Jaewoong Choi}{jaewoongchoi@skku.edu}


  \icmlkeywords{Machine Learning, ICML}

  \vskip 0.3in
]



\printAffiliationsAndNotice{\icmlEqualContribution}

\begin{abstract}
Semi-dual neural optimal transport learns a transport map via a max--min objective, yet training can converge to incorrect or degenerate maps. We fully characterize these spurious solutions in the common regime where data concentrate on low-dimensional manifold: the objective is underconstrained off the data manifold, while the on-manifold transport signal remains identifiable. Following~\citet{choi2025overcoming}, we study additive-noise smoothing as a remedy and prove new map recovery guarantees as the noise vanishes. Our main practical contribution is a computable terminal noise level $\varepsilon_{\mathrm{stat}}(N)$ that attains the optimal statistical rate, with scaling governed by the intrinsic dimension $m$ of the data. The formula arises from a theoretical unified analysis of (i) quantitative stability of optimal plans, (ii) smoothing-induced bias, and (iii) finite-sample error, yielding rates that depend on $m$ rather than the ambient dimension. Finally, we show that the reduced semi-dual objective becomes increasingly ill-conditioned as $\varepsilon \downarrow 0$. This provides a principled stopping rule: annealing below $\varepsilon_{\mathrm{stat}}(N)$ can \emph{worsen} optimization conditioning without improving statistical accuracy.
\end{abstract}

\section{Introduction}
\label{sec:introduction}

Optimal transport (OT) provides a natural framework for comparing probability measures. 
Neural approaches to OT have found broad use across a wide range of applications, including generative modeling  \citep{not, uotm, sjko, lipman2022flow}.
A prominent family of methods  \citep{fanscalable,otm} is based on the \emph{Semi-dual Neural OT \textbf{(SNOT)}}, in which one learns a potential and then recovers an associated transport map via the max--min objective. Given source and target probability measures $\mu$ and $\nu$ and a cost function $c$, these approaches optimize
\begin{equation} 
\label{eq:snot_intro}
\begin{aligned}
    &\sup_{V \in C(\cY)} \inf_{T:\mathcal{X} \rightarrow \mathcal{Y}}
    \mathcal{L}(V, T)
    \quad \hbox{where} \quad \mathcal{L}(V, T):= 
    \\
    & \int_{\mathcal{X}} c(x, T(x)) - V(T(x)) d\mu(x) + \int_{\mathcal{Y}} V(y) d\nu(y) .   
\end{aligned}    
\end{equation}
where $T$ denotes a candidate transport map, $V$ denotes a potential, and $C(\mathcal{Y})$ is a set of continuous functions on $\mathcal{Y}$. If an optimal transport map $T^\star$ exists, it is known that $T^\star$ minimizes the inner objective for some potential $V^\star$, and $(V^\star, T^\star)$ is an optimal solution of \eqref{eq:snot_intro}.

A critical challenge, however, is that \emph{not every optimizer of the semi-dual max--min objective yields a correct transport map}. Even when \eqref{eq:snot_intro} is solved accurately, the recovered $T$ may fail to push $\mu$ to $\nu$, producing an incorrect or degenerate mapping \cite{otm, fanTMLR, choi2025overcoming}. This challenge was referred to as the \emph{spurious solution} problem in \citet{choi2025overcoming}, where examples of various failure cases were investigated.  This pathology is particularly pronounced under the manifold assumption common in high-dimensional inference, where $\mu$ is supported on a lower-dimensional set. 

\paragraph{Why spurious transport maps arise}

To understand this failure mode, we identify a geometric mechanism behind spurious solutions.
In \eqref{eq:snot_intro}, one learns a function $V$ and then \emph{recovers} a transport map $T_V$ by minimizing the semi-dual objective over maps $T$:
\begin{align}
\label{eq:recov}
    T_{V} \in \argmin_{T:\cX\to \cY}\, \mathcal{L}(V,T).
\end{align}
Our first step is to fully characterize this recovery step  (Theorem~\ref{thm:recovery_characterization}). We show that optimal recovery is equivalent to a pointwise condition enforced only for $\mu$-a.e.\ inputs as in \eqref{eq:recovery_constraint}. Under the common high-dimensional assumption that $\mu$ is supported on a low-dimensional manifold, this condition can \textbf{at most identify the tangential component of the optimal transport map} (Theorem~\ref{thm:tangential_recovery}) while its behavior in \textbf{off-manifold (normal) directions remains unconstrained}. This limitation is sharp: in Example~\ref{ex:failure}, we exhibit infinitely many recovered maps that achieve the same objective and match the OT map tangentially, yet have arbitrary normal components.  We observe the same pattern empirically: neural models can attain very small \emph{tangential} error while exhibiting much larger \emph{normal} error (Table~\ref{tab:tang_vs_normal}).

The analysis in Theorem~\ref{thm:recovery_characterization} and Theorem~\ref{thm:tangential_recovery} explains why spurious maps can arise under the manifold assumption, which is common in high-dimensional settings. This also motivates adding \emph{off-manifold regularization} to the source distribution $\mu$ to remove off-support degrees of freedom. Consistent with this expectation, we empirically find that smoothing reduces the normal-direction error (Table~\ref{tab:tang_vs_normal}).

\paragraph{Why smoothing works, and what it guarantees}

A practical way to remove off-manifold ambiguity is to replace the singular source distribution by a full-dimensional regularization $\mu_\varepsilon \ll \mathcal{L}^d$ as in \citet{choi2025overcoming}. 
We prove that smoothing does not discard identifiable structure (Proposition~\ref{thm:noise_tangential}). As $\varepsilon\downarrow 0$, the regularized solutions recover \textbf{the correct \emph{on-manifold} (tangential) transport signal}, matching what is identifiable from the unregularized problem. 

In the most favorable cases, where the limiting potential is differentiable, this can be strengthened to full map recovery, with the learned maps \textbf{converging (in a graphical sense) to an optimal Monge map} as $\varepsilon\downarrow 0$ (Theorem~\ref{thm:map_level_convg}).

\paragraph{Why small noise can be hard to optimize}

Empirically,~\citet{choi2025overcoming} observe that reducing the noise level to a small positive $\varepsilon_{\min}>0$ yields better stability and performance than annealing all the way to $\varepsilon=0$. We show this is driven by \emph{conditioning}. \textbf{In settings where an exact Monge map does not exist} (a common regime for lower-dimensional supports), the \emph{reduced} semi-dual objective (obtained by solving the inner minimization over $T$) can become increasingly \textbf{ill-conditioned as $\varepsilon \to 0$:
the Hessian with respect to network parameters can blow up near the optimizer} (Theorem~\ref{lem:theta_derivatives1}, Lemma~\ref{lem:no_uniform_Lip}). This implies that gradient-based training typically needs smaller steps, more iterations, and becomes more sensitive to hyperparameters at very small noise. Figure~\ref{fig:mse_vs_n} validates this: smaller $\varepsilon$ 
requires more training iterations to converge.

\paragraph{The bias--stability--sample tradeoff}

Noise selection is therefore a tradeoff among three effects:
(i) \emph{stability}, which improves with larger $\varepsilon$ due to better conditioning;
(ii) \emph{bias}, which increases with larger $\varepsilon$ since one solves a smoothed OT problem;
and (iii) \emph{finite-sample estimation error}, since training uses only finitely many samples.
Our analysis makes this tradeoff explicit by bounding the expected error of the optimal plan learned from noisy samples, decomposing it into a bias term and an estimation term (Theorem~\ref{thm:semi_discrete}), when the target measure is sufficiently regular. Under the lower-dimensional support assumption, the estimation term depends on the \textbf{\emph{intrinsic} dimension} rather than the ambient dimension (Figure~\ref{fig:eps_vs_slopes}), leveraging intrinsic Wasserstein estimation phenomena~\cite{block2021intrinsic}.

A key implication of Theorem~\ref{thm:semi_discrete} is that for a fixed sample size $N$, \textbf{accuracy cannot improve indefinitely by taking $\varepsilon \downarrow 0$}.
Below a statistically meaningful scale, the finite-sample estimation term dominates, so further reducing the additive noise $\eps Y$ does not improve the rate. At the same time, smaller $\varepsilon$ can severely deteriorate optimization conditioning (Theorem~\ref{lem:theta_derivatives1}, Lemma~\ref{lem:no_uniform_Lip}). 

This motivates a \textbf{terminal noise level} $\varepsilon_{\mathrm{stat}}(N)$: the \emph{largest} $\varepsilon$
(up to scaling constants) that still attains the statistical optimal rate; taking $\varepsilon \ll \varepsilon_{\mathrm{stat}}(N)$
cannot improve statistical rate and may only worsen conditioning (see Figure~\ref{fig:neural_map_fit}). Balancing bias and estimation in
Theorem~\ref{thm:semi_discrete} yields
\begin{align}
\label{eq:estat}    \varepsilon_{\mathrm{stat}}(N)
\propto \frac{1}{\mathbb{E}[|Y|]}
\begin{cases}
N^{-1/2} & m=1,\\
(\log N/N)^{1/2} & m=2,\\
N^{-1/m} & m\ge 3.
\end{cases}
\end{align}
Here $m$ is the intrinsic dimension of the support of $\mu$.

When one uses Gaussian noise i.e. $Y\sim\mathcal N(0,I_d)$, we have when $d \gg 1$ that $\mathbb E|Y|\asymp \sqrt d$.
Without normalization, the regularization error would inherit this
ambient-scale factor and would be artificially large. 

Our contributions can be
summarized as follows:
\begin{itemize} 
\item 
(Theorems~\ref{thm:recovery_characterization} \& \ref{thm:tangential_recovery}) We show that under manifold-supported data, the semi-dual objective underconstrains the recovered transport maps. This creates off-manifold non-identifiability that can lead to spurious recovered maps, while the tangential on-manifold signal remains identifiable.

\item 
(Theorem~\ref{thm:map_level_convg} \& Proposition~\ref{thm:noise_tangential}) 
Building on the smoothing approach of~\citet{choi2025overcoming}, we give a geometric explanation of how smoothing removes off-support non-identifiability that leads to spurious recovery. We further strengthen prior theory by proving map-level recovery guarantees as $\varepsilon\downarrow 0$: full map recovery under regularity, and tangential (on-manifold) recovery in general.

\item
(Theorem~\ref{lem:theta_derivatives1}, Lemma~\ref{lem:no_uniform_Lip}) We prove that as $\varepsilon \to 0$, the reduced semi-dual objective can become ill-conditioned (the Hessian with respect to the network parameters blows up), explaining the practical sensitivity of training at very small noise.

\item 
(Theorem~\ref{thm:semi_discrete}) We derive a sample-dependent terminal noise $\varepsilon_{\mathrm{stat}}(N)$ as in \eqref{eq:estat} that balances regularization bias, optimization stability, and finite-sample error, providing a concrete stopping rule for annealing. 

\end{itemize}

\section{Preliminaries}
\label{sec:pre}
The \textit{optimal transport map} \citep{monge1781memoire} is defined as a minimizer of a prescribed cost among all maps that push the source measure $\mu$ forward to the target measure $\nu$:
\begin{equation}\label{eq:ot_monge} 
    \mathcal{T}(\mu, \nu) := \inf_{T_\# \mu = \nu}  \left[ \int_{\mathcal{X} } c(x,T(x)) d \mu (x) \right].
\end{equation}
Here, $T_\#\mu$ denotes the pushforward of $\mu$ under $T$, and the constraint $T_\#\mu=\nu$ means that $T$ transports $\mu$ to $\nu$. It is well-known that the existence of an optimal transport map is not always guaranteed, and \citet{Kantorovich1948} proposed the following convex relaxation of \eqref{eq:ot_monge}:
\begin{equation} \label{eq:Kantorovich}
    C(\mu,\nu):=\inf_{\pi \in \Pi(\mu, \nu)} \left[ \int_{\mathcal{X}\times \mathcal{Y}} c(x,y) d\pi(x,y) \right],
\end{equation}
We refer to the joint probability distribution $\pi \in \Pi(\mu, \nu)$ as the \textit{transport plan} between $\mu$ and $\nu$. Here $\Pi(\mu,\nu)$ is the set of probability measures with left marginal $\mu$ and right marginal $\nu$ (see \cite{villani,santambrogio}).

\paragraph{Semi-dual Neural OT}

The Kantorovich problem \eqref{eq:Kantorovich} admits a semi-dual formulation \citep[Theorem 5.10]{villani}, \citep[Proposition 1.11]{santambrogio}:
\begin{equation} \label{eq:kantorovich-semi-dual}
     S(\mu,\nu):= \!\!\sup_{V} \left[ \int_\mathcal{X} V^c(x)d\mu(x) \!+\!\! \int_\mathcal{Y} V(y) d\nu (y) \right],
\end{equation}
where $V^{c}$ denotes the $c$-transform of $V$, i.e., 
\begin{equation} \label{eq:def_c_transform}
  V^c(x)=\underset{y\in \mathcal{Y}}{\inf}\left[ c(x,y) - V(y) \right].
  \vspace{-5pt}
\end{equation}
Replacing $V^{c}$ in \eqref{eq:kantorovich-semi-dual} by
\begin{equation}
    V^c(x)=\inf_{T: \mathcal{X} \rightarrow \mathcal{Y}}\left[ c(x,T(x)) - V(T(x)) \right]
\end{equation}
yields the max--min formulation \eqref{eq:snot_intro}. The resulting objective has been used to learn an optimal transport map with neural parameterizations \citep{otm, fanscalable, otmICNN}. See related works in   Appendix~\ref{app:related_works}.

\paragraph{Spurious solutions in SNOT}

Although \eqref{eq:snot_intro} is naturally derived from \eqref{eq:kantorovich-semi-dual}, existing SNOT approaches may admit max--min solutions that include not only the desired OT map but also spurious solutions \citep{otm}. In \citet[Theorem 3.1]{choi2025overcoming}, consistency of SNOT was established under the assumption that $\mu$ does not assign positive mass to sets of Hausdorff dimension $\le d-1$. When $\mu$ is supported on a lower-dimensional set, however, the learned map $T$ may fail to satisfy the desired relation $T_{\#}\mu=\nu$; see the examples in \citep[Section 3.2]{choi2025overcoming}.

\section{Geometric Origins of Spurious Solutions in Neural OT}\label{sec:semidual_failure}
\paragraph{Spurious solutions in the minimax formulation.}
We study the appearance of \emph{spurious solutions} in the minimax formulation~\eqref{eq:snot_intro} by fully characterizing its optimal solutions.
We say that a pair $(V^*,T_{\mathrm{rec}})$ is an optimal solution of~(1) if
\begin{align}
\label{eq:opt_pair_V}
V^* &\in \argmax_{V \in C(\cY)}\; \inf_{T:\cX\to \cY}\, \mathcal{L}(V,T), \\[2pt]
\label{eq:opt_pair_T}
T_{\operatorname{rec}} &\in \argmin_{T:\cX\to \cY}\, \mathcal{L}(V^*,T).
\end{align}

Here we always assume that $\cX , \cY \subset \R^d$ are compact sets.

For later use, given a fixed potential $V$, we say that $T_V$ is an optimal recovery map with respect to $V$ if
\begin{equation}\label{eq:opt_TV_def}
T_V \in \argmin_{T:\cX\to \cY}\, \mathcal{L}(V,T).
\end{equation}

We begin by characterizing what it means to optimally recover a map $T_V$ for a fixed potential $V$.

\begin{theorem}[Full characterization of the recovery step]\label{thm:recovery_characterization}
For any $V\in C(\cY)$ and cost $c\in C(\cX\times \cY)$, then there exists a $T_V$ that satisfies \eqref{eq:opt_TV_def}. Furthermore, one has that $T_V$ satisfies \eqref{eq:opt_TV_def}
if and only if for $\mu$-a.e. $x$
\begin{equation}\label{eq:recovery_constraint}
c\bigl(x,T_V(x)\bigr) - V\bigl(T_V(x)\bigr) = V^c(x).
\end{equation} 
\end{theorem}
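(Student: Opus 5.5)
The plan has three steps: establish regularity of $V^c$, construct a measurable pointwise minimizer, and then read off the characterization from a nonnegativity argument. The basic observation is that for every measurable $T:\cX\to\cY$ and every $x\in\cX$ we have the pointwise inequality
\begin{equation*}
c\bigl(x,T(x)\bigr)-V\bigl(T(x)\bigr)\;\ge\;V^c(x),
\end{equation*}
directly from the definition \eqref{eq:def_c_transform}.

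\textbf{Step 1: regularity of $V^c$.} Since $\cX\times\cY$ is compact, $c$ is uniformly continuous there, and $V$ is continuous on the compact set $\cY$. Hence the function $(x,y)\mapsto c(x,y)-V(y)$ is uniformly continuous. It follows that $V^c$ is continuous, since an infimum of an equicontinuous family is continuous. Therefore $V^c$ is bounded and Borel, and $\int_\cX V^c\,d\mu$ is finite. Integrating the pointwise inequality then gives $\cL(V,T)\ge \int V^c\,d\mu+\int V\,d\nu$ for every admissible $T$.

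\textbf{Step 2: existence of a minimizer.} I would show that this lower bound is attained by constructing a measurable selection $T_V(x)\in\argmin_{y\in\cY}[c(x,y)-V(y)]$.
\begin{itemize}
\item For each fixed $x$, the argmin set $\Gamma(x)$ is nonempty and compact, because a continuous function attains its minimum on the compact set $\cY$.
\item The set-valued map $\Gamma$ has a closed graph $\{(x,y): c(x,y)-V(y)=V^c(x)\}$, since both sides of the defining equation are continuous.
\item A compact-valued map with closed graph into a compact metric space is upper hemicontinuous, hence measurable. The Kuratowski--Ryll-Nardzewski selection theorem (or the measurable maximum theorem, as in Aliprantis--Border) then yields a Borel selection $T_V$.
\end{itemize}
An elementary alternative avoids the selection theorem: choose the lexicographically smallest minimizer. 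The measurability check for this choice is the main technical obstacle, which is why I would rather cite the measurable selection theorem. For this $T_V$, equality holds pointwise, so $\cL(V,T_V)$ equals the lower bound from Step 1, and $T_V$ satisfies \eqref{eq:opt_TV_def}. In particular, $\inf_T\cL(V,T)=\int V^c\,d\mu+\int V\,d\nu$.

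\textbf{Step 3: the characterization.} Note that the term $\int V\,d\nu$ does not depend on $T$. For any measurable $T$, the difference
\begin{equation*}
\cL(V,T)-\min_{T'}\cL(V,T')=\int_\cX\Bigl(c\bigl(x,T(x)\bigr)-V\bigl(T(x)\bigr)-V^c(x)\Bigr)\,d\mu(x)
\end{equation*}
is the integral of a nonnegative bounded measurable function. Consequently, $T$ is a minimizer if and only if this integral vanishes. That happens if and only if the integrand vanishes $\mu$-a.e., which is exactly \eqref{eq:recovery_constraint}.

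Throughout, I would interpret maps $T:\cX\to\cY$ as Borel measurable, so that $\cL$ is well defined. Given the compactness and continuity assumptions, only the measurable-selection step requires care; the rest is routine.
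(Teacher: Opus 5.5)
Your proposal is correct and follows the paper's argument. Both use the pointwise $c$-transform inequality, integrate it against $\mu$, and conclude that a nonnegative integrand with zero integral vanishes $\mu$-a.e., with existence coming from attainment of the infimum over the compact set $\cY$. Your version is more careful than the paper's: the paper only notes pointwise attainment and does not address continuity of $V^c$ or measurability of the selected minimizer. Its ($\Rightarrow$) direction nevertheless relies on both facts to identify the minimal value with $\int V^c\,d\mu+\int V\,d\nu$.
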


Theorem~\ref{thm:recovery_characterization} explains why spurious solutions arise. Indeed, the recovery condition~\eqref{eq:recovery_constraint} is enforced only $\mu$-almost everywhere. We will see below that under the manifold assumption on $\mu$, this condition identifies only the recovered map’s tangential component along the data manifold, leaving the normal component unconstrained.

\begin{theorem}[Tangential recovery]\label{thm:tangential_recovery}
Assume that there exists a smooth compact manifold $\cM \subset \R^d$ with $\text{dim}(\cM)=m$ such that $\mu \ll \mathcal{H}^m  \llcorner \cM $. Let $(V^*,T_{\operatorname{rec}})$ be an optimal solution of~(1). Then if the cost $c(x,y) \in C^1(\R^d \times \R^d )$ and an optimal transport map $T_{\mathrm{opt}}$ exists, then for $\mu$-a.e.\ $x$,
\begin{equation} \label{eq:tangential_recovery}
   \Pi_{T_x\mathcal{M}}\, \nabla_x c\bigl(x,T_{\operatorname{rec}}(x)\bigr)
=
\Pi_{T_x\mathcal{M}}\, \nabla_x c\bigl(x,T_{\mathrm{opt}}(x)\bigr).
\end{equation}
\end{theorem}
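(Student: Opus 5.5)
The idea is to show that both maps are minimizers, in the $y$-variable, of the same function $c(x,y)-V^*(y)$. Then a first-order condition in the $x$-variable, taken only along directions tangent to $\cM$, pins down the tangential gradients. This is the envelope-theorem heuristic for $V^{*c}$.

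\textbf{Step 1 (both maps attain the infimum).} By Theorem~\ref{thm:recovery_characterization}, since $T_{\mathrm{rec}}$ minimizes $\mathcal{L}(V^*,\cdot)$, we have for $\mu$-a.e.\ $x$
\[
c(x,T_{\mathrm{rec}}(x))-V^*(T_{\mathrm{rec}}(x))=V^{*c}(x).
\]
For $T_{\mathrm{opt}}$ we first note that $V^*$ is an optimal semi-dual potential, and that $S(\mu,\nu)=C(\mu,\nu)$. Then
\[
\int \bigl[c(x,T_{\mathrm{opt}}(x))-V^*(T_{\mathrm{opt}}(x))\bigr]\,d\mu+\int V^*\,d\nu=C(\mu,\nu)=\int V^{*c}\,d\mu+\int V^*\,d\nu ,
\]
where the first integrand uses $(T_{\mathrm{opt}})_\#\mu=\nu$. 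The integrand on the left is $\ge V^{*c}(x)$ pointwise, so equality holds $\mu$-a.e. Hence both $y_1=T_{\mathrm{rec}}(x)$ and $y_2=T_{\mathrm{opt}}(x)$ attain $\inf_y[c(x,y)-V^*(y)]$.

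\textbf{Step 2 (touching from above along $\cM$).} Fix $y_i$ as above. For every $x'$ we have $V^{*c}(x')\le c(x',y_i)-V^*(y_i)$, with equality at $x'=x$. So the $C^1$ function $x'\mapsto c(x',y_i)-V^*(y_i)$ touches $V^{*c}$ from above at $x$. Restrict to $\cM$ and use a smooth curve $\gamma\subset\cM$ with $\gamma(0)=x$ and $\gamma'(0)=\tau\in T_x\cM$.

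Suppose $V^{*c}|_{\cM}$ is differentiable at $x$. The one-sided difference quotients along $\pm\tau$ then give
\[
\nabla_{\cM}V^{*c}(x)\cdot\tau=\nabla_x c(x,y_i)\cdot\tau
\]
for $i=1,2$. Consequently $\Pi_{T_x\cM}\nabla_xc(x,y_1)=\Pi_{T_x\cM}\nabla_xc(x,y_2)$, which is \eqref{eq:tangential_recovery}.

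\textbf{Step 3 (a.e.\ differentiability; the main obstacle).} It remains to show that $V^{*c}|_{\cM}$ is differentiable at $\mathcal{H}^m$-a.e.\ point of $\cM$, and hence $\mu$-a.e., using $\mu\ll\mathcal{H}^m\llcorner\cM$.

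The cost is only $C^1$, not Lipschitz uniformly in a quantitative sense we control, so I would argue as follows. Since $\cX\times\cY$ is compact and $c\in C^1$, the family $\{c(\cdot,y)\}_{y\in\cY}$ has uniformly bounded gradients on a neighborhood of $\cX$. Therefore $V^{*c}$, an infimum of uniformly Lipschitz functions, is Lipschitz. Composed with smooth local charts of $\cM$, it becomes a Lipschitz function on an open subset of $\R^m$, and Rademacher's theorem yields differentiability Lebesgue-a.e. in chart coordinates. This transfers to $\mathcal{H}^m$-a.e.\ on $\cM$, because the charts are smooth diffeomorphisms and so preserve null sets.

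The delicate point is applying the first-order condition with a merely $C^1$ cost and no semiconcavity. Differentiability of the chart-composed function at the point, together with the touching-from-above inequality along both $\tau$ and $-\tau$, is enough to force equality of directional derivatives, so no second-order information is required. Finally, we intersect the full-measure sets from Steps 1 and 3 to conclude.
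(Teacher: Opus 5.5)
Your proposal is correct and follows essentially the same route as the paper. You show that both $T_{\mathrm{rec}}$ and $T_{\mathrm{opt}}$ attain the $c$-transform $\mu$-a.e. (via Theorem~\ref{thm:recovery_characterization} and strong duality), derive Lipschitz continuity of $V^{*c}$ and apply Rademacher on $\mathcal{M}$, and then use the tangential touching-from-above argument of Lemma~\ref{tangential_contact_point} at both contact points. The only differences are cosmetic: you apply Rademacher in local charts rather than through the geodesic distance, and you note explicitly that differentiability at the single contact point suffices, which is all the paper's lemma actually uses.
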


Here, $\mathcal{H}^m$ denotes the $m$-dimensional Hausdorff measure and
$\mathcal{H}^m\llcorner \mathcal{M}$ denotes its restriction to $\mathcal{M}$.
In addition, $T_x\mathcal{M}$ denotes the tangent space of $\mathcal{M}$ at $x$, and
$\Pi_{T_x\mathcal{M}}$ denotes the orthogonal projection onto $T_x\mathcal{M}$.
 
In the quadratic cost case, Theorem~\ref{thm:tangential_recovery} implies for $\mu$-a.e. $x$
\begin{equation}
\Pi_{T_x\mathcal{M}}\, T_{\operatorname{rec}}(x)
=
\Pi_{T_x\mathcal{M}}\, T_{\mathrm{opt}}(x)
\end{equation}
meaning that the recovered map agrees with the true optimal transport
\emph{along directions tangent to the data manifold}, while discrepancies may occur
in directions normal to $\mathcal M$.\smallskip

\noindent\textbf{Practical takeaway.}
Theorems~\ref{thm:tangential_recovery} show that when $\mu$ is supported on a lower-dimensional set, the semi-dual objective is underconstrained.
This underconstraint manifests in directions normal to the data manifold, rendering the recovery
step non-identifiable: there may exist multiple recovery maps that attain the same objective value
while disagreeing substantially as transport maps.
From a learning perspective, this implies that gradient-based optimization can converge to
distinct solutions that fit the objective equally well but behave arbitrarily off the data manifold, leading to unstable or spurious learned maps. Empirically, this gap is visible in Table~\ref{tab:tang_vs_normal}: the learned neural OT map can match the map \emph{tangentially} while incurring much larger \emph{normal} error.

We now illustrate that the normal component of recovered maps is non-identifiable with a simple example:
\begin{example}[Non-identifiability in semi-dual recovery]\label{ex:failure}
Let $\mu$ and $\nu$ be the uniform measures on
\[
\supp(\mu)=[-1,1]\times\{0\},\qquad \supp(\nu)=\{0\}\times[-1,1].
\]
For the quadratic cost $c(x,y)=\tfrac12|x-y|^2$, any measurable map
$T=(T_1,T_2)$ with $T_\#\mu=\nu$ is optimal. This constraint implies for $\mu$-a.e. $x$
\[ T_1(x)=0. \]
Moreover, \citet[Example~1]{choi2025overcoming} showed that any measurable
$T_{\mathrm{rec}}:\supp(\mu)\to\supp(\nu)$ is an optimal recovery map.
Hence the tangential component $(T_{\mathrm{rec}})_1=(T_{\mathrm{opt}})_1$ $\mu$-a.e.\ for every optimal map $T_{\mathrm{opt}}$
(tangential to $\supp(\mu)$, i.e.\ along the $x_1$ direction), while the normal component $(\cdot)_2$ may differ arbitrarily.

\end{example}

\section{Regularization Removes Spurious Solutions and Stabilizes Learning}
\label{section:regularization}
\subsection{Off-manifold regularization}\label{subsec:offmanifold_reg}

The failure mode above stems from the fact that the recovery constraint~\eqref{eq:recovery_constraint}
is enforced only $\mu$-almost everywhere, i.e., only on the data support.
A natural way to remove this ambiguity is to replace $\mu$ by a full-dimensional approximation,
so that the constraint is imposed throughout ambient space.

Concretely, we smooth $\mu$ by additive noise:
\begin{equation} \label{eq:smoothing}
    \mu_\varepsilon := \operatorname{Law}(X+\varepsilon Y),
\end{equation}
where $X\sim\mu$ and $Y$ is independent of $X$ with density $\rho\ge 0$ and $\int \rho = 1$.
Equivalently, $\mu_\varepsilon=\mu*\rho_\varepsilon$, where
$\rho_\varepsilon(z)=\varepsilon^{-d}\rho(z/\varepsilon)$.
For our map-level recovery analysis, we assume that $\mu_{\varepsilon}$ is compactly supported (equivalently, that $\rho$ is compactly supported) to enable compactness arguments
(Theorem~\ref{thm:map_level_convg} and Proposition~\ref{thm:noise_tangential}).
In contrast, the finite-sample convergence rates in Theorem~\ref{thm:semi_discrete} only require $\mathbb{E}[|Y|]<\infty$,
and thus apply immediately to Gaussian smoothing $Y\sim\mathcal{N}(0,I_d)$ as commonly used in practice.

From an optimization viewpoint, smoothing acts as an implicit regularizer.
In the unsmoothed problem, the objective constrains the recovered map only on $\supp(\mu)$, so the loss
can be insensitive to how the map (or potential) is extended in directions normal to the data manifold.
These off-manifold degrees of freedom create non-identifiability and can lead to families of
equally optimal solutions during training.
Replacing $\mu$ by $\mu_\varepsilon\ll\mathcal L^d$ expands the set of points at which the constraint \eqref{eq:recovery_constraint} is
enforced, which reduces this degeneracy and stabilizes both the learned potentials and the
recovered maps. Consistent with this interpretation, Table~\ref{tab:tang_vs_normal} shows that smoothing reduces the \emph{normal} error.

\subsection{Map consistency under regularization}\label{subsec:map_consistency}

The discussion above shows that smoothing resolves the non-identifiability of the recovery step by enforcing the recovery constraint \eqref{eq:recovery_constraint} off the data manifold. In the full-dimensional setting, this mechanism uniquely identifies the transport map. Indeed, Lemma~\ref{lem:full_recovery} shows that in the favorable full-dimensional regime, that Theorem~\ref{thm:tangential_recovery} implies
the semi-dual objective is identifiable: optimizing the objective uniquely determines the transport map.

We now study the vanishing-regularization limit $\varepsilon\downarrow 0$. 
While~\citet{choi2025overcoming} proved smoothing restores correct \emph{plan-level} 
convergence (controlling optimal plans), this does not 
establish convergence of the maps $T_\varepsilon$ themselves, nor optimality of 
any limiting map. We address this gap 
by proving \emph{map-level} guarantees: under regularity assumptions (a best-case 
scenario), the recovered maps $T_\varepsilon$ converge to the correct optimal limiting 
transport map as $\varepsilon\downarrow 0$.

\begin{theorem}[Map-level stability under regularization]\label{thm:map_level_convg}
Let $c(x,y)=\tfrac 12 |x-y|^2$. Assume that $K\subset\R^d$ be compact, and let
$\mu_\varepsilon\in\mathcal P_2(K)$ be such that $\mu_{\eps} \ll \mathcal{L}^d$  and $\mu_\varepsilon\rightharpoonup\mu$.
Fix $\nu\in\mathcal P_2(K)$.

For each $\varepsilon>0$, let $(V_\varepsilon,T_\varepsilon)$ be an optimal solution of
\eqref{eq:snot_intro} with source measure $\mu_\varepsilon$ and target measure $\nu$. Then $(V_{\eps}^{cc},T_{\eps})$ is also an optimal solution of
\eqref{eq:snot_intro} with the source measure $\mu_\varepsilon$ and target measure $\nu$.

Then, if one normalizes $V_{\eps}^{cc}$ by setting $V_{\eps}^{cc}(0)=0$, then along a subsequence, $V^{cc}_\varepsilon$ converges uniformly on $K$.

Moreover, if the limiting potential is $V_0$, then:
\begin{enumerate}[label=(\arabic*)]
\item If $V_0^c$ is differentiable $\mu$-a.e., then
\[
T_0(x):=x-\bigl(\nabla V_0^c(x)\bigr)
\]
is an optimal Monge transport map from $\mu$ to $\nu$.

\item If, in addition, each $V_\varepsilon^c$ is differentiable on $K$, then for $\mu$-a.e.\ $x\in K$
there exists a sequence $x_\varepsilon\to x$ such that
\[
T_\varepsilon(x_\varepsilon)\to T_0(x).
\]
Moreover, the pair $(V_0,T_0)$ is an optimal solution of~\eqref{eq:snot_intro} with source measure $\mu$
and target measure $\nu$.
\end{enumerate}
\end{theorem}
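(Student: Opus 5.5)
We first show that $(V_\varepsilon^{cc},T_\varepsilon)$ is again optimal. The standard facts are $V^{ccc}=V^c$ and $V^{cc}\ge V$ on $\cY$ (taking $\cY=K$). Hence the outer value is unchanged, since $\int V^{ccc}\,d\mu_\varepsilon+\int V^{cc}\,d\nu\ge \int V^c\,d\mu_\varepsilon+\int V\,d\nu$, so $V_\varepsilon^{cc}$ is also a maximizer. For the recovery step, Theorem~\ref{thm:recovery_characterization} gives $c(x,T_\varepsilon(x))-V_\varepsilon(T_\varepsilon(x))=V_\varepsilon^c(x)$ for $\mu_\varepsilon$-a.e.\ $x$. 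Since $V^{cc}_\varepsilon\ge V_\varepsilon$ and, by definition of the $c$-transform, $c(x,y)-V^{cc}_\varepsilon(y)\ge V_\varepsilon^{ccc}(x)=V^c_\varepsilon(x)$, we get
\[
V_\varepsilon^c(x)\le c(x,T_\varepsilon(x))-V^{cc}_\varepsilon(T_\varepsilon(x))\le c(x,T_\varepsilon(x))-V_\varepsilon(T_\varepsilon(x))=V^c_\varepsilon(x).
\]
So equality holds throughout, and Theorem~\ref{thm:recovery_characterization} shows $T_\varepsilon$ is an optimal recovery map for $V^{cc}_\varepsilon$.

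Next comes compactness. For the quadratic cost on compact $K$, every $c$-transform is Lipschitz with constant $\operatorname{diam}(K)+\sup_K|x|$, uniformly in $\varepsilon$. After the normalization $V^{cc}_\varepsilon(0)=0$, the family is uniformly bounded. (If $0\notin K$, we work on $\operatorname{conv}(K\cup\{0\})$, where the same Lipschitz bound holds.) Arzel\`a--Ascoli then gives a uniformly convergent subsequence $V_\varepsilon^{cc}\to V_0$. The $c$-transform is $1$-Lipschitz in sup norm, so $V_\varepsilon^c=V_\varepsilon^{ccc}\to V_0^c$ uniformly as well.

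For (1), we pass to the limit in duality. Weak convergence $\mu_\varepsilon\rightharpoonup\mu$, uniform convergence, and weak continuity of the Kantorovich cost on compact sets give
\[
\int V_0^c\,d\mu+\int V_0\,d\nu=\lim_{\varepsilon\to0} S(\mu_\varepsilon,\nu)=C(\mu,\nu).
\]
So $V_0$ is an optimal semi-dual potential. We take an optimal plan $\pi$, which exists and is supported where $V_0^c(x)+V_0(y)=c(x,y)$. At any $\mu$-a.e.\ point where $V_0^c$ is differentiable, the function $x'\mapsto c(x',y)-V_0^c(x')$ is minimized at $x'=x$ for each $y$ in the fiber. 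This forces $x-y=\nabla V_0^c(x)$, so $y=T_0(x)$ is uniquely determined. The plan is therefore $(\Id,T_0)_\#\mu$, which gives $T_{0\#}\mu=\nu$ and optimality of $T_0$.

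For (2), the optimality of $V_\varepsilon^{cc}$, $T_\varepsilon$ and differentiability of $V^c_\varepsilon$ yield, by the same first-order argument, $T_\varepsilon(x)=x-\nabla V^c_\varepsilon(x)$ for $\mu_\varepsilon$-a.e.\ $x$. We expect this step to be the main obstacle: gradients of uniformly convergent semiconcave functions converge only in a graphical or superdifferential sense, and $T_\varepsilon$ is controlled only on $\supp\mu_\varepsilon$. The plan is to use that $\tfrac12|x|^2-V^c_\varepsilon$ is convex, hence lies in the class of convex functions converging uniformly. By Attouch's theorem, or the standard closedness of subdifferentials, for each $x$ where the limit convex function is differentiable and every $x_\varepsilon\to x$, every limit point of $\nabla V^c_\varepsilon(x_\varepsilon)$ equals $\nabla V^c_0(x)$. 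We choose $x_\varepsilon\in\supp\mu_\varepsilon$ in the full-measure set where the identity for $T_\varepsilon$ holds. Weak convergence guarantees such points exist near any $x\in\supp\mu$. Boundedness in $K$ then gives $T_\varepsilon(x_\varepsilon)\to x-\nabla V_0^c(x)=T_0(x)$. Finally, $(V_0,T_0)$ is optimal for~\eqref{eq:snot_intro}: $V_0$ attains the semi-dual supremum by the display above, and $T_0$ satisfies~\eqref{eq:recovery_constraint} $\mu$-a.e., because $(x,T_0(x))$ lies in the contact set of $\pi$. Theorem~\ref{thm:recovery_characterization} then applies.
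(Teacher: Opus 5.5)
Your proposal is correct. For the double $c$-transform, the Arzel\`a--Ascoli step with the sup-norm $1$-Lipschitz property of the $c$-transform, and part (1), it follows the paper. There the paper passes to the limit in the duality identity and reads off $y=x-\nabla V_0^c(x)$ on the contact set; your first-order condition for $x'\mapsto c(x',y)-V_0^c(x')$ is the paper's statement $y\in\partial\phi_0(x)$ with $\phi_0=\tfrac12|x|^2-V_0^c$. Where you differ, you are more complete than the paper, in two places.

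\emph{Step 1.} The paper only shows that $V_\varepsilon^{cc}$ is an outer maximizer. Your chain $V_\varepsilon^c(x)\le c(x,T_\varepsilon(x))-V_\varepsilon^{cc}(T_\varepsilon(x))\le c(x,T_\varepsilon(x))-V_\varepsilon(T_\varepsilon(x))=V_\varepsilon^c(x)$ is what actually shows that $T_\varepsilon$ remains an optimal recovery map for $V_\varepsilon^{cc}$.

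\emph{Part (2).} The paper invokes Attouch's theorem (graphical convergence $\partial\phi_{\varepsilon_k}\to\partial\phi_0$) and uses its inner-limit half to produce some $x_k\to x$ with $\nabla\phi_{\varepsilon_k}(x_k)\to\nabla\phi_0(x)$. It then writes $T_{\varepsilon_k}(x_k)=x_k-\nabla V^c_{\varepsilon_k}(x_k)$. That identity is only known $\mu_{\varepsilon_k}$-a.e., and nothing places those $x_k$ in the good set. You use the outer (closedness) half instead. Because $\phi_0$ is differentiable at $x$, you get $\nabla V^c_\varepsilon(x_\varepsilon)\to\nabla V_0^c(x)$ along \emph{every} sequence $x_\varepsilon\to x$. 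So you are free to pick $x_\varepsilon$ in the full-$\mu_\varepsilon$-measure set where $T_\varepsilon=\Id-\nabla V^c_\varepsilon$. The portmanteau bound $\liminf_\varepsilon\mu_\varepsilon(B(x,r))\ge\mu(B(x,r))>0$ for $x\in\supp\mu$, with a diagonal choice of radii, makes this possible. This needs only elementary closedness of subdifferentials under locally uniform convergence, and it closes the a.e.\ gap. The paper's graphical-convergence packaging is shorter and is reused in Lemma~\ref{lem:graphical_convg} and Proposition~\ref{thm:noise_tangential}.

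One technicality is shared by both arguments. When $x\in\partial K$, concluding that the limiting subgradient lies in $\partial\phi_0(x)$ requires the subgradient inequality for all $z$ near $x$, not just $z\in K$. The fix is to regard $V^c_\varepsilon=\inf_{y\in K}\{c(\cdot,y)-V^{cc}_\varepsilon(y)\}$ on all of $\R^d$. The same $1$-Lipschitz estimate then gives locally uniform convergence on a neighborhood of $K$.
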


\begin{remark} \label{rmk_c_transform}
Assume that $K$ is connected, $\mathcal{L}^d(\partial K)=0$, $V_\varepsilon$ is Lipschitz continuous, $\nu\ll\mathcal{L}^d$, and $\nu \in \mathcal{P}(K)$.
Then we have that $\nabla V_{\eps} = \nabla V_{\eps}^{cc}$ for Lebesgue a.e. $x$. In particular, $V_{\eps}$ and $ V_{\eps}^{cc}$ differ by a constant. See Section~\ref{proof_remark}.
\end{remark}

Without the differentiability assumption, (i) even if a limiting map exists, it may fail to even be a transport map i.e.
$(T_0)_{\#} \mu \neq \nu$, and (ii) the unsmoothed problem may admit no optimal transport Monge maps (see Example~\ref{maps_converge_wrongly}).

Theorem~\ref{thm:map_level_convg} characterizes a best-case regime in which smoothing yields full recovery of the optimal transport map as the noise level vanishes. We now prove an unconditional guarantee: even without such regularity, smoothing still preserves the correct on-manifold transport signal, in the sense that the tangential component along the data manifold can always be recovered.

\begin{proposition}[Tangential recovery on manifolds as noise vanishes]\label{thm:noise_tangential}
Using the setting and notation of Lemma~\ref{lem:graphical_convg}, assume in addition that there exists a smooth
compact manifold $\cM\subset\R^d$ of dimension $m$ such that $\mu \ll \mathcal H^m \llcorner \mathcal M$. Let $V_0$ denote the limiting potential. Then $V_0$ is optimal in the semi-dual sense, i.e.\ it maximizes~\eqref{eq:opt_pair_V}.

Moreover, suppose an optimal transport map $T_{\operatorname{opt}}$ exists between $\mu$ and $\nu$.
Then for $\mu$-a.e.\ $x\in\cM$,
\[
\nabla_{\cM} V_0^c(x)
=
\Pi_{T_x\cM}\,\nabla_x c\bigl(x,T_{\operatorname{opt}}(x)\bigr).
\]
\end{proposition}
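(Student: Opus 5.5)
We split the proof into two steps: first show that the limiting potential $V_0$ is semi-dual optimal, then feed that into Theorem~\ref{thm:tangential_recovery} to read off the tangential gradient.

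\textbf{Step 1: optimality of $V_0$.} The setting of Lemma~\ref{lem:graphical_convg} is that of Theorem~\ref{thm:map_level_convg}: quadratic cost, compact $K$, $\mu_\varepsilon\rightharpoonup\mu$, and normalized potentials $V_\varepsilon^{cc}\to V_0$ uniformly on $K$ along a subsequence. On the compact set, the $c$-transforms $V_\varepsilon^{c}=V_\varepsilon^{ccc}$ are uniformly Lipschitz and uniformly bounded. Since $\varphi\mapsto\varphi^c$ is $1$-Lipschitz in sup norm, we also get $V_\varepsilon^c\to V_0^c$ uniformly on $K$. Combining uniform convergence with weak convergence of $\mu_\varepsilon$ gives
\[
\int V_\varepsilon^c\,d\mu_\varepsilon+\int V_\varepsilon^{cc}\,d\nu\;\to\;\int V_0^c\,d\mu+\int V_0\,d\nu .
\]
The left-hand side equals $S(\mu_\varepsilon,\nu)=C(\mu_\varepsilon,\nu)$ by optimality and Kantorovich duality. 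Continuity of $C(\cdot,\nu)$ under weak convergence on compact sets (stability of $W_2$) gives $C(\mu_\varepsilon,\nu)\to C(\mu,\nu)=S(\mu,\nu)$. Hence $V_0$ attains the supremum in \eqref{eq:opt_pair_V}.

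\textbf{Step 2: a recovery map realizing $T_{\rm opt}$.} Fix $V_0$ and apply Theorem~\ref{thm:recovery_characterization} to obtain an optimal recovery map $T_{\rm rec}$, so that $(V_0,T_{\rm rec})$ is an optimal pair. Theorem~\ref{thm:tangential_recovery} then gives
\[
\Pi_{T_x\cM}\nabla_x c(x,T_{\rm rec}(x))=\Pi_{T_x\cM}\nabla_x c(x,T_{\rm opt}(x))
\]
for $\mu$-a.e.\ $x$. It remains to identify the left side with $\nabla_\cM V_0^c(x)$.

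\textbf{Step 3: differentiating the $c$-transform along $\cM$.} The function $V_0^c$ is semiconcave, hence Lipschitz. Its restriction to the smooth manifold $\cM$ is Lipschitz, so by Rademacher's theorem on manifolds it is tangentially differentiable $\mathcal H^m$-a.e., hence $\mu$-a.e. because $\mu\ll\mathcal H^m\llcorner\cM$. Now take $x$ at which both \eqref{eq:recovery_constraint} holds and $V_0^c|_\cM$ is differentiable. For $z\in\cM$ we have
\[
V_0^c(z)\le c(z,T_{\rm rec}(x))-V_0(T_{\rm rec}(x)),
\]
with equality at $z=x$. So $z\mapsto c(z,T_{\rm rec}(x))-V_0^c(z)$ restricted to $\cM$ attains its minimum at $x$. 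Using $c\in C^1$ and taking derivatives along any $C^1$ curve in $\cM$ through $x$, in both directions, yields
\[
\nabla_\cM V_0^c(x)=\Pi_{T_x\cM}\nabla_x c(x,T_{\rm rec}(x)).
\]
Combining this with Step 2 proves the claim.

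\textbf{Main obstacle.} The delicate point is Step 1. There we need uniform convergence of the $c$-transforms themselves, not just of $V_\varepsilon^{cc}$, together with passage to the limit in $\int V_\varepsilon^c\,d\mu_\varepsilon$ as both integrand and measure change. This is handled by equi-Lipschitz bounds on $K$ for the quadratic cost, together with a check that the normalization $V_\varepsilon^{cc}(0)=0$ leaves the objective unchanged, which holds because a constant shift in $V$ cancels between the two integrals. A minor additional care is that $T_{\rm opt}$ must be optimal for the unsmoothed problem, which is assumed. The remainder is the routine envelope argument of Step 3.
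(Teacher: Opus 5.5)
Your proof is correct, but it reaches the tangential identity by a longer route than the paper. Step~1 is the paper's argument: pass to the limit in $\int V_\varepsilon^c\,d\mu_\varepsilon+\int V_\varepsilon^{cc}\,d\nu=C(\mu_\varepsilon,\nu)$ and use stability of the optimal cost. Step~3 uses the same Rademacher-on-$\cM$ plus contact-point mechanism as the paper.

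The difference is where you differentiate. The paper never introduces a recovery map. From Step~1 and optimality of $T_{\operatorname{opt}}$ it gets complementary slackness, $V_0^c(x)+V_0(T_{\operatorname{opt}}(x))=c(x,T_{\operatorname{opt}}(x))$ for $\mu$-a.e.\ $x$: the slack is nonnegative and integrates to zero against $(\Id,T_{\operatorname{opt}})_\#\mu$. It then applies Lemma~\ref{tangential_contact_point} once, with $f(z)=c(z,T_{\operatorname{opt}}(x_0))-V_0(T_{\operatorname{opt}}(x_0))$.

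You instead select a recovery map $T_{\mathrm{rec}}$ for $V_0$ via Theorem~\ref{thm:recovery_characterization}, invoke Theorem~\ref{thm:tangential_recovery} as a black box, and rerun the contact argument at $T_{\mathrm{rec}}(x)$. The proof of Theorem~\ref{thm:tangential_recovery} already establishes $\nabla_\cM (V^*)^c(x)=\Pi_{T_x\cM}\nabla_x c(x,T_{\operatorname{opt}}(x))$ as an intermediate step, and with $V^*=V_0$ that is exactly the claim. So passing through $T_{\mathrm{rec}}$ and back is a detour. It also leans on the existence of a measurable minimizer selection, which the direct slackness argument does not need.

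What your route buys is a stronger combined statement at no extra cost. You get that $\nabla_\cM V_0^c(x)=\Pi_{T_x\cM}\nabla_x c(x,T_{\mathrm{rec}}(x))$ for every optimal recovery map of $V_0$. This is precisely the consequence the paper draws in the remark after the proposition, by combining it with Theorem~\ref{thm:tangential_recovery}.
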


In the quadratic cost case, this implies
\[ \nabla_{\cM} \left( \frac{1}{2} |x|^2-V^c_0 \right) = \Pi_{T_x \cM} (T_{\operatorname{opt}}(x)) \] By combining this with Theorem~\ref{thm:tangential_recovery} we see that for the quadratic cost case that  $\nabla_{\cM} \left( \frac{1}{2} |x|^2-V^c_0 \right)$ is the tangential component of any recovered map.

\noindent \textbf{Practical takeaway.}
Adding noise during training cannot erase the identifiable on-manifold transport signal: as $\varepsilon\downarrow 0$,
the learned potentials recover the correct tangential component of the optimal transport map.
Moreover, in favorable regularity regimes, the same smoothing scheme yields full recovery of the optimal transport map.

\section{Bias-Stability Tradeoff for Noise Scheduling}\label{sec:noise_schedule}
\subsection{Plan stability and quantitative rates}\label{subsec:plan_stability}

We next establish quantitative \emph{plan stability} bounds, controlling how optimal OT couplings change under perturbations of the input distribution.
Unlike the weak convergence guarantees of \citet{choi2025overcoming} under smoothing, we provide explicit \emph{finite-sample} rates for learning couplings when the target $\nu$ is sufficiently regular.
In particular, our bounds make explicit the \emph{bias-estimation tradeoff} induced by smoothing: the noise level $\varepsilon$ introduces approximation bias, while the sample size $N$ governs statistical error. 
Throughout, we specialize to the quadratic cost $c(x,y)=\tfrac12|x-y|^2$. \smallskip

For $\rho_1,\rho_2\in\mathcal P_2(\R^d)$, let $\pi(\rho_1,\rho_2)$ denote an optimal coupling between $\rho_1$ and $\rho_2$.
We assume regularity of the target $\nu$, but allow the source $\mu$ to be arbitrary (possibly supported on a lower-dimensional set), so Monge maps need not exist and we work directly with plans. \smallskip

In practice, $\mu$ is observed only through samples and we learn from noisy samples. Our goal is to bound the learned plan error by decomposing it into (i) a smoothing term that vanishes as $\varepsilon\downarrow 0$ (bias) and (ii) a sampling term that vanishes as $N\to\infty$ (estimation).
Concretely, we consider
\[
\mu^{\eps}_N := \frac{1}{N} \sum_{i=1}^{N} \delta_{X_i + \eps Y_i},
\]
where $X_1,\dots,X_N \sim \mu$ and $Y_1,\dots,Y_N \sim Y$ are i.i.d.\ and independent of each other.

\begin{theorem}[Finite-sample stability with smoothed empirical sources]\label{thm:semi_discrete}
Assume the target measure $\nu$ satisfies the regularity conditions of Theorem~\ref{thm:quant_general}. Let $\mu$ be compactly supported and suppose $\supp(\mu)\subset \mathcal M$ for a compact $m$-dimensional smooth manifold
$\mathcal M\subset\R^d$. Assume that $Y$ satisfies $\mathbb{E}[|Y|]< \infty$, then for $\mu_\varepsilon:=\operatorname{Law}(X+\varepsilon Y)$
\[
\mathbb E\,W_2\!\big(\pi(\mu,\nu),\pi(\mu_N^\varepsilon,\nu)\big) \lesssim (\mathbb{E} W_1(\mu,\mu^{\eps}_N))^{\alpha(p)} 
\] \[ \lesssim
(\mathbb{E}[|Y|] \cdot \varepsilon)^{\alpha(p)}  
\;+\;
\begin{cases}
N^{-\alpha(p)/2}, & m=1,\\[2pt]
(\log N/N)^{\alpha(p)/2}, & m=2,\\[2pt]
N^{-\alpha(p)/m}, & m\ge 3.
\end{cases}
\]
Where $p\ge \max(4,d)$ and  $\alpha(p):=\frac{p}{6p+16d}$.
\end{theorem}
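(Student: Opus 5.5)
The plan is to treat Theorem~\ref{thm:quant_general} as a black-box plan-stability estimate. Under its regularity conditions on $\nu$, for any two source measures $\rho_1,\rho_2$ with bounded moments it gives
\[
W_2\big(\pi(\rho_1,\nu),\pi(\rho_2,\nu)\big)\lesssim W_1(\rho_1,\rho_2)^{\alpha(p)} .
\]
Applying this pathwise with $\rho_1=\mu$ and $\rho_2=\mu_N^\varepsilon$, taking expectations, and using Jensen's inequality (valid because $\alpha(p)\le 1/6<1$, so $t\mapsto t^{\alpha(p)}$ is concave) yields the first inequality
\[
\mathbb E\,W_2\big(\pi(\mu,\nu),\pi(\mu_N^\varepsilon,\nu)\big)\lesssim \big(\mathbb E\,W_1(\mu,\mu_N^\varepsilon)\big)^{\alpha(p)}.
\]
One point needs care: the constants in Theorem~\ref{thm:quant_general} may depend on moments or supports of the sources. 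Since $\mu_N^\varepsilon$ is not compactly supported when $Y$ is Gaussian, I would check that only a first-moment type bound is needed, or else absorb this dependence, using $\mathbb E|Y|<\infty$ and $\varepsilon\le 1$.

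For the second inequality, I would split $\mathbb E\,W_1(\mu,\mu_N^\varepsilon)$ with the triangle inequality through the clean empirical measure $\mu_N:=\frac1N\sum_i\delta_{X_i}$:
\[
W_1(\mu,\mu_N^\varepsilon)\le W_1(\mu,\mu_N)+W_1(\mu_N,\mu_N^\varepsilon).
\]
The second term is bounded by the coupling that pairs $X_i$ with $X_i+\varepsilon Y_i$, giving $W_1(\mu_N,\mu_N^\varepsilon)\le \frac{\varepsilon}{N}\sum_i|Y_i|$. Its expectation is $\varepsilon\,\mathbb E|Y|$, which is the bias term. The first term is the intrinsic-dimension empirical rate. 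Since $\supp(\mu)\subset\mathcal M$ with $\mathcal M$ a compact smooth $m$-manifold, $\mathcal M$ admits $\delta$-covers of cardinality $\lesssim\delta^{-m}$. The standard dyadic-partition bound (as in \citet{block2021intrinsic}, or the Dudley/Fournier--Guillin type argument) then gives
\[
\mathbb E\,W_1(\mu,\mu_N)\lesssim N^{-1/2},\quad (\log N/N)^{1/2},\quad N^{-1/m}
\]
for $m=1$, $m=2$, and $m\ge3$ respectively.

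Finally, subadditivity $(a+b)^{\alpha}\le a^{\alpha}+b^{\alpha}$ for $\alpha\in(0,1]$ splits the power across the two terms and gives the stated bound.

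The main obstacle is ensuring that Theorem~\ref{thm:quant_general} applies uniformly to the random, possibly non-compactly supported source $\mu_N^\varepsilon$ with a constant independent of $N$ and $\varepsilon$. If that theorem requires compact support, I would first restrict to $\varepsilon\le1$ and handle the tails of $Y$ by truncation, using $\mathbb E|Y|<\infty$. Another option is to note that the stability estimate only involves $\nu$'s regularity and a first moment of the source. The manifold covering-number step is routine once the covering bound $N(\mathcal M,\delta)\lesssim\delta^{-m}$ is cited.
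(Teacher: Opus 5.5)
Your route is the paper's: apply Theorem~\ref{thm:quant_general} to $(\mu,\mu_N^\varepsilon)$, split $W_1(\mu,\mu_N^\varepsilon)$ through $\mu_N$, cite \citet{block2021intrinsic} for $\mathbb E\,W_1(\mu,\mu_N)$, and bound $W_1(\mu_N,\mu_N^\varepsilon)\le\frac{\varepsilon}{N}\sum_i|Y_i|$. Your Jensen and subadditivity steps only make explicit what the paper leaves implicit, and your control of $\mathbb E\,W_1(\mu,\mu_N^\varepsilon)$ is fine under $\mathbb E|Y|<\infty$. However, the obstacle you flag is a genuine gap, one the paper's proof also passes over silently, and neither of your fixes closes it. Theorem~\ref{thm:quant_general} needs $\int|x|^p\,d\mu_N^\varepsilon\le M$ with $p\ge\max(4,d)$, and its constant depends on $M$. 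The estimate of \citet{delalande2023quantitative} rests on this $p$-th moment control, which is why the exponent depends on $p$, so it does not reduce to a first-moment condition. Moreover, $\int|x|^p\,d\mu_N^\varepsilon=\frac1N\sum_i|X_i+\varepsilon Y_i|^p$ is random and not controlled by $\mathbb E|Y|$. Truncation cannot rescue this, because under $\mathbb E|Y|<\infty$ alone the claim is false. To see this, take $Y$ with $\mathbb E|Y|<\infty=\mathbb E|Y|^2$ and fix $\varepsilon>0$. The projection argument of Theorem~\ref{thm:lower_bounds} gives $W_2(\pi(\mu,\nu),\pi(\mu_N^\varepsilon,\nu))\ge W_2(\mu,\mu_N^\varepsilon)$. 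Since each $X_i\in\supp(\mu)$, the strong law for nonnegative variables of infinite mean gives
\[
W_2^2(\mu,\mu_N^\varepsilon)\ge\int\operatorname{dist}\bigl(z,\supp(\mu)\bigr)^2\,d\mu_N^\varepsilon(z)\ge\frac1N\sum_{i=1}^N\bigl(\varepsilon|Y_i|-\operatorname{diam}\supp(\mu)\bigr)_+^2\longrightarrow\infty\quad\text{a.s.}
\]
By Fatou, $\mathbb E\,W_2(\pi(\mu,\nu),\pi(\mu_N^\varepsilon,\nu))\to\infty$ as $N\to\infty$. Meanwhile $\mathbb E\,W_1(\mu,\mu_N^\varepsilon)\le\operatorname{diam}\supp(\mu)+\varepsilon\,\mathbb E|Y|$ stays bounded. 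So no $N$-independent constant can make the first inequality, or the final bound on $\mathbb E\,W_2$, true.

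What is missing is a higher-moment hypothesis on the noise together with a good/bad-event split. For example, assume $\mathbb E|Y|^{2p}<\infty$ (true for Gaussian $Y$), take $\varepsilon\le 1$, and set $A:=\{\int|x|^p\,d\mu_N^\varepsilon\le 1+\mathbb E|X+\varepsilon Y|^p\}$. On $A$, Theorem~\ref{thm:quant_general} applies with a deterministic $M$, and your Jensen step gives $\mathbb E[W_2(\pi(\mu,\nu),\pi(\mu_N^\varepsilon,\nu))\mathbf 1_A]\lesssim(\mathbb E\,W_1(\mu,\mu_N^\varepsilon))^{\alpha(p)}$. On $A^c$, the coupling from the proof of Theorem~\ref{thm:quant_general} gives the crude bound $W_2(\pi(\mu,\nu),\pi(\mu_N^\varepsilon,\nu))\le(\int|x|^2\,d\mu)^{1/2}+(\int|x|^2\,d\mu_N^\varepsilon)^{1/2}$. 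Combining Cauchy--Schwarz with Chebyshev's bound $\mathbb P(A^c)\le\mathbb E|X+\varepsilon Y|^{2p}/N$ makes this contribution $O(N^{-1/2})$. Since $\alpha(p)\le 1/6$, that extra term is dominated by the sampling term of the final display. With this repair your argument yields the final bound, and the middle inequality holds up to an additive $O(N^{-1/2})$.
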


\begin{remark}\label{rem:alpha_limit}
As $p\to\infty$, $\alpha(p)\to 1/6$. Larger stability exponents (up to $1/2$) are known under much stronger assumptions, e.g.\ Lipschitz regularity of the optimal transport map; see \citet{gigli2011holder,merigot2020quantitative}.
Such assumptions are typically violated in learning, especially when the source is supported on a lower-dimensional manifold.
\end{remark}

In Theorem~\ref{thm:lower_bounds} we obtain unconditional lower bounds on learning the optimal plan.

\paragraph{Takeaway.}
Theorem~\ref{thm:semi_discrete} and Theorem~\ref{thm:lower_bounds} show that learning OT plans from smoothed samples is fundamentally bottlenecked by estimating the smoothed source in Wasserstein distance:
the upper bound transfers $\mathbb E\,W_1(\mu_\varepsilon,\mu_N^\varepsilon)$ to
$\mathbb E\,W_2\!\big(\pi(\mu,\nu),\pi(\mu_N^\varepsilon,\nu)\big)$, while the lower bound shows this dependence is unavoidable. \smallskip

For fixed $N$, decreasing $\varepsilon$ helps only until a statistical floor is reached, after which further annealing does not change the asymptotic error and can worsen conditioning (see Section~\ref{sec:noise_schedule}).
Moreover, under the manifold assumption, $\mathbb E\,W_1(\mu_\varepsilon,\mu_N^\varepsilon)$ scales with the intrinsic dimension $m$ (not the ambient $d$), so plan learning can avoid the curse of dimensionality under the common manifold assumption.

\begin{figure}[t]
    \centering
    \includegraphics[width=0.4\textwidth]{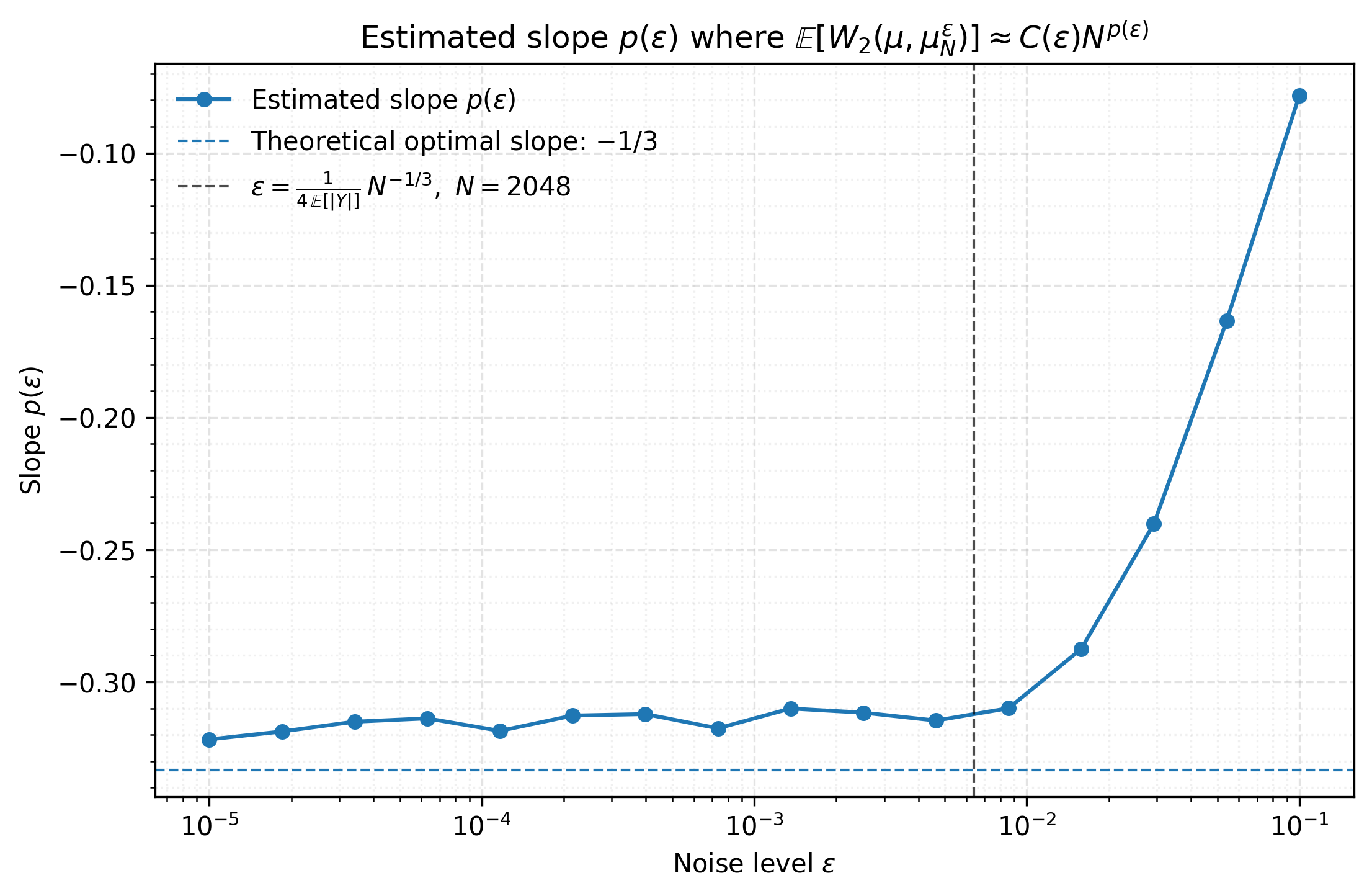}
    \caption{\textbf{Estimated slope in $\mathbb{E}[W_2(\mu, \mu_N^\varepsilon)] \approx C(\varepsilon) N^{\rho(\varepsilon)}$.} We set $\mu = \mathrm{Unif}([-1,1]^3) \otimes (\delta_0)^7 \subset \mathbb{R}^{10}$ ($d=10, m=3$). For $\varepsilon \leq \varepsilon_{\mathrm{stat}} \approx 10^{-2}$, the slope matches the theoretical optimal rate $\rho = -1/3$, confirming that OT plan learning avoids the curse of dimensionality under manifold structure. For $\varepsilon > \varepsilon_{\mathrm{stat}}$, the rate degrades due to finite-sample estimation error. Minor deviations from $-1/3$ are attributed to discretizing $\mu$ in the discrete OT solver.}
    \label{fig:eps_vs_slopes}
    \vspace{-8pt}
\end{figure}

The results in Figure~\ref{fig:eps_vs_slopes} confirm our theoretical findings. Theorem ~\ref{thm:semi_discrete} (and more explicitly \eqref{desired_bound_123}) implies that when the noise level $\eps \leq \eps_{\textnormal{stat}}$, the source measure error decays at a rate of $N^{-1/m}$ (independent of ambient dimension $d$). When $\eps \leq \eps_{\textnormal{stat}}$, Figure~\ref{fig:eps_vs_slopes} demonstrates this optimal rate. While when the noise level $\eps > \eps_{\textnormal{stat}}$, the slope degrades because the smoothing bias term dominates.

\subsection{Stability: small noise can degrade conditioning}\label{subsec:stability}

Regularizing the source distribution removes the off-manifold ambiguity from Section~\ref{sec:semidual_failure}, but introduces a tradeoff:
larger noise improves identifiability and stabilizes optimization, while smaller noise reduces regularization bias. As observed in \citet{choi2025overcoming}, stopping the annealing at a small constant $\varepsilon_{\min}>0$ can be more stable than continuing to $\varepsilon=0$. We justify this via (i) an \emph{optimization} effect—overly small noise induces sharp maps and an ill-conditioned loss landscape, and
(ii) a \emph{statistical} effect, there is a noise floor $\varepsilon_{\mathrm{stat}}(N)$ below which finite-sample estimation error dominates. Accordingly, we set the terminal noise level to $\varepsilon_K=\varepsilon_{\mathrm{stat}}(N)$ from \eqref{eq:estat}.

We now explain why training should stop at a nonzero terminal noise level $\varepsilon_K$.
Very small noise can induce \emph{sharp} (high-Jacobian) transport maps, which in turn create large curvature in the objective and make gradient-based training increasingly sensitive. \smallskip

As a first step, Lemma~\ref{lem:no_uniform_Lip} shows that if the unsmoothed problem admits no optimal Monge map, then the smoothed Monge maps $(T_\varepsilon)_{\varepsilon>0}$ satisfies
\[
\limsup_{\eps \rightarrow 0 } \|\nabla T_\varepsilon\|_{L^\infty(B_{R_0})}=+\infty
\] for some $R_0>0$. This is typical for manifold-supported data: smoothing restores existence but can induce sharp off-manifold extensions.

To illustrate Lemma~\ref{lem:no_uniform_Lip}, Example~\ref{ex:1d_stability_proxy} gives a 1D toy case where smoothing regularizes recovery: as $\varepsilon\downarrow 0$, $T_\varepsilon$ becomes steep, while larger $\varepsilon$ smooths the map and reduces $\|\nabla T_\varepsilon\|$. Thus, larger $\varepsilon$ can improve conditioning at the cost of bias.

To connect with training, we consider the reduced objective.
\[
\mathcal J(\theta):=\inf_T \mathcal L(V_\theta,T).
\]
Even if the infimum is attained exactly, the curvature of $\mathcal J$ in $\theta$ depends on derivatives of the induced map $T_\theta$.

\begin{theorem}[Curvature amplification in the reduced objective]\label{lem:theta_derivatives1}
Under mild regularity and uniqueness assumptions, the Hessian $\nabla^2_{\theta\theta}\mathcal J(\theta)$ contains the amplification factor
\[ \int_{\cX} G_{\theta}(x)^{\top} \nabla_x T_{\theta}(x) G_{\theta}(x) d\mu(x), \] where $G_{\theta}(x) = \nabla_x \nabla_{\theta} V_{\theta}(T_{\theta}(x))$.
\end{theorem}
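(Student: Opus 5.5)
We compute the first and second derivatives of the reduced objective using an envelope-theorem argument, and then differentiate the optimality condition for $T_\theta$ implicitly. The standing "mild regularity and uniqueness" assumptions will be that $V_\theta(y)$ is $C^2$ jointly in $(\theta,y)$. We also assume that for $\mu$-a.e.\ $x$ the minimizer $T_\theta(x)$ of $y\mapsto c(x,y)-V_\theta(y)$ is unique, interior, and nondegenerate, i.e.\ $H_\theta(x):=\nabla_y^2 c(x,T_\theta(x))-\nabla_y^2 V_\theta(T_\theta(x))\succ 0$. Finally, all of these quantities are bounded uniformly enough to differentiate under the integral.

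By Theorem~\ref{thm:recovery_characterization}, for $\mu$-a.e.\ $x$ the recovered map satisfies $c(x,T_\theta(x))-V_\theta(T_\theta(x))=V_\theta^c(x)$. Hence
\[
\mathcal J(\theta)=\int_\cX V_\theta^c\,d\mu+\int_\cY V_\theta\,d\nu .
\]
\textbf{Step 1 (gradient).} The first-order condition $\nabla_y c(x,T_\theta(x))=\nabla_y V_\theta(T_\theta(x))$ means the $\theta$-dependence through $T_\theta$ drops out (Danskin/envelope theorem). This gives
\[
\nabla_\theta \mathcal J(\theta)=-\int_\cX \nabla_\theta V_\theta(T_\theta(x))\,d\mu(x)+\int_\cY\nabla_\theta V_\theta\,d\nu .
\]

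\textbf{Step 2 (Hessian).} We differentiate Step 1 once more in $\theta$. The chain rule gives
\[
\nabla^2_{\theta\theta}\mathcal J=-\int_\cX \Bigl[\nabla^2_{\theta\theta}V_\theta(T_\theta(x))+\nabla_y\nabla_\theta V_\theta(T_\theta(x))\,\partial_\theta T_\theta(x)\Bigr]d\mu+\int_\cY \nabla^2_{\theta\theta}V_\theta\,d\nu .
\]
The first and last terms are benign. All the sensitivity to the map sits in $\partial_\theta T_\theta$.

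\textbf{Step 3 (implicit differentiation).} We differentiate the first-order condition $F(\theta,x,y):=\nabla_y c(x,y)-\nabla_yV_\theta(y)=0$ at $y=T_\theta(x)$. In $\theta$ this gives $\partial_\theta T_\theta(x)=H_\theta(x)^{-1}\nabla_y\nabla_\theta V_\theta(T_\theta(x))$. In $x$ it gives $\nabla_x T_\theta(x)=-H_\theta(x)^{-1}\nabla_x\nabla_y c(x,T_\theta(x))$.

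For the quadratic cost we have $\nabla_x\nabla_y c=-I$, so $H_\theta(x)^{-1}=\nabla_x T_\theta(x)$. Substituting this into $\partial_\theta T_\theta$ identifies the $\mu$-integral term in Step 2 with
\[
\int_\cX G_\theta(x)^\top\,\nabla_xT_\theta(x)\,G_\theta(x)\,d\mu(x),
\]
where $G_\theta$ is the mixed derivative $\nabla_y\nabla_\theta V_\theta$ evaluated at $T_\theta(x)$. Up to the sign convention and the orientation of the mixed-derivative matrix, this is the stated factor. For a general cost, $\nabla_xT_\theta$ appears multiplied by the invertible factor $(\nabla_x\nabla_y c)^{-1}$, and we will note that this factor is uniformly bounded and so does not affect the amplification claim. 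The conclusion is that $\nabla^2_{\theta\theta}\mathcal J$ equals bounded terms plus this Jacobian-weighted term. Combined with Lemma~\ref{lem:no_uniform_Lip}, the Hessian can blow up whenever $\|\nabla T_\theta\|$ does.

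\textbf{Main obstacle.} The hardest step is justifying Steps 1–3 rigorously. This requires differentiability of $\theta\mapsto T_\theta(x)$ via the implicit function theorem, which in turn needs $H_\theta(x)$ to be invertible and the minimizer to be unique. It also requires interchanging differentiation and integration. I would first state these as the explicit hypotheses, then use dominated convergence with a uniform bound on $H_\theta^{-1}$ on compact parameter sets.
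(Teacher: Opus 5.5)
Your proposal is correct and follows essentially the same route as the paper's proof (via Lemma~\ref{lem:theta_derivatives}). The paper uses the same envelope argument for $\nabla_\theta\mathcal J$, the chain rule for the Hessian, and implicit differentiation of the first-order condition $T_\theta(x)-x-\nabla V_\theta(T_\theta(x))=0$ in $x$ and in $\theta$. This gives $\nabla_xT_\theta=(I-\nabla^2V_\theta(T_\theta))^{-1}$ and $\partial_\theta T_\theta=\nabla_xT_\theta\,G_\theta$, so the term enters as $-\int G_\theta^\top\nabla_xT_\theta\,G_\theta\,d\mu$, with exactly your sign and orientation and no ambiguity. The paper treats only the quadratic cost, so your general-cost aside is unnecessary; if you keep it, note that invertibility and uniform boundedness of $(\nabla_x\nabla_y c)^{-1}$ is an extra nondegeneracy hypothesis, not something automatic.
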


An explicit formula for $\nabla^2_{\theta\theta}\mathcal J(\theta)$ is given in Lemma~\ref{lem:theta_derivatives}.

\paragraph{Practical takeaway.}
Lemma~\ref{lem:no_uniform_Lip} and Theorem~\ref{lem:theta_derivatives1} suggest a simple mechanism:
as $\varepsilon$ decreases, the transport map can become sharp (large $\|\nabla T_\varepsilon\|$), which can in turn make the loss landscape more ill-conditioned for gradient methods. This motivates using a nonzero terminal noise in practice, rather than pushing $\varepsilon\to 0$ in a finite-sample pipeline.

\section{Experiments} \label{sec:experiments}
We evaluate our comprehensive theoretical analysis of the semi-dual Neural OT through a series of numerical experiments. The goal of our experiments is to verify the following key claims of our analysis:
\begin{itemize}[topsep=-1pt, itemsep=-1pt]
    \item In Sec \ref{sec:exp_tangential_normal}, we demonstrate that smoothing identifies transport signals in normal directions normal to the data manifold where unregularized methods fail.
    \item In Sec \ref{sec:exp_NOT_accuracy}, we evaluate our principled noise schedule for the Neural OT Plan by evaluating the accuracy.

    \item In Sec~\ref{sec:terminal_noise_validation}, we demonstrate that annealing 
below $\varepsilon_{\text{stat}}(N)$ does not improve the statistical convergence rate of the learned transport maps.
    
    \item In Sec \ref{sec:exp_bias_stability}, we  demonstrate the bias–stability tradeoff: overly small $\varepsilon$ can substantially increases the iterations needed for convergence. 
\end{itemize}
For implementation details of experiments, please refer to Appendix \ref{app:implementation_details}.

\begin{figure*}[t]
    \centering
    \begin{subfigure}[b]{0.3\textwidth}
        \centering
        \includegraphics[width=\textwidth]{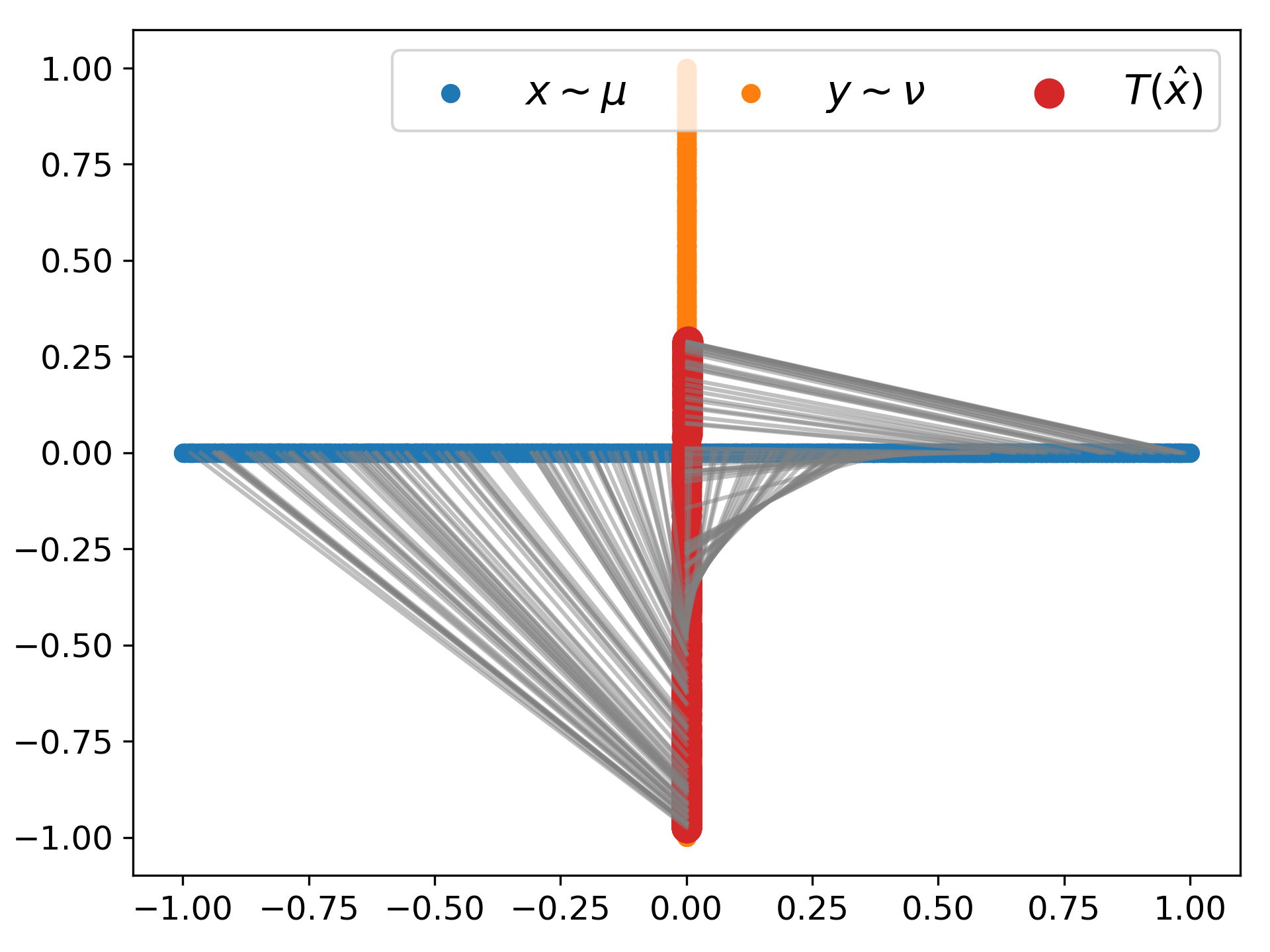}
        \caption{OTM}
        \label{fig:tang_vs_normal_OTM}
    \end{subfigure}
    \hfill
    \begin{subfigure}[b]{0.3\textwidth}
        \centering
        \includegraphics[width=\textwidth]{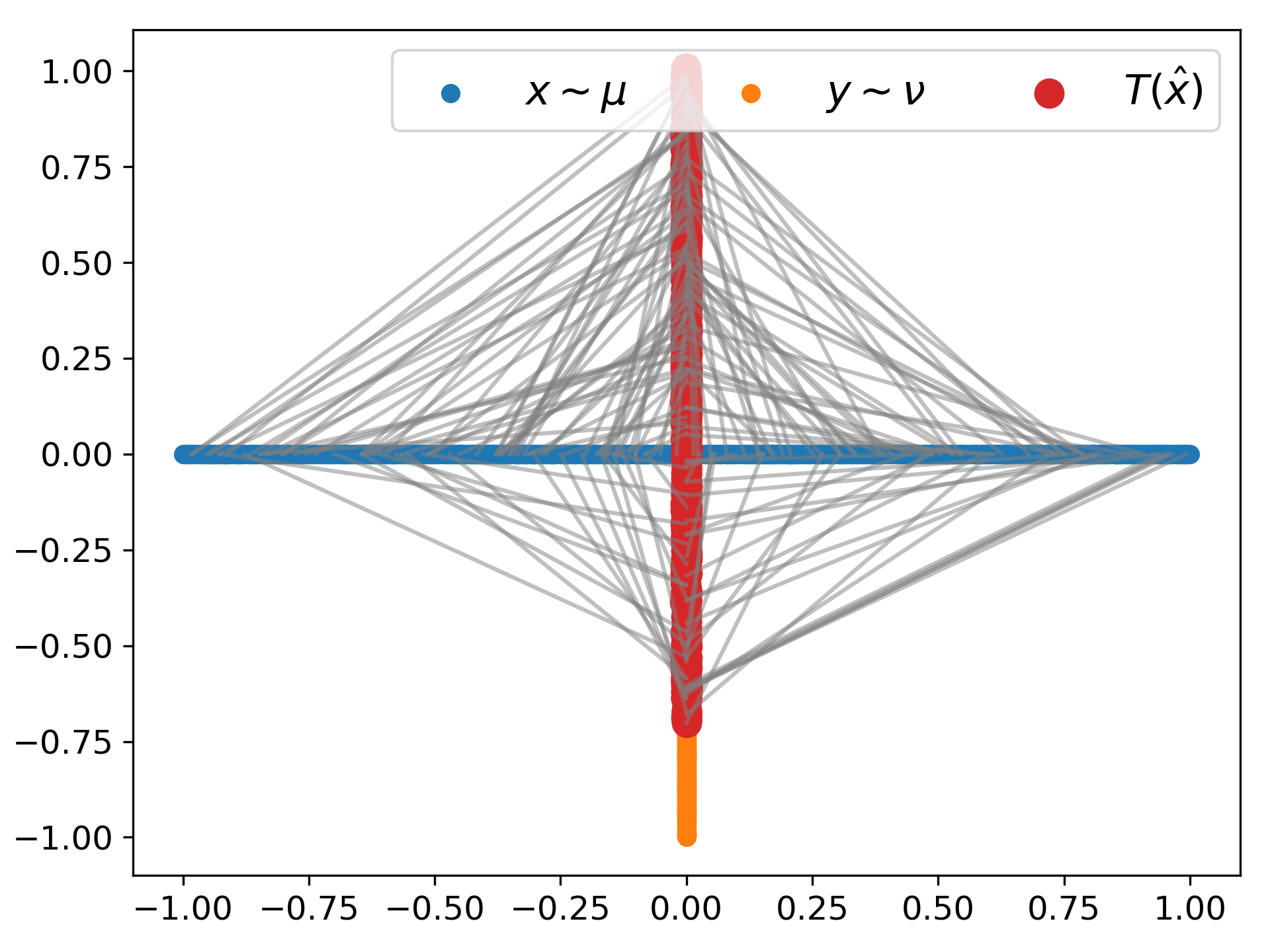}
        \caption{OTP}
        \label{fig:tang_vs_normal_OTP}
    \end{subfigure}
    \hfill
    \begin{subfigure}[b]{0.3\textwidth}
        \centering
        \includegraphics[width=\textwidth]{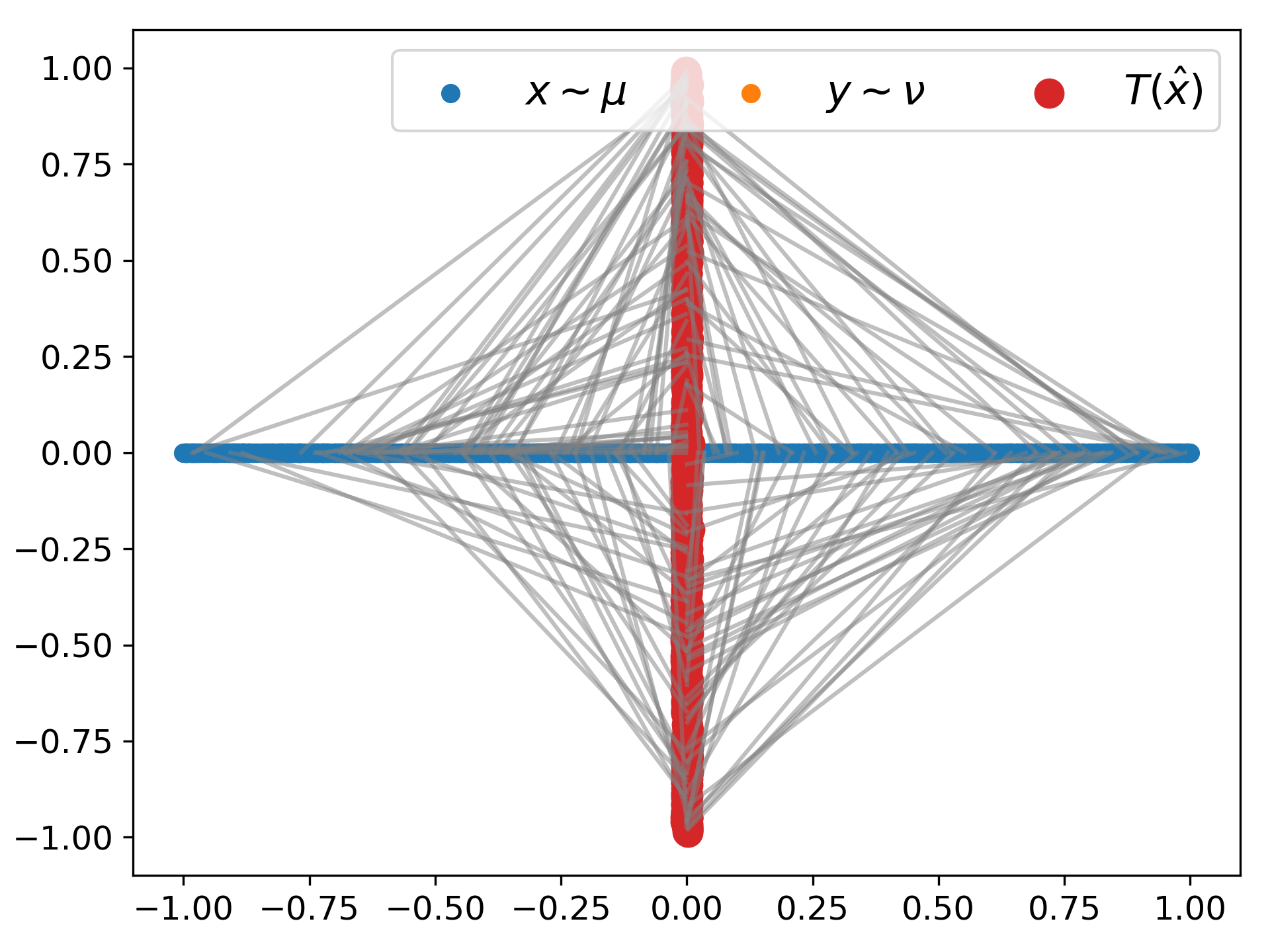}
        \caption{Ours}
        \label{fig:tang_vs_normal_ours}
    \end{subfigure}
    \caption{\textbf{Comparison of the transport map recovery on the Perpendicular Case.} All SNOT solvers are accurate in the tangential component ($x$-axis). However, only the smoothing-based methods (OTP and Ours) successfully recover the normal component ($y$-axis), with our principled approach achieving the highest accuracy.
    }
    \label{fig:tang_vs_normal}
    \vspace{-8pt}
\end{figure*}

\paragraph{Our Annealing Strategy}
We propose a principled, convergence-rate-optimal noise scheduling for the Optimal Transport Plan (OTP) model \citep{choi2025overcoming}. OTP regularizes singular source measures $\mu$ via Gaussian smoothing ($\mu_{\varepsilon}$) to eliminate spurious solutions and gradually decrease the noise level to approximate the original optimal transport plan $\pi^{\star}$. While the original framework utilizes heuristic decay rates inherited from diffusion model schedules (VP SDE) \citep{scoresde}, our strategy is grounded in the rate-optimal noise level $\varepsilon_{\mathrm{stat}}(n)$ i.e. \eqref{eq:estat}. This choice balances regularization bias against finite-sample estimation error. Here we use $Y = \mathcal{N}(0,I)$ with $n$ denoting the cumulative number of observed samples during training, calculated as $n = \text{training iterations} \times \text{batch size}$. To ensure optimization stability and prevent the input distribution from shifting continuously for the neural network, we employ a \textbf{stepwise decreasing rule}. Furthermore, we introduce a minimum noise level hyperparameter, $\varepsilon_{\mathrm{min}}$, to prevent the schedule from decreasing below a threshold that would degrade conditioning. The effective noise level is thus $\varepsilon_{\mathrm{eff}}(n) = \max(\varepsilon_{\mathrm{stat}}(n), \varepsilon_{\mathrm{min}})$.

\paragraph{Baselines}
We compare three SNOT approaches: (1) OTM (Vanilla) \citep{otm, fanscalable} (without any smoothing regularization), (2) OTP \citep{choi2025overcoming}, (3) Our approach (OTP with Rate-optimal annealing $\varepsilon_{\mathrm{eff}}(\cdot)$).

\begin{table}[t]
    \centering
    \caption{\textbf{Recovery accuracy in the Perpendicular Case.} Smoothing-based solvers (OTP, Ours) successfully recover the normal transport component where unregularized OTM fails. \textbf{S}: Smoothing; \textbf{O}: Rate-optimal scheduling.}
    \label{tab:tang_vs_normal}
    \scalebox{0.9}{
    \begin{tabular}{c c c c c}
        \toprule
        \multirow{2}{*}{Model} & \multirow{2}{*}{\textbf{S}} & \multirow{2}{*}{\textbf{O}} & \multicolumn{2}{c}{Error}\\
        \cmidrule{4-5}
        & & & Tangential Error ($\downarrow$) & Normal Error ($\downarrow$)\\
        \midrule
        OTM & \xmark & \dashmark & $3.405 \times 10^{-6} $  & 0.195 \\
        OTP & \cmark & \xmark & $1.192 \times 10^{-6} $ & 0.073 \\
        Ours & \cmark & \cmark & $\textbf{3.283} \times \mathbf{10^{-7}}$ & \textbf{0.005} \\
        \bottomrule
    \end{tabular}}
    \vspace{-8pt}
\end{table}

\subsection{Spurious solutions: tangential vs. normal error} \label{sec:exp_tangential_normal}
Theorem \ref{thm:tangential_recovery} proves that the semi-dual objective for SNOT \eqref{eq:snot_intro} is underconstrained when the source measure $\mu$ is supported on a lower-dimensional manifold. Specifically, only the tangential component of the recovered map $T_{\textnormal{rec}}$ is guaranteed to match to the ground-truth $T_{\textnormal{opt}}$ \eqref{eq:tangential_recovery}, allowing the normal component to behave arbitrarily. 
We test this using the Perpendicular synthetic dataset, where the source and target distributions are uniformly supported on $[-1, 1] \times \{0 \}$ and $\{0 \} \times [-1, 1]$\footnote{In this case, any map $T$ satisfying $T_{\#} \mu = \nu$ is an optimal transport map for the quadratic cost function.}. We decompose the map error into tangential and normal components. Let the neural OT model be $T_{\theta}(x) = (T_{\theta, 1}(x), T_{\theta, 2}(x))$. The \textbf{tangential error} is defined as $| \int \left( T_{\theta, 1}(x) - 0  \right) d \mu (x)|$, because the first component of the target support is always zero. The \textbf{normal error} is measured as $W_{2}^{2} ( (T_{\theta, 2})_{\#} \mu_{\eps_{\textnormal{min}}}, \, \textnormal{Unif} ([-1, 1]))$, which is the Wasserstein distance along the normal direction.

As shown in Figure \ref{fig:tang_vs_normal} and Table \ref{tab:tang_vs_normal}, the unregularized OTM fails to identify the normal component. While it matches the tangential component accurately ($x$-coordinate, with error $3.405 \times 10^{-6}$), its high normal error ($y$-coordinate, $0.195$) results in a spurious solution that leaves most of the target support uncovered.
In contrast, smoothing-based approaches (OTP and Ours) address this by replacing the singular source with a \textit{thickened} regular approximation $\mu_{\epsilon}$, imposing constraints in a neighborhood of the data. While the original OTP reduces the normal error to $0.073$, our rate-optimal approach further reduces it to $0.005$. These results empirically confirm that smoothing removes the off-support degrees of freedom that lead to spurious solutions.

\begin{table}[t]
    \centering
    \caption{\textbf{Low-dimensional manifold accuracy benchmark ($d=256$)}. This comparison demonstrates the stability of our principled noise scheduling when the ambient dimension is large but the intrinsic data manifold is low-dimensional.
    }
    \label{tab:accuracy_lowdim_mfd}
    \setlength{\tabcolsep}{3pt} 
    \scalebox{0.75}{
    \begin{tabular}{c l cc cc}
        \toprule
        \multirow{2}{*}{$m$} & \multirow{2}{*}{Model} & \multicolumn{2}{c}{Perpendicular} & \multicolumn{2}{c}{One-to-Many} \\
        \cmidrule(r){3-4} \cmidrule(l){5-6}
        & & $D_{cost}$ & $D_{target}$ & $D_{cost}$ & $D_{target}$ \\
        \midrule
        \multirow{3}{*}{2} & OTM & $0.483 \pm 0.255$ & $0.651 \pm 0.159$ & $\underline{0.397} \pm 0.163$ & $\underline{2.215} \pm 0.327$ \\
                           & OTP  & $\underline{0.116} \pm 0.062$ & $\textbf{0.079} \pm 0.003$ & $7.511 \pm 10.388$  & $8.278 \pm 11.123$ \\
                           & Ours & $\textbf{0.098} \pm 0.027$ & $\underline{0.131} \pm 0.012$ & $\mathbf{0.298} \pm 0.019$ & $\textbf{0.362} \pm 0.043$ \\
        \midrule
        \multirow{3}{*}{4} & OTM & $0.640 \pm 0.473$ & $1.089 \pm 0.635$ & $\underline{0.719} \pm 1.032$ & $\underline{1.190} \pm 1.731$ \\
                           & OTP  & $\underline{0.367} \pm 0.066$ & $\underline{0.216} \pm 0.032$ & $9.062 \pm 12.606$ & $9.877 \pm 13.407$ \\
                           & Ours & $\textbf{0.267} \pm 0.047$ & $\textbf{0.188} \pm 0.014$ & $\textbf{0.265} \pm 0.011$ & $\textbf{0.426} \pm 0.032$ \\
        \midrule
        \multirow{3}{*}{8} & OTM & $0.919 \pm 0.362$ & $4.339 \pm 0.524$ & $\underline{0.401} \pm 0.096$ & $\underline{1.219} \pm 0.778$ \\
                           & OTP  & $\underline{0.651} \pm 0.050$ & $\textbf{0.644} \pm 0.037$ & $3.062 \pm 2.946$ & $6.057 \pm 2.929$ \\
                           & Ours & $\mathbf{0.546 \pm 0.039}$ & $\underline{0.652} \pm 0.024$ & $\textbf{0.150} \pm 0.005$ & $\textbf{0.978} \pm 0.011$ \\
        \bottomrule
    \end{tabular}
    }
    \vspace{-8pt}
\end{table}

\subsection{Neural OT Plan accuracy evaluation} \label{sec:exp_NOT_accuracy}

We evaluate the numerical accuracy of the learned Neural OT models using a \textbf{low-dimensional manifold benchmark}. Here, the ambient dimension is fixed at $d=256$ and the manifold dimension is set to much smaller scales $m \in \{2, 4, 8 \}$. This setup is highly challenging to the Neural OT models because the data are concentrated on highly singular, low-dimensional manifolds. (See Appendix \ref{app:standard_manifold} for larger-dimensional manifold benchmark results where $m = d/2$.) We evaluate performance using two metrics: \textit{Transport cost error} $D_{cost} = | W^2_2 (\mu, \nu) - \int \Vert T_{\theta}(x) - x \Vert^2 d\mu(x) |$ and \textit{Target distribution error} $D_{target} = W^2_2 (T_{\theta \#} \mu, \nu)$. $D_{cost}$ assesses whether the model achieves the optimal transport cost, while $D_{target}$ measures how accurately the model generates the target distribution. 

The quantitative results are presented in Table \ref{tab:accuracy_lowdim_mfd}. 
In the Perpendicular dataset, both smoothing approaches (OTP and Ours) are successful compared to the non-smoothing counterpart (OTM), with our approach yielding superior or comparable results to OTP.
Furthermore, our method maintains significant stability in the One-to-Many dataset. Our rate-optimal noise scheduling results in a more moderate noise decay as the manifold dimension increases, providing a principled terminal noise level $\varepsilon_{\text{stat}}(N)$. However, the standard OTP model does not incorporate such adaptation. OTP results in the One-to-Many case demonstrate that when the noise level is set too low, the lack of sufficient smoothing regularization leads to unstable optimization, as described by our analysis of ill-conditioning at small noise levels.

\subsection{Validation of the terminal noise level}
\label{sec:terminal_noise_validation}

We validate that $\varepsilon_{\text{stat}}(N)$ identifies the statistical 
noise floor predicted by Theorem~\ref{thm:semi_discrete}. 
For $\mu = \text{Unif}([0,1]^3) \otimes \delta_0^2$, 
$\nu = \text{Unif}([0,1]^5)$, and $N = 320000$ samples ($m=3$), 
Figure~\ref{fig:neural_map_fit} shows error remains stable for 
$\varepsilon \lesssim \varepsilon_{\text{stat}}$ (the $N^{-1/3}$ 
convergence floor is reached) but increases sharply for 
$\varepsilon \gg \varepsilon_{\text{stat}}$ (bias dominates). 
This confirms $\varepsilon_{\text{stat}}(N)$ as a practical upper 
bound for noise annealing.
\subsection{Regularization effect of Smoothing} \label{sec:exp_bias_stability}

We validate the predictions of Theorem~\ref{lem:theta_derivatives1} 
and Lemma~\ref{lem:no_uniform_Lip}. 
For $\mu=\delta_0$ and $\nu=\mathcal{N}(0,I_{10})$. Thus $\mu_{\eps} = \mathcal{N}(0,\eps^2 I_{10})$ and the Monge map is given by $T_\varepsilon(x)=x/\varepsilon$ with $\nabla T_\varepsilon=\varepsilon^{-1}I_{10}$. 
Figure~\ref{fig:map_gradient_blow_up} confirms the learned map exhibits the same behavior: $\|\nabla T_{\theta,\varepsilon}\|_{L^\infty}$ grows 
rapidly as $\varepsilon$ decreases. Since Theorem~\ref{lem:theta_derivatives1} 
links $\nabla^2_{\theta\theta}\mathcal{J}(\theta)$ to $\nabla T_{\theta,\eps}$, this suggests poor conditioning. 
Figure~\ref{fig:mse_vs_n} is consistent with this poor conditioning: it shows smaller 
$\varepsilon$ requires more iterations and higher variance, validating that annealing below $\varepsilon_{\text{stat}}(N)$ degrades optimization.

\begin{figure}[t]
    \centering
    \includegraphics[width=0.35\textwidth]{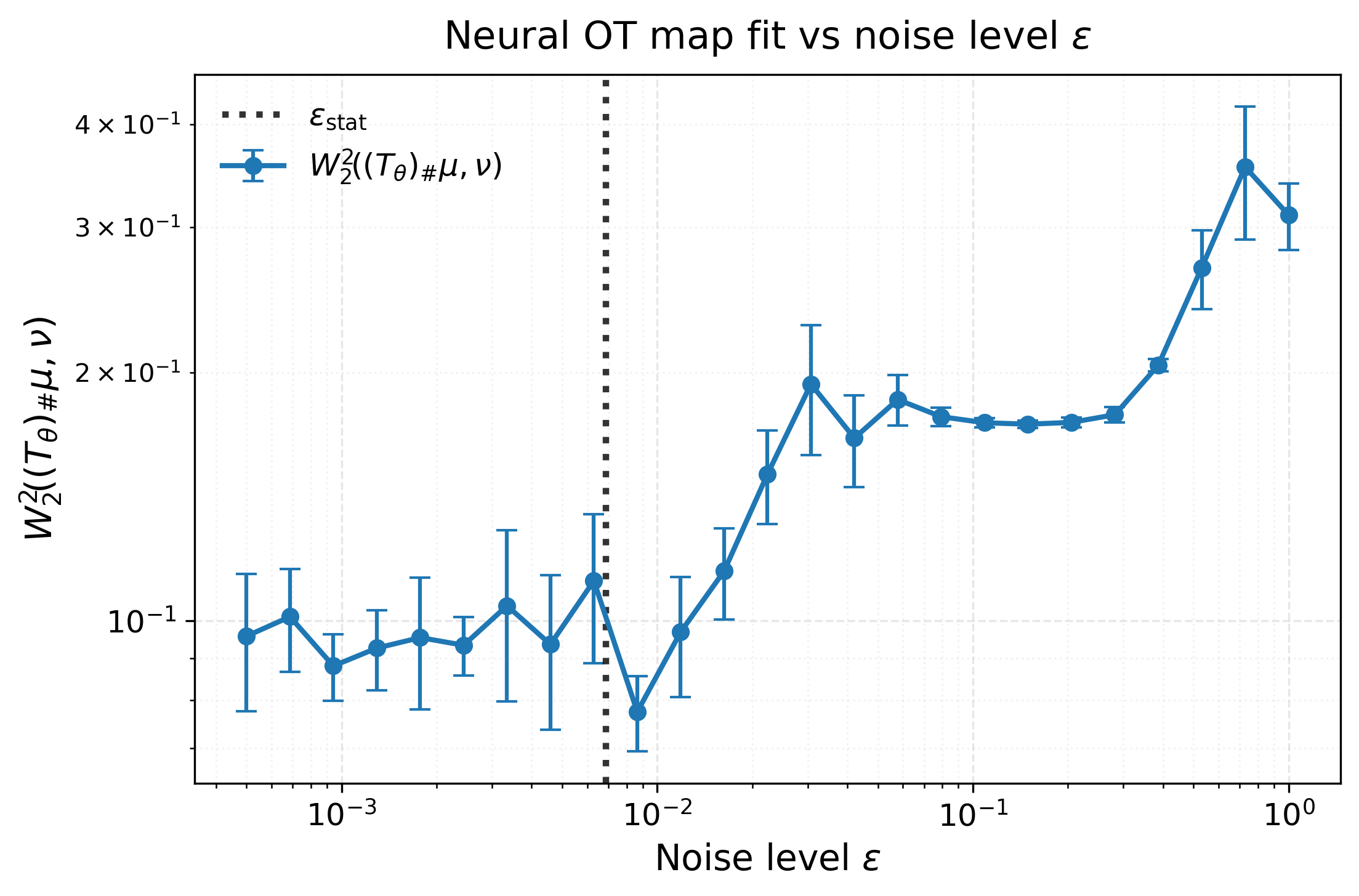}
    \caption{\textbf{Neural map error vs. $\varepsilon$.} Performance degrades for $\varepsilon \gg \varepsilon_{\text{stat}}$ 
(smoothing bias). Error remains stable for $\varepsilon \ll \varepsilon_{\text{stat}}$, 
confirming that reducing noise below this level does not improve the  statistical rate. Error bars denote sample standard deviations obtained through 5 trials.} 
    \label{fig:neural_map_fit}
    \vspace{-8pt}
\end{figure}

\begin{figure}[t]
    \centering
    \includegraphics[width=\columnwidth]{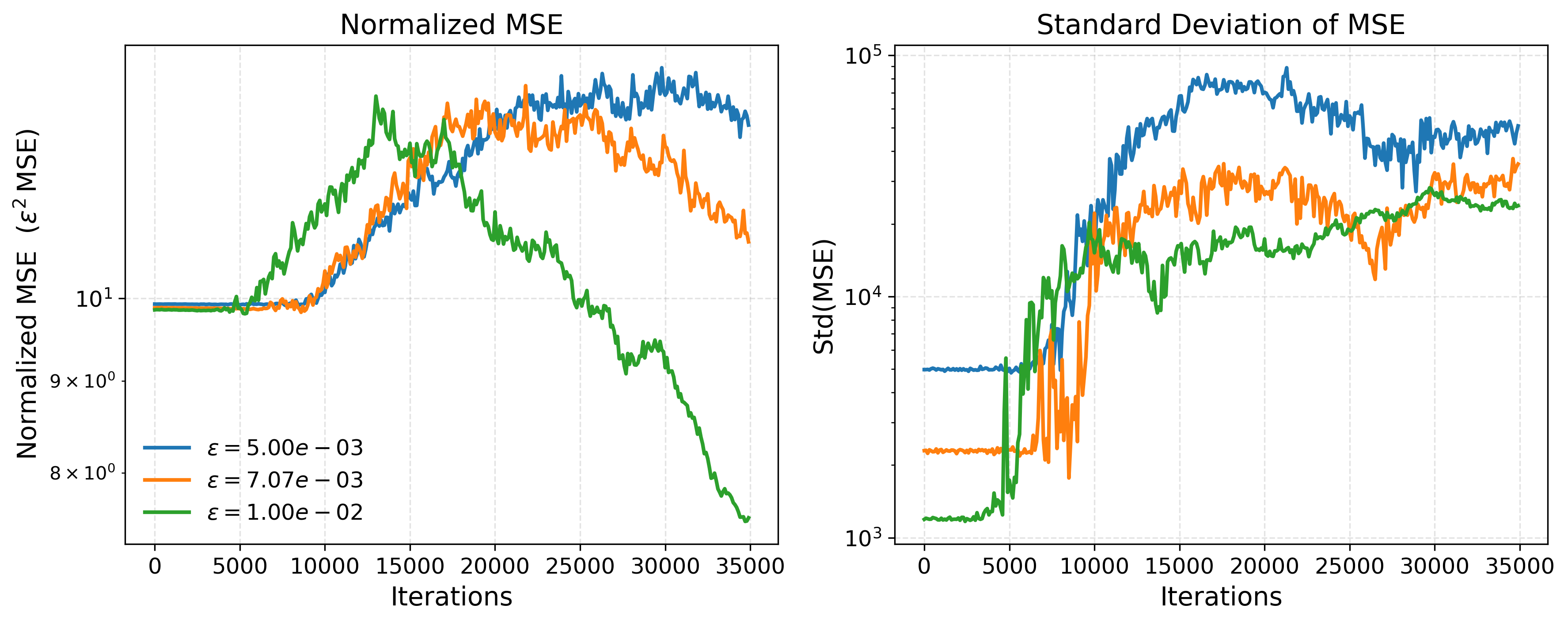}
    \caption{\textbf{Effect of $\varepsilon$ on convergence and variance.} Decreasing $\varepsilon$ induces slower convergence (\textbf{Left}) and higher variance across 10 runs (\textbf{Right}). These results empirically validate that small noise levels degrade optimization conditioning, justifying our terminal noise floor $\varepsilon_{\text{stat}}(N)$. Both panels use log-scaled $y$-axes.}
    \label{fig:mse_vs_n}
    \vspace{-8pt}
\end{figure}

\section{Conclusion}
This paper provides a geometric and statistical characterization of spurious solutions in semi-dual Neural Optimal Transport (SNOT). We identify that when data concentrate on a manifold, the objective is underconstrained in normal directions, though the tangential signal remains identifiable. To resolve this, we utilize additive-noise smoothing to remove off-manifold ambiguity and prove that recovered maps $T_{\epsilon}$ converge to the unique Monge map as smoothing vanishes. Our quantitative analysis establishes that plan estimation error scales with the intrinsic dimension $m$ rather than the ambient dimension $d$, bypassing the curse of dimensionality. We derive a principled terminal noise level $\eps_{\text{stat}}(N)$ that balances regularization bias against optimization stability. We validate our theory through a series of numerical experiments.

\section*{Acknowledgements}
RC is partially supported by NSF grant
DMS 2342349. JW was partially supported by the National Research Foundation of Korea (NRF) grant funded by the Korea government (MSIT) [RS-2024-00349646]. DK is partially supported by the National Research Foundation of Korea (NRF) grant funded by the Korea government (MSIT) (No. RS-2023-00252516 and No. RS-2024-00408003), the POSCO Science Fellowship of POSCO TJ Park Foundation, and the Korea Institute for Advanced Study. DK thanks Wilfrid Gangbo for helpful discussions on this project and the Center for Advanced Computation in KIAS for providing computing resources. RC thanks Michael Rozowski for helpful discussions on this project.

\section*{Impact Statement}

This paper presents work whose goal is to advance the field of Machine Learning. There are many potential societal consequences of our work, none which we feel must be specifically highlighted here.

\bibliography{main}

@InProceedings{sjko,
  title = 	 {Scalable {W}asserstein Gradient Flow for Generative Modeling through Unbalanced Optimal Transport},
  author =       {Choi, Jaemoo and Choi, Jaewoong and Kang, Myungjoo},
  booktitle = 	 {Proceedings of the 41st International Conference on Machine Learning},
  year = 	 {2024},
  volume = 	 {235},
  series = 	 {Proceedings of Machine Learning Research},
  month = 	 {21--27 Jul},
  publisher =    {PMLR},
}

@inproceedings{otm,
  title={Generative Modeling with Optimal Transport Maps},
  author={Rout, Litu and Korotin, Alexander and Burnaev, Evgeny},
  booktitle={International Conference on Learning Representations},
  year={2022}
}

@inproceedings{otmICNN,
  title={Optimal transport mapping via input convex neural networks},
  author={Makkuva, Ashok and Taghvaei, Amirhossein and Oh, Sewoong and Lee, Jason},
  booktitle={International Conference on Machine Learning},
  pages={6672--6681},
  year={2020},
  organization={PMLR}
}

@article{
    fanTMLR,
    title={Neural Monge Map estimation and its applications},
    author={Jiaojiao Fan and Shu Liu and Shaojun Ma and Hao-Min Zhou and Yongxin Chen},
    journal={Transactions on Machine Learning Research},
    issn={2835-8856},
    year={2023},
    url={https://openreview.net/forum?id=2mZSlQscj3},
    note={Featured Certification}
}

@article{Kantorovich1948,
  title={On a problem of Monge},
  author={Kantorovich, Leonid Vitalevich},
  journal={Uspekhi Mat. Nauk},
  pages={225--226},
  year={1948}
}

@inproceedings{fanscalable,
  title={Scalable Computation of Monge Maps with General Costs},
  author={Fan, Jiaojiao and Liu, Shu and Ma, Shaojun and Chen, Yongxin and Zhou, Hao-Min},
  booktitle={ICLR Workshop on Deep Generative Models for Highly Structured Data},
  year={2022},
}

@article{scoresde,
  title={Score-based generative modeling through stochastic differential equations},
  author={Song, Yang and Sohl-Dickstein, Jascha and Kingma, Diederik P and Kumar, Abhishek and Ermon, Stefano and Poole, Ben},
  journal={The International Conference on Learning Representations},
  year={2021}
}

@inproceedings{
    lipman2022flow,
    title={Flow Matching for Generative Modeling},
    author={Yaron Lipman and Ricky T. Q. Chen and Heli Ben-Hamu and Maximilian Nickel and Matthew Le},
    booktitle={The Eleventh International Conference on Learning Representations },
    year={2023},
    url={https://openreview.net/forum?id=PqvMRDCJT9t}
}

@article{santambrogio,
  title={Optimal transport for applied mathematicians},
  author={Santambrogio, Filippo},
  journal={Birk{\"a}user, NY},
  volume={55},
  number={58-63},
  pages={94},
  year={2015},
  publisher={Springer}
}

@InProceedings{choi2025overcoming,
  title = 	 {Overcoming Spurious Solutions in Semi-Dual Neural Optimal Transport: A Smoothing Approach for Learning the Optimal Transport Plan},
  author =       {Choi, Jaemoo and Choi, Jaewoong and Kwon, Dohyun},
  booktitle = 	 {Proceedings of the 42nd International Conference on Machine Learning},
  pages = 	 {10644--10665},
  year = 	 {2025},
  volume = 	 {267},
  series = 	 {Proceedings of Machine Learning Research},
  month = 	 {13--19 Jul},
  publisher =    {PMLR},
}

@inproceedings{uotm,
    title={Generative Modeling through the Semi-dual Formulation of Unbalanced Optimal Transport},
    author={Jaemoo Choi and Jaewoong Choi and Myungjoo Kang},
    booktitle={Thirty-seventh Conference on Neural Information Processing Systems},
    year={2023}
}

@inproceedings{which-converge,
  title={Which training methods for GANs do actually converge?},
  author={Mescheder, Lars and Geiger, Andreas and Nowozin, Sebastian},
  booktitle={International conference on machine learning},
  pages={3481--3490},
  year={2018},
  organization={PMLR}
}

@article{monge1781memoire,
  title={M{\'e}moire sur la th{\'e}orie des d{\'e}blais et des remblais},
  author={Monge, Gaspard},
  journal={Mem. Math. Phys. Acad. Royale Sci.},
  pages={666--704},
  year={1781}
}

@book{villani,
  title={Optimal transport: old and new},
  author={Villani, C{\'e}dric},
  volume={338},
  year={2009},
  publisher={Springer}
}

@inproceedings{not,
    title={Neural Optimal Transport},
    author={Alexander Korotin and Daniil Selikhanovych and Evgeny Burnaev},
    booktitle={The Eleventh International Conference on Learning Representations },
    year={2023}
}

@article{delalande2023quantitative,
  title={Quantitative stability of optimal transport maps under variations of the target measure},
  author={Delalande, Alex and Merigot, Quentin},
  journal={Duke Mathematical Journal},
  volume={172},
  number={17},
  pages={3321--3357},
  year={2023},
  publisher={Duke University Press}
}

@article{chewi2024statistical,
  title={Statistical optimal transport},
  author={Chewi, Sinho and Niles-Weed, Jonathan and Rigollet, Philippe},
  journal={arXiv preprint arXiv:2407.18163},
  year={2024}
}

@book{rockafellar1998variational,
  title={Variational analysis},
  author={Rockafellar, R Tyrrell and Wets, Roger JB},
  year={1998},
  publisher={Springer}
}

@article{block2021intrinsic,
  title={Intrinsic dimension estimation using wasserstein distances},
  author={Block, Adam and Jia, Zeyu and Polyanskiy, Yury and Rakhlin, Alexander},
  journal={arXiv preprint arXiv:2106.04018},
  year={2021}
}

@article{gigli2011holder,
  title={On H{\"o}lder continuity-in-time of the optimal transport map towards measures along a curve},
  author={Gigli, Nicola},
  journal={Proceedings of the Edinburgh Mathematical Society},
  volume={54},
  number={2},
  pages={401--409},
  year={2011},
  publisher={Cambridge University Press}
}

@inproceedings{merigot2020quantitative,
  title={Quantitative stability of optimal transport maps and linearization of the 2-Wasserstein space},
  author={M{\'e}rigot, Quentin and Delalande, Alex and Chazal, Frederic},
  booktitle={International Conference on Artificial Intelligence and Statistics},
  pages={3186--3196},
  year={2020},
  organization={PMLR}
}

@inproceedings{uot-upc,
  title={Unpaired Point Cloud Completion via Unbalanced Optimal Transport},
  author={Lee, Taekyung and Choi, Jaemoo and Choi, Jaewoong and Kang, Myungjoo},
  booktitle={Forty-second International Conference on Machine Learning}
}

@article{gazdieva2025optimal,
  title={An optimal transport perspective on unpaired image super-resolution},
  author={Gazdieva, Milena and Mokrov, Petr and Rout, Litu and Korotin, Alexander and Kravchenko, Andrey and Filippov, Alexander and Burnaev, Evgeny},
  journal={Journal of Optimization Theory and Applications},
  volume={207},
  number={2},
  pages={40},
  year={2025},
  publisher={Springer}
}
\bibliographystyle{./icml_style/icml2026}

\clearpage
\appendix
\onecolumn

\section{Proofs of Main Results}\label{app:proofs-main}

We now prove the statements from the main text.

\section{Proofs for Section~\ref{sec:semidual_failure}}\label{app:semidual_failure}

This appendix contains the proofs of the results stated in
Section~\ref{sec:semidual_failure}, which analyzes the recovery step
in the semi-dual formulation.

\subsection{Proof of Theorem~\ref{thm:recovery_characterization}}

\begin{proof}[Proof of Theorem~\ref{thm:recovery_characterization}]
Let $V\in C(\mathcal Y)$ and let $c\in C(\mathcal X\times\mathcal Y)$.
Recall that the $c$-transform of $V$ is defined by
\begin{equation}  V^c(x) := \inf_{y\in\mathcal Y} \bigl\{c(x,y)-V(y)\bigr\}. \end{equation}

\emph{($\Rightarrow$)}
Assume that a map $T_V:\mathcal X\to\mathcal Y$ satisfies the condition
\eqref{eq:recovery_constraint}. By definition of the $c$-transform, for every $x\in\mathcal X$ and every $y\in\mathcal Y$,
\[
V^c(x)\le c(x,y)-V(y).
\]

In particular, taking $y=T_V(x)$ yields
\begin{equation}
V^c(x)\le c(x,T_V(x)) - V(T_V(x)) \qquad \text{for all }x. \label{c_transform_ineq}
\end{equation}
Integrating against $\mu$ gives
\[
\int_{\mathcal X} V^c(x)\,d\mu(x)
\le
\int_{\mathcal X}\bigl(c(x,T_V(x)) - V(T_V(x))\bigr)\,d\mu(x).
\]
Since the right-hand side equals the minimal value of the recovery objective,
equality must hold. Combining this with \eqref{c_transform_ineq} implies
\[
c(x,T_V(x)) - V(T_V(x)) = V^c(x)
\quad\text{for $\mu$-a.e.\ }x.
\]

\smallskip
\noindent\emph{($\Leftarrow$)}
Conversely, suppose that a map $T_V:\mathcal X\to\mathcal Y$ satisfies
\[
c(x,T_V(x)) - V(T_V(x)) = V^c(x)
\quad\text{for $\mu$-a.e.\ }x.
\]
Then for any measurable map $T:\mathcal X\to\mathcal Y$, the defining inequality of the
$c$-transform implies
\[
V^c(x)\le c(x,T(x)) - V(T(x)) \quad\text{for all }x.
\]
Integrating with respect to $\mu$ yields
\[
\int_{\mathcal X} V^c(x)\,d\mu(x)
\le
\int_{\mathcal X}\bigl(c(x,T(x)) - V(T(x))\bigr)\,d\mu(x),
\]
with equality attained when $T=T_V$.
Hence $T_V$ minimizes the recovery objective and satisfies the recovery condition
\eqref{eq:recovery_constraint}.

\smallskip
\noindent
Finally, since $c(x,y)-V(y)$ is continuous in $y$ and $\mathcal Y$ is compact,
the infimum defining $V^c(x)$ is attained for every $x$.
\end{proof}

\subsection{Proof of Theorem~\ref{thm:tangential_recovery}} To prove Theorem~\ref{thm:tangential_recovery}, we will first need the following tangential touching lemma:

\begin{lemma}[Tangential gradient at a contact point] \label{tangential_contact_point}
Let $\mathcal M \subset \R^d$ be a smooth embedded Riemannian manifold.
Let $f \in C^1(\R^d)$ and $\varphi \in C^1(\mathcal M)$ satisfy
\[
\varphi \le f \quad \text{on } \mathcal M,
\qquad
\varphi(x^*) = f(x^*) \quad \text{for some } x^* \in \mathcal M.
\]
Then
\[
\nabla_{\mathcal M} \varphi(x^*)
=
\Pi_{T_{x^*}\mathcal M} \nabla f(x^*).
\]
\end{lemma}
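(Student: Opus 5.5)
The plan is to reduce the statement to the elementary fact that a $C^1$ function on a manifold has vanishing Riemannian gradient at an interior minimum. Set $g := f|_{\mathcal M} - \varphi$. Since $\mathcal M$ is a smooth embedded submanifold and $f\in C^1(\R^d)$, the restriction $f|_{\mathcal M}$ is $C^1$ on $\mathcal M$, so $g\in C^1(\mathcal M)$. By hypothesis $g\ge 0$ on $\mathcal M$ and $g(x^*)=0$, so $x^*$ is a global minimizer of $g$ on $\mathcal M$.

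The first key step is to show $\nabla_{\mathcal M} g(x^*)=0$. Take any $v\in T_{x^*}\mathcal M$ and a $C^1$ curve $\gamma:(-\delta,\delta)\to\mathcal M$ with $\gamma(0)=x^*$ and $\gamma'(0)=v$; such curves exist because $\mathcal M$ is embedded, for instance via a local chart or the exponential map. The function $t\mapsto g(\gamma(t))$ is $C^1$ and has a minimum at $t=0$. Its derivative there is therefore zero, which gives $\langle \nabla_{\mathcal M} g(x^*), v\rangle = 0$. Since $v$ was arbitrary, $\nabla_{\mathcal M} g(x^*)=0$. We should note that $x^*$ is an interior point here: either $\mathcal M$ has no boundary, or at worst we restrict to points in the relative interior. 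The paper's compact manifolds are implicitly without boundary.

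The second step identifies the tangential gradient of the restriction of an ambient function. For $v\in T_{x^*}\mathcal M$ and $\gamma$ as above, the chain rule in $\R^d$ gives
\[
\tfrac{d}{dt}f(\gamma(t))\big|_{t=0} = \langle \nabla f(x^*), v\rangle = \langle \Pi_{T_{x^*}\mathcal M}\nabla f(x^*), v\rangle .
\]
This holds because $\nabla f(x^*) - \Pi_{T_{x^*}\mathcal M}\nabla f(x^*)$ is orthogonal to $T_{x^*}\mathcal M$. Here the Riemannian metric on $\mathcal M$ is the one induced from $\R^d$, and this is what makes the inner products agree. Hence $\nabla_{\mathcal M}(f|_{\mathcal M})(x^*) = \Pi_{T_{x^*}\mathcal M}\nabla f(x^*)$. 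Combining this with the first step by linearity yields $\nabla_{\mathcal M}\varphi(x^*) = \Pi_{T_{x^*}\mathcal M}\nabla f(x^*)$.

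There is no real obstacle. The only point needing care is the convention in the second step: $\nabla_{\mathcal M}$ must be taken with respect to the induced metric, so that the Riemannian gradient is identified with a vector in $T_{x^*}\mathcal M\subset\R^d$. Under that convention the result follows from first-order optimality along curves.
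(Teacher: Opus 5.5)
Your proposal is correct and matches the paper's proof. Both differentiate $t\mapsto(\varphi-f)(\gamma(t))$ (you use its negative $g$) along a curve through $x^*$ with velocity $v\in T_{x^*}\mathcal M$, apply first-order optimality at the extremum $t=0$, and then use that $v$ is arbitrary; your separate step identifying $\nabla_{\mathcal M}(f|_{\mathcal M})(x^*)$ with $\Pi_{T_{x^*}\mathcal M}\nabla f(x^*)$ only spells out what the paper does in one line.
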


    \begin{proof}
Let $v \in T_{x^*}\mathcal M$ be arbitrary, and let
$\gamma : (-\varepsilon,\varepsilon) \to \mathcal M$ be a smooth curve
such that $\gamma(0)=x^*$ and $\dot\gamma(0)=v$.
Since $\varphi \le f$ on $\mathcal M$ and $\varphi(x^*)=f(x^*)$,
the function
\[
t \mapsto (\varphi - f)(\gamma(t))
\]
attains a maximum at $t=0$. Therefore,
\[
0
=
\frac{d}{dt}\Big|_{t=0} (\varphi - f)(\gamma(t))
=
\langle \nabla_{\mathcal M} \varphi(x^*), v \rangle
-
\langle \nabla f(x^*), v \rangle.
\]
Since $v \in T_{x^*}\mathcal M$ was arbitrary, this implies the desired result.
\end{proof}

Now we can prove Theorem~\ref{thm:tangential_recovery}.

    \begin{proof}[Proof of Theorem~\ref{thm:tangential_recovery}]
Let $(V^*,T_{\mathrm{rec}})$ be an optimal solution of~(1).
By the characterization of the recovery step (Theorem~\ref{thm:recovery_characterization}),
the recovered map $T_{\mathrm{rec}}$ satisfies, for $\mu$-a.e.\ $x\in\mathcal M$,
\begin{equation}\label{eq:rec_attains_c_transform}
c\bigl(x,T_{\mathrm{rec}}(x)\bigr)-V^*\bigl(T_{\mathrm{rec}}(x)\bigr)
=
(V^*)^c(x).
\end{equation}

Since $(V^*,T_{\mathrm{rec}})$ is optimal for~(1), taking the infimum over $T$ yields
\[
\inf_T \mathcal L(V^*,T)
=
\int_{\mathcal X} (V^*)^c(x)\,d\mu(x) + \int_{\mathcal Y} V^*(y)\,d\nu(y).
\]
Moreover, maximizing over $V$ shows that $V^*$ attains the semi-dual optimum, hence from equality of dual and primal problem (see \citet{santambrogio})
\[
\int_{\mathcal X} (V^*)^c(x)\,d\mu(x) + \int_{\mathcal Y} V^*(y)\,d\nu(y)
=
\inf_{\pi\in\Pi(\mu,\nu)} \iint_{\mathcal X\times\mathcal Y} c(x,y)\,d\pi(x,y),
\]
so $((V^*)^c,V^*)$ is a pair of Kantorovich potentials for the cost $c$.

Now assume that an optimal transport map $T_{\mathrm{opt}}$ exists between $\mu$ and $\nu$,
and set $\pi_{\mathrm{opt}}:=(\mathrm{Id},T_{\mathrm{opt}})_\#\mu\in\Pi(\mu,\nu)$.
By definition of the $c$-transform we have the pointwise inequality
\begin{equation}\label{eq:dual_ineq}
(V^*)^c(x) + V^*(y)\le c(x,y)
\qquad \forall\,x\in\mathcal X,\ \forall\,y\in\mathcal Y.
\end{equation}
Integrating \eqref{eq:dual_ineq} against $\pi_{\mathrm{opt}}$ and using
$\pi_{\mathrm{opt}}$ has marginals $(\mu,\nu)$ yields
\[
\int_{\mathcal X} (V^*)^c\,d\mu + \int_{\mathcal Y} V^*\,d\nu
\le
\int_{\mathcal X} c\bigl(x,T_{\mathrm{opt}}(x)\bigr)\,d\mu(x).
\]
But the left-hand side equals the optimal transport cost (from equality of the primal and dual problem), and the right-hand side
also equals the optimal cost since $T_{\mathrm{opt}}$ is optimal. Hence equality holds:
\[
\int_{\mathcal X} \Big(c\bigl(x,T_{\mathrm{opt}}(x)\bigr) - (V^*)^c(x) - V^*\bigl(T_{\mathrm{opt}}(x)\bigr)\Big)\,d\mu(x)=0.
\]
The integrand is nonnegative by \eqref{eq:dual_ineq}, so it must vanish $\mu$-a.e., i.e.
\begin{equation}\label{eq:opt_attains_c_transform}
c\bigl(x,T_{\mathrm{opt}}(x)\bigr) - V^*\bigl(T_{\mathrm{opt}}(x)\bigr)
=
(V^*)^c(x)
\qquad\text{for $\mu$-a.e.\ }x\in\mathcal M.
\end{equation}

Since $c\in C^1(\mathcal X\times\mathcal Y)$ and both $\mathcal X$ and $\mathcal Y$ are compact,
the map $x\mapsto c(x,y)$ is Lipschitz continuous on $\mathcal X$, with a Lipschitz constant
uniform in $y$.
As a consequence, the $c$-transform $(V^*)^c$, being the infimum of a family of uniformly Lipschitz
functions, is itself Lipschitz continuous on $\mathcal X$ with respect to the Euclidean metric.

Let $d_{\mathcal M}$ denote the intrinsic geodesic distance on $\mathcal M$.
Since $\mathcal M$ is embedded in $\mathbb R^d$, the Euclidean distance is dominated by the geodesic distance,
i.e.\ $|x-x'|\le d_{\mathcal M}(x,x')$ for all $x,x'\in\mathcal M$.
Therefore, the restriction $(V^*)^c|_{\mathcal M}$ is Lipschitz continuous with respect to $d_{\mathcal M}$.

It follows from Rademacher theorem that $(V^*)^c|_{\cM}$ is  differentiable a.e. on $\mathcal M$ with respect to $\mathcal{H}^m \llcorner \cM$. Since $\mu \ll \mathcal{H}^m \llcorner \cM$, this implies that $(V^*)^c$ is tangentially differentiable
for $\mu$-a.e.\ $x\in\mathcal M$.

Fix $x\in\mathcal M$ in the full-measure set where \eqref{eq:rec_attains_c_transform},
\eqref{eq:opt_attains_c_transform}, and $(V^*)^c$ being tangentially differentiable holds.  Then \eqref{eq:opt_attains_c_transform}, \eqref{eq:dual_ineq}, and $(V^*)^c$ being tangentially differentiable implies from  Lemma~\ref{tangential_contact_point} with $\varphi_y(x) := V^*(y)+(V^*)^c(x)$ and $f_y(x) := c(x,y)$ implies at $y=T_{\mathrm{opt}}(x)$
\[  \nabla_{\cM} (V^*)^c(x) = \Pi_{T_x \cM} \nabla_x c(x,T_{\mathrm{opt}}(x)).   \] Similarly, because of \eqref{eq:rec_attains_c_transform} and \eqref{eq:dual_ineq} implies that
\[  \nabla_{\cM} (V^*)^c(x) = \Pi_{T_x \cM} \nabla_x c(x,T_{\mathrm{rec}}(x)).   \] This implies the desired result.
\end{proof}

\section{Proofs of Section \ref{section:regularization}}

We begin with the full recovery Lemma mentioned in Section~\ref{section:regularization} when the source measure is full dimensional.

\begin{lemma}[Full recovery in the full-dimensional case]\label{lem:full_recovery}
Let $K\subset\R^d$ be compact with $\mu(\partial K)=0$, $\mu \in \cP(K)$, and $\mu \ll \cL^d$. Assume the cost has the form $c(x,y)=h(x-y)$, where $h\in C^1(\R^d)$ is strictly convex.
Let $(V^*,T_{\mathrm{rec}})$ be an optimal solution of~\eqref{eq:snot_intro}, and then if
$T_{\mathrm{opt}}$ denotes the unique optimal transport map from $\mu$ to $\nu$.
Then
\[
T_{\mathrm{rec}}(x)=T_{\mathrm{opt}}(x)
\qquad\text{for }\mu\text{-a.e.\ }x.
\]
\end{lemma}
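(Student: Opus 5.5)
The plan is to reuse the contact-point argument of Theorem~\ref{thm:tangential_recovery} in the degenerate case where the ``manifold'' is an open subset of $\R^d$. In that case the tangential projection is the identity, so matching projections means matching full gradients. We then invoke strict convexity of $h$ to invert $\nabla h$.

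\textbf{Step 1: the potential is a Kantorovich potential and the optimal map attains the $c$-transform.} Let $(V^*,T_{\mathrm{rec}})$ be optimal. Exactly as in the proof of Theorem~\ref{thm:tangential_recovery}, strong duality makes $((V^*)^c,V^*)$ a pair of Kantorovich potentials. Integrating the inequality $(V^*)^c(x)+V^*(y)\le c(x,y)$ against $(\mathrm{Id},T_{\mathrm{opt}})_\#\mu$ then gives
\[
c(x,T_{\mathrm{opt}}(x))-V^*(T_{\mathrm{opt}}(x))=(V^*)^c(x)\quad \mu\text{-a.e.}
\]
Theorem~\ref{thm:recovery_characterization} gives the same identity with $T_{\mathrm{rec}}$ in place of $T_{\mathrm{opt}}$, again $\mu$-a.e.

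\textbf{Step 2: differentiability of $(V^*)^c$.} Since $h\in C^1$ and $K,\cY$ are compact, the functions $x\mapsto c(x,y)-V^*(y)$ are Lipschitz on a neighborhood of $K$, uniformly in $y$. Hence $(V^*)^c$ is Lipschitz there. Rademacher's theorem then gives differentiability $\cL^d$-a.e., and because $\mu\ll\cL^d$, also $\mu$-a.e.

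\textbf{Step 3: apply the contact lemma in the interior.} Because $\mu(\partial K)=0$, $\mu$-a.e. $x$ lies in $\mathrm{int}\,K$. Fix such an $x$ at which all the above a.e. properties hold. Set $y=T_{\mathrm{rec}}(x)$. The function $z\mapsto c(z,y)-V^*(y)-(V^*)^c(z)$ is nonnegative on a neighborhood of $x$ and vanishes at $x$. This is Lemma~\ref{tangential_contact_point} with $\cM$ an open ball, so $T_x\cM=\R^d$. It gives $\nabla (V^*)^c(x)=\nabla h(x-T_{\mathrm{rec}}(x))$. The same argument with $y=T_{\mathrm{opt}}(x)$ gives $\nabla (V^*)^c(x)=\nabla h(x-T_{\mathrm{opt}}(x))$.

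\textbf{Step 4: invert $\nabla h$.} Strict convexity of a $C^1$ function makes $\nabla h$ injective: if $\nabla h(a)=\nabla h(b)$ with $a\neq b$, then $\langle\nabla h(a)-\nabla h(b),a-b\rangle=0$, contradicting strict monotonicity. Therefore
\[
x-T_{\mathrm{rec}}(x)=x-T_{\mathrm{opt}}(x),\qquad\text{so}\qquad T_{\mathrm{rec}}(x)=T_{\mathrm{opt}}(x)\quad \mu\text{-a.e.}
\]

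\textbf{Main obstacle.} The only delicate points are technical. First, the contact argument needs $x$ to be an interior point, which is why the assumption $\mu(\partial K)=0$ is used. Second, $(V^*)^c$ must be defined and Lipschitz on an open neighborhood rather than just on $K$. I would handle this by defining $(V^*)^c$ on all of $\R^d$ through the same infimum formula, which preserves local Lipschitz continuity. The existence and uniqueness of $T_{\mathrm{opt}}$ is assumed in the statement, so nothing further is needed there.
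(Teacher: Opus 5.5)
Your proposal is correct and follows the paper's proof: the paper applies Theorem~\ref{thm:tangential_recovery} with $\cM=\operatorname{int}(K)$, so that $\Pi_{T_x\cM}$ is the identity, and then uses injectivity of $\nabla h$; you inline that theorem's contact-point argument at interior points instead of citing it. Doing it inline is slightly cleaner, because it avoids the fact that $\operatorname{int}(K)$ is not a compact manifold, which Theorem~\ref{thm:tangential_recovery} formally assumes.
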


\begin{proof}[Proof of Lemma~\ref{lem:full_recovery}] From our assumptions, we can apply \citep[Theorem 1.17]{santambrogio} to conclude there exists a unique optimal transport map. \smallskip
 
Set $\mathcal M:=\operatorname{int}(K)$.
Since $\mu(\partial K)=0$ and $\mu$ is supported on $K$, we have $\mu(\R^d\setminus\mathcal M)=0$.
In particular, $\mu$-a.e.\ $x$ lies in $\mathcal M$, and for such $x$ we have $T_x\mathcal M=\R^d$.

Apply Theorem~\ref{thm:tangential_recovery} with $\mathcal M=\operatorname{int}(K)$.
Since $T_x\mathcal M=\R^d$ for $\mu$-a.e.\ $x$, the orthogonal projection
$\Pi_{T_x\mathcal M}$ is the identity, and we obtain
\begin{equation}\label{eq:fullgrad_equal}
\nabla_x c\bigl(x,T_{\mathrm{rec}}(x)\bigr)
=
\nabla_x c\bigl(x,T_{\mathrm{opt}}(x)\bigr)
\qquad\text{for }\mu\text{-a.e.\ }x\in \mathcal M.
\end{equation}

Now use the special form $c(x,y)=h(x-y)$ with $h\in C^1(\R^d)$ strictly convex.
For each fixed $x$, we have
\[
\nabla_x c(x,y)=\nabla h(x-y).
\]
Since $h$ is strictly convex, its gradient $\nabla h:\R^d\to\R^d$ is injective.
Thus \eqref{eq:fullgrad_equal} implies
\[
\nabla h\bigl(x-T_{\mathrm{rec}}(x)\bigr)=\nabla h\bigl(x-T_{\mathrm{opt}}(x)\bigr)
\quad\Longrightarrow\quad
x-T_{\mathrm{rec}}(x)=x-T_{\mathrm{opt}}(x),
\]
hence $T_{\mathrm{rec}}(x)=T_{\mathrm{opt}}(x)$ for $\mu$-a.e.\ $x$, which proves the lemma.
\end{proof}


\subsection{Proof of Theorem~\ref{thm:map_level_convg}}

Our first step is to prove a graphical convergence lemma.  We recall for a convex function $\psi$ that $\partial \psi(x_0)$ denotes the sub-differential at $x_0$ and we refer to \citet{rockafellar1998variational} for the definition of graphical convergence.

\begin{lemma}[Graphical convergence of regularized transport sets]\label{lem:graphical_convg} Let $K\subset\R^d$ be compact, $\nu\in\mathcal P_2(K)$, and $c(x,y)=\tfrac12|x-y|^2$. Let $\mu_\eps\in\mathcal P(K)$ satisfy $\mu_\eps\ll\mathcal L^d$, and $\mu_\eps\rightharpoonup\mu$ as $\eps\downarrow0$. For each $\eps>0$, let $(V_\eps,T_\eps)$ be optimal for \eqref{eq:snot_intro}. Then $(V_\eps^{cc},T_\eps)$ is also optimal. Moreover, along some $\eps_k\downarrow0$, \[ V_{\eps_k}^{cc}\to V_0,\qquad V_{\eps_k}^{c}\to V_0^{c}\qquad\text{uniformly on }K, \] and with $\phi_\eps:=\tfrac12|x|^2-V_\eps^c$ and $\phi_0:=\tfrac12|x|^2-V_0^c$, \[ \partial\phi_{\eps_k}\to\partial\phi_0\quad\text{graphically}. \] Finally, for any optimal plan $\pi^*$ between $\mu$ and $\nu$, one has $y_0\in\partial\phi_0(x_0)$ for $\pi^*$-a.e.\ $(x_0,y_0)$. \end{lemma}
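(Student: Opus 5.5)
We prove the four assertions in order: optimality of the double transform, compactness of the potentials, graphical convergence of the subdifferentials, and the support condition for limit plans.

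\emph{Step 1: replacing $V_\eps$ by $V_\eps^{cc}$.} We use the standard identity $(V^{cc})^c=V^c$ together with $V^{cc}\ge V$ on $K$. Then
\[
\int V_\eps^c\,d\mu_\eps+\int V^{cc}_\eps\,d\nu\ \ge\ \int V_\eps^c\,d\mu_\eps+\int V_\eps\,d\nu .
\]
Hence $V_\eps^{cc}$ is also a semi-dual maximizer, and it is $c$-concave. Since the $c$-transforms agree, the recovery condition \eqref{eq:recovery_constraint} is identical for $V_\eps$ and $V_\eps^{cc}$. By Theorem~\ref{thm:recovery_characterization}, $T_\eps$ therefore remains an optimal recovery map, so $(V_\eps^{cc},T_\eps)$ is optimal.

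\emph{Step 2: compactness.} For the quadratic cost on compact $K$, every $c$-transform $\inf_y\{\tfrac12|x-y|^2-V(y)\}$ is $\mathrm{diam}(K)$-Lipschitz on $K$. So $V_\eps^c$ and $V_\eps^{cc}$ are uniformly Lipschitz. We normalize by subtracting a constant, which does not affect optimality, so that $V_\eps^{cc}(0)=0$ (or fix a point of $K$). This makes the family uniformly bounded, and Arzel\`a--Ascoli gives a subsequence with $V_{\eps_k}^{cc}\to V_0$ uniformly. The $c$-transform is $1$-Lipschitz in sup norm, because $|\inf f-\inf g|\le\|f-g\|_\infty$. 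Hence $V_{\eps_k}^c=(V_{\eps_k}^{cc})^c\to V_0^c$ uniformly.

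\emph{Step 3: graphical convergence.} The functions $\phi_\eps=\tfrac12|x|^2-V_\eps^c=\sup_y\{\langle x,y\rangle-\tfrac12|y|^2+V_\eps(y)\}$ are convex and finite on $\R^d$ (extend the formula beyond $K$). They converge uniformly on $K$, and by the same sup formula they converge locally uniformly on $\R^d$. Attouch's theorem \citep{rockafellar1998variational} states that epi-convergence of proper lsc convex functions implies graphical convergence of their subdifferentials. Locally uniform convergence of finite convex functions implies epi-convergence, so we conclude $\partial\phi_{\eps_k}\to\partial\phi_0$ graphically.

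\emph{Step 4: support of optimal plans.} For each $\eps$, let $\pi_\eps$ be an optimal plan between $\mu_\eps$ and $\nu$. Kantorovich duality with the optimal $c$-concave pair $(V_\eps^c,V_\eps^{cc})$ gives
\[
V_\eps^c(x)+V_\eps^{cc}(y)=\tfrac12|x-y|^2\quad\text{on }\supp\pi_\eps .
\]
Equivalently, $y\in\partial\phi_\eps(x)$ there. Up to a further subsequence, $\pi_{\eps_k}\rightharpoonup\bar\pi$, which is optimal for $(\mu,\nu)$ by stability of optimal plans. Passing to the limit in the equality using uniform convergence shows $\supp\bar\pi\subset\{V_0^c(x)+V_0(y)=\tfrac12|x-y|^2\}\subset\operatorname{graph}\partial\phi_0$.

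This covers only the limit plan $\bar\pi$, whereas the statement requires every optimal plan $\pi^*$. Here is the main obstacle: we must show $V_0$ is an optimal semi-dual potential for $(\mu,\nu)$. I would take limits in
\[
\int V_\eps^c\,d\mu_\eps+\int V_\eps^{cc}\,d\nu=\tfrac12 W_2^2(\mu_\eps,\nu).
\]
Uniform convergence of the potentials together with weak convergence of $\mu_\eps$ handles the left side. Continuity of $W_2$ under weak convergence on compact $K$ handles the right side. This yields
\[
\int V_0^c\,d\mu+\int V_0\,d\nu=\tfrac12W_2^2(\mu,\nu).
\]
Then for any optimal $\pi^*$, integrating the nonnegative quantity $\tfrac12|x-y|^2-V_0^c(x)-V_0(y)$ against $\pi^*$ gives zero. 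So equality holds $\pi^*$-a.e., which is exactly $y_0\in\partial\phi_0(x_0)$ by the sup formula for $\phi_0$.
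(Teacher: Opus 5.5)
Your proposal is correct and follows essentially the paper's route: $V_\eps^{cc}\ge V_\eps$ with $(V_\eps^{cc})^c=V_\eps^c$, Arzel\`a--Ascoli plus the $1$-Lipschitz property of the $c$-transform, Attouch's theorem, and passing to the limit in the dual value identity so that complementary slackness holds for every optimal $\pi^*$ (your limit-plan detour is not needed). Two small fixes: in Step 1 the recovery conditions for $V_\eps$ and $V_\eps^{cc}$ are not literally identical, but the sandwich $V_\eps^c(x)\le c(x,T_\eps(x))-V_\eps^{cc}(T_\eps(x))\le c(x,T_\eps(x))-V_\eps(T_\eps(x))=V_\eps^c(x)$ gives what you need. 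In Step 3 you should extend $\phi_\eps$ to $\R^d$ using $V_\eps^{cc}$ rather than $V_\eps$ (which need only agree with $V_\eps^{cc}$ on $\supp\nu$), so that $|\phi_{\eps_k}-\phi_0|\le\|V_{\eps_k}^{cc}-V_0\|_{L^\infty(K)}$ holds on all of $\R^d$.
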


\begin{proof}
\textbf{Step 1: Optimality of $(V_\eps^{cc},T_\eps)$.}
By Theorem~\ref{thm:recovery_characterization},
\[
\inf_{T}\cL(V,T)
= \int_K V^{c}\,d\mu_\eps + \int_{K} V\,d\nu .
\]
Since $(V_\eps,T_\eps)$ is optimal,
\begin{equation}\label{eq:val_Veps}
\cL(V_\eps,T_\eps)
= \int_K V_\eps^{c}\,d\mu_\eps + \int_{K} V_\eps\,d\nu .
\end{equation}

By \citet[Proposition 1.34]{santambrogio} we have
$V_\eps^{cc}\ge V_\eps$ pointwise and $(V_\eps^{cc})^{c}=V_\eps^{c}$ since $c(x,y)=c(y,x)$. Hence,
\[
\inf_T\cL(V_\eps^{cc},T)
= \int_K V_\eps^{c}\,d\mu_\eps + \int_{K} V_\eps^{cc}\,d\nu
\ge \int_K V_\eps^{c}\,d\mu_\eps + \int_{K} V_\eps\,d\nu
= \cL(V_\eps,T_\eps).
\]
Since $(V_\eps,T_\eps)$ is optimal, equality must hold, and therefore
$(V_\eps^{cc},T_\eps)$ is also optimal.

\medskip
\textbf{Step 2: Uniform convergence.}
The $c$-transform is $1$-Lipschitz in the supremum norm:
\begin{equation}\label{eq:c_lipschitz}
\|f^{c}-g^{c}\|_{L^\infty(K)}\le \|f-g\|_{L^\infty(K)}.
\end{equation} Indeed, let $\delta:=\|f-g\|_{L^\infty(K)}$. Then $g(y)\le f(y)+\delta$ for all $y\in K$, hence
\[
c(x,y)-g(y)\ge c(x,y)-f(y)-\delta \geq f^c(x) - \delta \qquad \forall x,y\in K .
\]
Taking $\inf_{y\in K}$ yields $g^{c}(x)\ge f^{c}(x)-\delta$, i.e.\ $f^{c}(x)-g^{c}(x)\le \delta$.
Exchanging $f$ and $g$ gives $g^{c}(x)-f^{c}(x)\le \delta$. Therefore
\[
|f^{c}(x)-g^{c}(x)|\le \delta \quad \forall x\in K,
\] which implies \eqref{eq:c_lipschitz}. Now \eqref{eq:c_lipschitz} implies that uniform convergence of $V_{\eps}^{c}$ implies uniform convergence of $V_{\eps}^{cc}$.

Since $c(x,y)=\tfrac12|x-y|^2$ is uniformly Lipschitz on $K\times K$,
the functions $V_\eps^{c}$ are uniformly Lipschitz on $K$.
After normalization (e.g.\ $V_\eps^{c}(0)=0$), they are also uniformly bounded.
By Arzelà–Ascoli and \eqref{eq:c_lipschitz}, there exists $\eps_k\downarrow0$ such that
\[
V_{\eps_k}^{c}\to V_0^{c}
\quad\text{and}\quad
V_{\eps_k}^{cc}\to V_0
\quad\text{uniformly on }K.
\]

\medskip
\textbf{Step 3: Graphical convergence.}
Define $\phi_\eps:=\tfrac12|x|^2 - V_\eps^{c}$ and
$\phi_0:=\tfrac12|x|^2 - V_0^{c}$.
Each $\phi_\eps$ is convex, and $\phi_{\eps_k}\to\phi_0$ uniformly on $K$.
By \citep[Theorem~12.35]{rockafellar1998variational},
\[
\partial\phi_{\eps_k}\to\partial\phi_0
\quad\text{graphically}.
\]

\textbf{Step 4: Support characterization.}
For all $x,y\in K$,
\[
V_\eps^{c}(x)+V_\eps^{cc}(y)\le \tfrac12|x-y|^2.
\]
Passing to the limit yields for all $x,y \in K$
\begin{equation}
V_0^{c}(x)+V_0(y)\le \tfrac12|x-y|^2. \label{bound_on_Kantrovich_potential}
\end{equation} Now we observe that
\[ \inf_T \mathcal{L}(V_{\eps}^{cc},T) = \sup_{V } \inf_T \mathcal{L}(V,T) = W_2^2(\mu_{\eps},\nu),  \] so from \citep[Theorem 1.51]{santambrogio} and letting $\varepsilon \downarrow 0$, we obtain that
\[ \int_K V_0^c d\mu(x) + \int_K V_0 d\nu(x) = W_2^2(\mu,\nu) = \iint_{K \times K} \tfrac 12 |x-y|^2 d\pi^*(x,y), \] Hence, \eqref{bound_on_Kantrovich_potential} and the above equality implies that $V_0^c(x)+V_0(y) = \tfrac 12 |x-y|^2$ for $\pi^*$-a.e.\ $(x,y)\in\supp(\pi^*)$.
Equivalently,
\[
\phi_0(x)-\phi_0(x_0)\ge \langle x-x_0,y_0\rangle
\quad\text{for all }x,
\]
which implies $y_0\in\partial\phi_0(x_0)$ for $\pi^*$-a.e.\ $(x_0,y_0)$.
\end{proof}

Now we can prove Theorem~\ref{thm:map_level_convg}.

\begin{proof}[Proof of Theorem~\ref{thm:map_level_convg}] By Lemma~\ref{lem:graphical_convg}, along a subsequence $\eps_k\downarrow0$ we have
uniform convergence $V_{\eps_k}^{cc}\to V_0$ and $V_{\eps_k}^c\to V_0^c$ on $K$, and graphical convergence
$\partial\phi_{\eps_k}\to\partial\phi_0$, where
\[
\phi_\eps(x):=\tfrac12|x|^2 - V_\eps^c(x),
\qquad
\phi_0(x):=\tfrac12|x|^2 - V_0^c(x).
\]

\textbf{Proof of (1).}
The function $\phi_0$ is convex. By Lemma~\ref{lem:graphical_convg}, for any optimal plan $\pi^*$ between
$\mu$ and $\nu$,
\[
y\in \partial\phi_0(x)\qquad\text{for }\pi^*\text{-a.e. }(x,y).
\]
If $V_0^c$ is differentiable $\mu$-a.e., then $\phi_0$ is differentiable $\mu$-a.e. and
\[
\partial\phi_0(x)=\{\nabla\phi_0(x)\}=\{x-\nabla V_0^c(x)\}
\qquad\text{for }\mu\text{-a.e. }x.
\]
Define $T_0(x):=x-\nabla V_0^c(x)$. The support condition above implies that any optimal plan is concentrated
on the graph of $T_0$, hence $\pi^*=(\Id,T_0)_\#\mu$ and $T_{0\#}\mu=\nu$. Therefore $T_0$ is the optimal
Monge map from $\mu$ to $\nu$.

\medskip
\textbf{Proof of (2).}
Fix $x\in K$ at which $V_0^c$ is differentiable (hence $\phi_0$ is differentiable). Then
$\partial\phi_0(x)=\{\nabla\phi_0(x)\}$ with $\nabla\phi_0(x)=x-\nabla V_0^c(x)$.
By graphical convergence and \citep[Exercise~4.2(a)]{rockafellar1998variational}, there exist $x_k\to x$ and
$p_k\in\partial\phi_{\eps_k}(x_k)$ such that $p_k\to\nabla\phi_0(x)$.
Since each $\phi_{\eps_k}$ is differentiable, $p_k=\nabla\phi_{\eps_k}(x_k)=x_k-\nabla V_{\eps_k}^c(x_k)$,
and therefore
\[
\nabla V_{\eps_k}^c(x_k)\to \nabla V_0^c(x),
\qquad\text{hence}\qquad
T_{\eps_k}(x_k)=x_k-\nabla V_{\eps_k}^c(x_k)\to x-\nabla V_0^c(x)=T_0(x).
\]

\medskip
\textbf{Optimality of $(V_0,T_0)$ for $(\mu,\nu)$.}
For each $\eps$, $(V_\eps^{cc},T_\eps)$ is optimal for \eqref{eq:snot_intro} with measures $(\mu_\eps,\nu)$,
so by Theorem~\ref{thm:recovery_characterization},
\[
\cL(V_\eps^{cc},T_\eps)=\int_K V_\eps^c\,d\mu_\eps+\int_K V_\eps^{cc}\,d\nu
= \sup_V \int_K V^c d\mu + \int_K V d\nu = W_2^2(\mu_\eps,\nu).
\]
Stability of $W_2$ under weak convergence (see \citep[Theorem 1.51]{santambrogio}) yields
$W_2^2(\mu_{\eps_k},\nu)\to W_2^2(\mu,\nu)$.
Using the uniform convergence $V_{\eps_k}^c\to V_0^c$ and $V_{\eps_k}^{cc}\to V_0$ on $K$ and
$\mu_{\eps_k}\rightharpoonup\mu$, we obtain
\[
\int_K V_{\eps_k}^c\,d\mu_{\eps_k}+\int_K V_{\eps_k}^{cc}\,d\nu
\;\longrightarrow\;
\int_K V_0^c\,d\mu+\int_K V_0\,d\nu.
\]
Hence,
\begin{equation}
\int_K V_0^c\,d\mu+\int_K V_0\,d\nu \;=\; W_2^2(\mu,\nu), \label{equality_primal_dual}
\end{equation}
so $V_0$ is a dual optimizer for $(\mu,\nu)$. To conclude that $(V_0,T_0)$ is an optimal solution for \eqref{eq:snot_intro}, by Theorem~\ref{thm:recovery_characterization} it remains to show that
\begin{equation} V_0^c(x) =  c(x,T_0(x))-V(T_0(x)) \text{ for } \mu \text{ a.e. } x. \label{equality_support_VC} \end{equation} Observe from the definition of the c-transform, we always have that $ V_0^c(x) \leq  c(x,T_0(x))-V(T_0(x))$, so it suffices to show the reverse inequality. Now from \eqref{equality_primal_dual} and the existence of an optimal map we have 
\[ \int_{\cX} V_0^c(x) d\mu(x) + \int_{\cY} V_0(y) d\nu(y)  = \cL(V_{\eps}^{cc},T_{\eps}) = W_2(\mu,\nu) = \int_{\cX} c(x,T_0(x)) d\mu(x).  \] This combined with $\nu = (T_0)_{\#} \mu$ and $V_0^c(x) \leq  c(x,T_0(x))-V(T_0(x))$ implies \eqref{equality_support_VC}.
\end{proof}

\subsubsection{Proof of Remark~\ref{rmk_c_transform}} \label{proof_remark}

\begin{proof}[Proof of Remark~\ref{rmk_c_transform}] Observe that as $(V_{\eps},T_{\eps})$ is an optimal solution of \eqref{eq:snot_intro} with source measure $\mu_{\eps}$ and target measure $\nu$, we have
\[ W_2^2(\mu_{\eps},\nu) = \int_{K} V_{\eps}^c(x) d\mu(x) + \int_K V_{\eps}(x) d\nu(x).  \] Because $\nu \ll \mathcal{L}^d$ and $\mathcal{L}^d(\partial K)=0$, \citep[Theorem 1.17]{santambrogio} implies there is an optimal transport map $T_{\#} \nu = \mu_{\eps}$ such that
\[ \int_{K} V_{\eps}^c(x) d\mu(x) + \int_K V_{\eps}(x) d\nu(x) = W_2^2(\mu_{\eps},\nu) = \int_K \tfrac12 |x-T(x)|^2 d\nu(x). \] By the definition of the c-transform, we have $V_{\eps}^c(x) + V_{\eps}(y) \leq \tfrac 12 |x-y|^2$ for all $x,y$. Hence, the above equality implies $V_{\eps}^c(y) + V_{\eps}(x) = \tfrac 12 |x-y|^2$ for $y=T(x)$. Hence, from $f_y(z) := V_{\eps}^c(y) + V_{\eps}(z) - \tfrac 12 |z-y|^2$ for $y=T(x)$ obtains a global maximum at $z=x$. Therefore, from $V_{\eps}$ being Lipschitz, it is differentiable Lebesgue a.e. by Rademacher's theorem. Hence, we have that for Lebesgue a.e. $x$
\[ \nabla V_{\eps}(x) =  x-T(x). \] Recalling the arguments of Lemma~\ref{lem:graphical_convg} implies
\[  \int_{K} V_{\eps}^c(x) d\mu(x) + \int_K V_{\eps}(x) d\nu(x) = \int_{K} V_{\eps}^c(x) d\mu(x) + \int_K V^{cc}_{\eps}(x) d\nu(x).  \] Now as $V^{cc}$ is Lipschitz continuous due to the c-transform, we can repeat the argument above implies we have that for Lebesgue a.e. $x$
\[ \nabla V_{\eps}^{cc}(x) = x - T(x) \]
\end{proof}

\subsubsection{Example~\ref{maps_converge_wrongly}}

We give an example where the pointwise limit of $T_\varepsilon$ exists, but the limit is not a transport map. This shows that the differentiability assumption on the limiting potential in Theorem~\ref{thm:map_level_convg} is essential.

\begin{example}\label{maps_converge_wrongly}
Let $d=1$, $\mu=\delta_0$, and $\nu=\textnormal{Unif}(0,1)$. With $Y\sim\mathcal{N}(0,1)$, we have
$\mu_\varepsilon=\operatorname{Law}(\varepsilon Y)\sim\mathcal{N}(0,\varepsilon^2)$. As $d=1$, we find that the optimal transport map from $\mu_\varepsilon$ to $\nu$ is
\[
T^\varepsilon(x)=\Phi(x/\varepsilon),
\]
where $\Phi$ is the CDF of $\mathcal{N}(0,1)$. As $\varepsilon\to 0^+$,
\[
T^\varepsilon(x)\to T^0(x):=
\begin{cases}
0, & x<0,\\
\frac12, & x=0,\\
1, & x>0.
\end{cases}
\] Thus $(T^0)_{\#}\mu=\delta_{1/2}\neq \nu$, so the pointwise limit is not a transport map.

The associated Kantrovich potential (up to constants) is given by $V^c_\varepsilon(x)=\int_{-\infty}^x T^\varepsilon(t)\,dt$. One has that $(V^{cc}_{\eps},T^{\eps})$ is an optimal solution of \eqref{eq:snot_intro}. However,
\[  V^c_\varepsilon(x)\to x_+:=\max\{x,0\} \] This limiting potential is not differentiable at
$x=0=\operatorname{supp}(\mu)$, which shows the differentiability assumption of Theorem~\ref{thm:map_level_convg} is essentinal. 
\end{example}

\subsubsection{Proof of Proposition~\ref{thm:noise_tangential}}

\begin{proof}[Proof of Proposition~\ref{thm:noise_tangential}] By Lemma~\ref{lem:graphical_convg} we have a limiting potential.

 Let $V_0$ be a limiting potential. We will first show that $V_0$ maximizes \eqref{eq:opt_pair_V}. First the arguments of Theorem~\ref{thm:map_level_convg} implies that
 \begin{equation} \int_{\cX} V_0^c(x) d\mu(x) + \int_{\cY} V_0(y)d\nu(y) = W_2(\mu,\nu). \label{equality_primal_dual_123214}\end{equation} Now as
 \[ \inf_T\cL(V,T) = \int V^c d\mu(x) + \int V d\nu(x) \leq W_2(\mu,\nu),  \] we conclude that $V_0$ is an optimizer for \eqref{eq:opt_pair_V}. \smallskip

 Now for the tangential identity, we observe from the definition of the c-transform that
\begin{equation}\label{eq:dual_ineq_prop43}
V_0^c(x)+V_0(y)\le c(x,y)\qquad\text{for all }(x,y)\in \R^d\times\R^d,
\end{equation}
and suppose an optimal Monge map $T_{\operatorname{opt}}$ from $\mu$ to $\nu$ exists. Then \eqref{equality_primal_dual_123214} and the existence of the optimal map implies
\begin{equation}\label{eq:slackness_prop43}
V_0^c(x)+V_0(T_{\operatorname{opt}}(x))=c\bigl(x,T_{\operatorname{opt}}(x)\bigr)\qquad\text{for }\mu\text{-a.e. }x.
\end{equation}

\smallskip
We now show that $V_0^c$ is tangentially differentiable. Since $c(\cdot,y)$ is $C^1$ and $\supp(\nu)$ is compact, $x\mapsto c(x,y)$ is uniformly Lipschitz on $K$
for $y\in\supp(\nu)$, and hence $V_0^c(x)=\inf_y\{c(x,y)-V_0(y)\}$ is Lipschitz on $K$.
In particular, $V_0^c|_{\cM}$ is Lipschitz with respect to the geodesic distance on $\cM$ because the Euclidean distance is dominated by the geodesic distance on $\cM$.
By Rademacher's theorem $V_0^c|_{\cM}$ is  differentiable
$\mathcal H^m\llcorner \cM$-a.e.; equivalently, $\nabla_{\cM}V_0^c(x)$ exists for $\mu$-a.e.\ $x\in\cM$.

\smallskip
Fix $x_0\in\cM$ such that $V_0^c$ is tangentially differentiable at $x_0$ and
\eqref{eq:slackness_prop43} holds at $x_0$, and set $y_0:=T_{\operatorname{opt}}(x_0)$.
Define the smooth function
\[
f_{y_0}(x):=c(x,y_0)-V_0(y_0).
\]
By the definition of the $c$-transform,
\[
V_0^c(x)=\inf_{y}\{c(x,y)-V_0(y)\}\le c(x,y_0)-V_0(y_0)=f_{y_0}(x)
\qquad\text{for all }x\in\cM,
\]
and by \eqref{eq:slackness_prop43} we have equality at $x_0$:
\[
V_0^c(x_0)=c(x_0,y_0)-V_0(y_0)=f_{y_0}(x_0).
\]
Hence the hypotheses of Lemma~\eqref{tangential_contact_point} apply (with $\varphi=V_0^c|_{\cM}$ and $f=f_{y_0}$), yielding
\[
\nabla_{\cM}V_0^c(x_0)
=\Pi_{T_{x_0}\cM}\nabla f_{y_0}(x_0)
=\Pi_{T_{x_0}\cM}\nabla_x c(x_0,y_0)
=\Pi_{T_{x_0}\cM}\nabla_x c\bigl(x_0,T_{\operatorname{opt}}(x_0)\bigr).
\]
Since $x_0$ was arbitrary outside a $\mu$-null set, this proves
\[
\nabla_{\cM}V_0^c(x)=\Pi_{T_x\cM}\nabla_x c\bigl(x,T_{\operatorname{opt}}(x)\bigr)
\qquad\text{for }\mu\text{-a.e. }x\in\cM.
\] \end{proof}

\section{Proofs of Section~\ref{sec:noise_schedule}}

We begin by proving a general quantitative stability estimate on the optimal plans. This result is obtained from the stability estimates of \citep{merigot2020quantitative}.

\begin{theorem}[Quantitative stability of plans]\label{thm:quant_general}
Fix $\nu\in\mathcal P_2(\R^d)$ and assume:
\begin{enumerate}[label=(\roman*),leftmargin=2.2em]
\item $\nu\ll\mathcal L^d$,
\item $\supp(\nu)$ is compact and convex,
\item there exists $\Lambda>0$ such that $\Lambda^{-1}\le \nu \le \Lambda$ on $\supp(\nu)$ (a.e.).
\end{enumerate}
Fix $p\ge \max(4,d)$ and define $\alpha(p)=\frac{p}{6p+16d}$.
Let $\mu_1,\mu_2 \in\mathcal P_2(\R^d)$ be such that $\int |x|^p d\mu_i \leq M$ for some $M>0$. Then one has
\[
W_2(\pi(\mu_1,\nu),\pi(\mu_2,\nu)) \lesssim_{d,p,\nu,\Lambda,M} W_1(\mu_1,\mu_2)^{\alpha(p)}.
\] 
\end{theorem}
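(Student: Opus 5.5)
The plan is to pass through the optimal transport \emph{maps} from the fixed target $\nu$ to the varying sources, where quantitative stability results are available, and then convert map stability into plan stability. Since $\nu\ll\mathcal L^d$, Brenier's theorem provides, for each $i$, a unique optimal map $S_i:\supp(\nu)\to\R^d$ with $(S_i)_\#\nu=\mu_i$. Write $S_i=\nabla\psi_i$ with $\psi_i$ convex. The optimal plan from $\mu_i$ to $\nu$ is unique and equals $\pi(\mu_i,\nu)=(S_i,\Id)_\#\nu$.

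\textbf{Step 1: from plans to maps.} The coupling $(S_1,\Id,S_2,\Id)_\#\nu$ is admissible between the two plans. It therefore gives
\[
W_2\big(\pi(\mu_1,\nu),\pi(\mu_2,\nu)\big)\le \|S_1-S_2\|_{L^2(\nu)}.
\]
This reduces the statement to bounding $\|\nabla\psi_1-\nabla\psi_2\|_{L^2(\nu)}$ for Brenier maps out of the fixed regular measure $\nu$.

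\textbf{Step 2: stability of maps with a fixed regular source.} This is the setting of \citet{merigot2020quantitative} and \citet{delalande2023quantitative}. Under (i)--(iii) ($\nu$ supported on a compact convex set with density bounded above and below), and for targets with compact support, those works give
\[
\|\nabla\psi_1-\nabla\psi_2\|_{L^2(\nu)}\lesssim W_1(\mu_1,\mu_2)^{\beta}
\]
with an explicit exponent, essentially $1/6$ via the bound $\|\psi_1-\psi_2\|_{L^1}$-type $\lesssim W_1$ combined with a Gagliardo--Nirenberg/convexity interpolation. I would invoke that statement directly in the compactly supported case.

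\textbf{Step 3: removing compact support via moment truncation.} This is the step I expect to be the main obstacle, and it is where the exponent $\alpha(p)=p/(6p+16d)$ comes from. The plan is to fix a radius $R$ and truncate. Concretely, push $\mu_i$ through the radial projection onto $B_R$ (or condition on $B_R$), producing $\mu_i^R$.

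Two quantities must then be controlled.
\begin{enumerate}[label=(\alph*)]
\item The truncation error. It is controlled by Markov's inequality from the $p$-th moment:
\[
\|S_i-S_i^R\|_{L^2(\nu)}^2\lesssim\int_{|x|>R}|x|^2\,d\mu_i\lesssim M R^{2-p},
\]
using that the truncated map differs from $S_i$ only where $|S_i|>R$.
\item The compactly supported estimate applied to $\mu_i^R$. Here the implicit constant grows polynomially in $R$: the potentials are $R$-Lipschitz and $\psi_i$ has oscillation $\lesssim R\,\mathrm{diam}(\supp\nu)$. I expect this to produce a factor like $R^{c d}$, arising from the $\ell^\infty$-to-$L^2$ interpolation on the $d$-dimensional support.
\end{enumerate}

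Combining the two pieces gives
\[
W_2\lesssim R^{(2-p)/2}+R^{\gamma d}\,W_1(\mu_1,\mu_2)^{1/6},
\qquad W_1(\mu_1^R,\mu_2^R)\le W_1(\mu_1,\mu_2).
\]
Optimizing over $R$, with $R$ a negative power of $W_1$, should yield the exponent $p/(6p+16d)$ up to bookkeeping of the $R$-dependence. In that bookkeeping, the restriction $p\ge\max(4,d)$ is needed to keep the tail term dominating appropriately. The delicate point is tracking precisely how the constants of \citet{merigot2020quantitative} depend on the radius of the target support. I would first redo their proof with the support radius as an explicit parameter rather than try to cite it as a black box.
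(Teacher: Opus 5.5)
Your Step 1 is exactly the paper's reduction: the coupling $\bigl((S_1,\Id),(S_2,\Id)\bigr)_\#\nu$ gives $W_2(\pi(\mu_1,\nu),\pi(\mu_2,\nu))\le\|S_1-S_2\|_{L^2(\nu)}$. The gap is in Step 3, where you try to produce $\alpha(p)$ yourself.

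The justification of (a) is false for $d\ge 2$. The Brenier map $S_i^R$ from $\nu$ to $(P_R)_\#\mu_i$ is not $P_R\circ S_i$. On $\{|S_i|>R\}$ the Jacobian of $P_R\circ\nabla\psi_i$ is $\frac{R}{|S_i|}(I-uu^\top)\nabla^2\psi_i$ with $u=S_i/|S_i|$. This matrix is symmetric only when $u$ is an eigenvector of $\nabla^2\psi_i$, so $P_R\circ S_i$ is in general not the gradient of a convex function. The optimal map therefore readjusts globally, including on $\{|S_i|\le R\}$. For the conditioning variant this is immediate, since every mass inside $B_R$ gets rescaled.

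Consequently the bound $\|S_i-S_i^R\|_{L^2(\nu)}^2\lesssim\int_{|x|>R}|x|^2\,d\mu_i$ is unproven. It is itself a stability estimate for Brenier maps with a non-compactly supported target, which the compact-support result of your Step 2 cannot supply. Part (b) and the final optimization over $R$ are also guesses. You do not know how the compact-case constant grows in $R$, so nothing guarantees that the balancing reaches $\alpha(p)$, or any exponent at least as good.

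What is missing is that the estimate you are trying to rebuild is already a theorem. As the paper uses it, \citet{delalande2023quantitative} prove the following: for $\nu$ satisfying (i)--(iii) and measures $\mu_i$ with $p$-th moments bounded by $M$, one has $\|T_{\nu\to\mu_1}-T_{\nu\to\mu_2}\|_{L^2(\nu)}\le C\,W_1(\mu_1,\mu_2)^{p/(6p+16d)}$. This is where both the exponent $\alpha(p)$ and the restriction on $p$ come from. The paper's proof is exactly this citation followed by your Step 1, so you should replace your Steps 2--3 by it.

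If you still want to derive a moment version from the compact one, the comparison of $S_i$ with its truncation must itself go through the compact estimate. One way is to telescope over dyadic truncations $(P_{2^kR})_\#\mu_i$. Consecutive truncations are compactly supported and at $W_1$-distance at most $M(2^kR)^{1-p}$. You would combine this with qualitative $L^2(\nu)$ convergence of the truncated Brenier maps to $S_i$. This works only if the compact constant grows slowly enough in $R$, and the resulting exponent is determined by that growth rather than automatically equal to $\alpha(p)$.
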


\begin{proof}
Under the assumptions on $\nu$, \citet{delalande2023quantitative} gives
\begin{equation}
\|T_{\nu\to\mu_1}-T_{\nu\to\mu_2}\|_{L^2(\nu)}
\le
C\,\big(W_1(\mu_1,\mu_2)\big)^{\alpha(p)}, \label{map_stability_est}
\end{equation}
where $T_{\nu\to\rho}$ denotes the optimal transport map pushing $\nu$ forward to $\rho$.
By Brenier's theorem,
\[
\pi(\mu_i,\nu)=(T_{\nu\to\mu_i},\mathrm{Id})_\#\nu,
\qquad i=1,2.
\]
Define a coupling $\Gamma$ between $\pi(\mu_1,\nu)$ and $\pi(\mu_2,\nu)$ by
\[
\Gamma := \bigl((T_{\nu\to\mu_1},\mathrm{Id}),
(T_{\nu\to\mu_2},\mathrm{Id})\bigr)_\#\nu .
\]
Then $\Gamma$ has marginals $\pi(\mu_1,\nu)$ and $\pi(\mu_2,\nu)$. Therefore,
\[
\begin{aligned}
W_2^2(\pi(\mu_1,\nu),\pi(\mu_2,\nu))
&\le \int \bigl|(T_{\nu\to\mu_1}(y),y)
-(T_{\nu\to\mu_2}(y),y)\bigr|^2\,d\nu(y) \\
&= \int |T_{\nu\to\mu_1}(y)-T_{\nu\to\mu_2}(y)|^2\,d\nu(y),
\end{aligned}
\]
which yields from \eqref{map_stability_est}
\[
W_2(\pi(\mu_1,\nu),\pi(\mu_2,\nu))
\le \|T_{\nu\to\mu_1}-T_{\nu\to\mu_2}\|_{L^2(\nu)} \lesssim_{d,p,\nu,\Lambda,M} W_1(\mu,\mu_n)^{\alpha(p)} .
\]
\end{proof}

\subsection{Proof of Theorem~\ref{thm:semi_discrete}}

Now we prove Theorem \ref{thm:semi_discrete}:

\begin{proof}[Proof of Theorem~\ref{thm:semi_discrete}]
By Theorem~\ref{thm:quant_general}, it suffices to show that
\begin{equation}
\mathbb{E}\, W_1(\mu,\mu_N^{\varepsilon})
\;\lesssim\;
\varepsilon \;+\;
\begin{cases}
N^{-1/2}, & m=1,\\[2pt]
(\log N/N)^{1/2}, & m=2,\\[2pt]
N^{-1/m}, & m\ge 3.
\end{cases} \label{desired_bound_123}
\end{equation} 
To see this we let $\mu_N := \frac{1}{N} \sum_{j=1}^{N} \delta_{X_j}$ and observe the triangle inequality implies:
\[
\mathbb{E}W_1(\mu,\mu_N^{\varepsilon})
\le \mathbb{E}W_1(\mu,\mu_{N}) + \mathbb{E}W_1(\mu_{N},\mu_N^{\varepsilon})
\lesssim \mathbb{E} W_1(\mu_N,\mu_N^{\eps}) + \begin{cases}
N^{-1/2}, & m=1,\\[2pt]
(\log N/N)^{1/2}, & m=2,\\[2pt]
N^{-1/m}, & m\ge 3.
\end{cases}, \] where in the above inequality we used that $\mu$ is supported on a manifold of dimension $m$ to apply \citep[Lemma 15]{block2021intrinsic}. 

 To bound the remaining term,  we use the Kantorovich--Rubinstein duality. For any $1$-Lipschitz
function $\varphi$ we have
\[
\left| \int \varphi\, d\mu_N - \int \varphi\, d\mu_N^{\varepsilon}
 \right| \leq \frac{1}{N}\sum_{j=1}^{N} \left| \big(\varphi(X_j)-\varphi(X_j+\varepsilon Y_j)\big) \right| \leq \frac{\varepsilon}{N}\sum_{j=1}^{N}|Y_j|.
\]Taking the supremum over such $\varphi$ yields
\[
\mathbb{E}W_1(\mu_N,\mu_N^{\varepsilon})
\le \eps \mathbb{E}[|Y|].
\] This implies \eqref{desired_bound_123}, which allows us to conclude from Theorem~\ref{thm:quant_general}.
\end{proof}

\subsection{Proof of Theorem~\ref{thm:lower_bounds}}

We now prove a lower bound on learning plans.

\begin{theorem}[Lower bounds on finite-sample stability with noise]\label{thm:lower_bounds}
With the notation of Theorem~\ref{thm:semi_discrete}, for any $\nu \in \cP_2(\R^d)$,
\[
\mathbb{E}\Big[ W_2\big(\pi(\mu,\nu),\,\pi(\mu_N^{\varepsilon},\nu)\big)\Big]
\;\ge\;
\mathbb{E}\Big[ W_2\big(\mu,\,\mu_N^{\varepsilon}\big)\Big].
\]
Moreover, there exists $\mu$ supported on $[0,1]^m \times \{0\}^{d-m}$ such that for $m\ge 3$%
\nopagebreak
\[
\mathbb{E}\Big[ W_2\big(\mu,\,\mu_N^{\varepsilon}\big)\Big]
\;\ge\; \frac{1}{108m}\,N^{-1/m}-\varepsilon \mathbb{E}|Y|.\]
\end{theorem}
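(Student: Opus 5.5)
The first inequality follows from a projection argument. Let $P_1:(x,y)\mapsto x$ be the projection onto the first factor. It is $1$-Lipschitz from $\R^d\times\R^d$ to $\R^d$. The pushforwards are $(P_1)_\#\pi(\mu,\nu)=\mu$ and $(P_1)_\#\pi(\mu_N^\varepsilon,\nu)=\mu_N^\varepsilon$. Take any coupling $\Gamma$ of the two plans; then $(P_1\times P_1)_\#\Gamma$ couples $\mu$ and $\mu_N^\varepsilon$ with cost
\[
\int |x-x'|^2\,d\Gamma\le \int \bigl(|x-x'|^2+|y-y'|^2\bigr)\,d\Gamma .
\]
Taking the infimum over $\Gamma$ gives $W_2(\mu,\mu_N^\varepsilon)\le W_2(\pi(\mu,\nu),\pi(\mu_N^\varepsilon,\nu))$ pointwise for every realization of the samples. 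Taking expectations yields the first claim. This argument uses nothing about $\nu$ beyond finiteness of second moments, which makes the plans well defined.

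For the second inequality I use $W_2\ge W_1$ and the triangle inequality:
\[
W_1(\mu,\mu_N^\varepsilon)\ge W_1(\mu,\mu_N)-W_1(\mu_N,\mu_N^\varepsilon).
\]
The proof of Theorem~\ref{thm:semi_discrete} already gives $\mathbb E\,W_1(\mu_N,\mu_N^\varepsilon)\le\varepsilon\,\mathbb E|Y|$. It therefore remains to exhibit $\mu$ on $[0,1]^m\times\{0\}^{d-m}$ with $\mathbb E\,W_1(\mu,\mu_N)\ge \frac{1}{108m}N^{-1/m}$. I take $\mu=\mathrm{Unif}([0,1]^m)\otimes\delta_0^{\,d-m}$. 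Since the last $d-m$ coordinates vanish identically, this reduces to the $m$-dimensional uniform measure.

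The key step is a standard covering lower bound. Let $r=cN^{-1/m}$ with $c$ to be chosen. Every transport of $\mu$ onto the $N$ atoms $X_1,\dots,X_N$ must move all mass lying outside $\bigcup_j B(X_j,r)$ a distance at least $r$. This union has $\mu$-measure at most $N\omega_m r^m=\omega_m c^m$, where $\omega_m$ is the volume of the unit ball. Hence, deterministically,
\[
W_1(\mu,\mu_N)\ge r\bigl(1-\omega_m c^m\bigr).
\]
I will choose $c$ so that $\omega_m c^m\le 1/2$. Using $\omega_m\le (2\pi e/m)^{m/2}$, a choice such as $c\approx\sqrt{m}/(3\sqrt{2\pi e})$ works. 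This gives a bound of the form $\gtrsim \sqrt m\,N^{-1/m}$, which is stronger than $\frac{1}{108m}N^{-1/m}$. If the explicit constant is delicate, I will use the crude cube covering instead: balls of radius $r$ are contained in cubes of side $2r$, so the union has measure at most $N(2r)^m$. Taking $r=\frac14 N^{-1/m}$ then gives $W_1\ge \frac18 N^{-1/m}$, which certainly exceeds $\frac{1}{108m}N^{-1/m}$.

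The main obstacle is only bookkeeping: the bound must hold for every sample realization, so that taking the expectation is trivial. One point to justify is that mass near the boundary of $[0,1]^m$ still satisfies the estimate. It does, because the measure bound on the union of balls is an upper bound and holds regardless of where the balls sit. The hypothesis $m\ge3$ is not needed by this argument, but it is consistent with the stated statement. Combining the deterministic bound with the triangle inequality and $W_2\ge W_1$ gives the claim.
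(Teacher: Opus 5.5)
Your proof is correct. For the first inequality, and for the decomposition $W_2\ge W_1\ge W_1(\mu,\mu_N)-W_1(\mu_N,\mu_N^\varepsilon)$ with $\mathbb E\,W_1(\mu_N,\mu_N^\varepsilon)\le\varepsilon\,\mathbb E|Y|$, it coincides with the paper's proof.

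The one real difference is how you get the estimation lower bound. The paper uses Kantorovich--Rubinstein duality to identify $W_1(\mu,\mu_N)$ with the $W_1$ distance between $\mathrm{Unif}([0,1]^m)$ and its empirical measure in $\R^m$. It then cites Theorem~2.14 of Chewi et al.\ for $\mathbb E\,W_1\ge\frac{1}{108m}N^{-1/m}$. You instead prove a deterministic quantization bound, $W_1(\mu,\mu_N)\ge\int\min_j|x-X_j|\,d\mu(x)\ge r\bigl(1-N(2r)^m\bigr)$. This bound:
\begin{itemize}
\item needs no dimensional reduction, since a $d$-ball centred on the flat meets it in an $m$-ball;
\item holds for every sample realization, and indeed for every $N$-atom measure;
\item gives the better constant $\frac18$ (or order $\sqrt m$ via your $\omega_m$ estimate) for all $m\ge1$.
\end{itemize}
Your route gives a self-contained argument and shows that the lower bound is purely a quantization effect.

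What it does not capture is sampling fluctuation. For $m\le2$ your $N^{-1/m}$ bound is valid but falls below the $N^{-1/2}$ and $(\log N/N)^{1/2}$ rates in the upper bound of Theorem~\ref{thm:semi_discrete}. For $m\ge3$, the range the statement asserts, the quantization rate already matches the upper bound, so nothing is lost there.
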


\begin{proof}[Proof of Theorem~\ref{thm:lower_bounds}]
We define the projection onto the first component by
\[
\operatorname{proj}_1:\cX\times\cY\to\cX,
\qquad
\operatorname{proj}_1(x,y)=x.
\]
Observe that $\operatorname{proj}_1$ is 1-Lipschitz continuous. \smallskip

For notational simplicity, set $\pi := \pi(\mu,\nu)$ and $\pi_N^\varepsilon := \pi(\mu_N^\varepsilon,\nu)$. From the left marginal constraints on these optimal plans, we have that
$(\operatorname{proj}_1)_\# \pi = \mu$ and
$(\operatorname{proj}_1)_\# \pi_N^\varepsilon = \mu_N^\varepsilon$.

Let $\gamma\in\Pi(\pi,\pi_N^\varepsilon)$ be any coupling between $\pi$ and
$\pi_N^\varepsilon$.
Then the pushforward $(\operatorname{proj}_1,\operatorname{proj}_1)_\#\gamma$ is a coupling between $(\operatorname{proj}_1)_\#\pi=\mu$ and
$(\operatorname{proj}_1)_\#\pi_N^\varepsilon=\mu_N^\varepsilon$. \smallskip

Using the $1$-Lipschitz property of $\operatorname{proj}_1$ and the definition of $W_2$, we obtain
\[
W_2(\mu,\mu_N^\varepsilon)^2
\le
\iint \|\operatorname{proj}_1(z)-\operatorname{proj}_1(z')\|^2 \, d\gamma(z,z')
\le
\iint |z-z'|^2 \cdot \, d\gamma(z,z').
\]
Taking the infimum over all $\gamma\in\Pi(\pi,\pi_N^\varepsilon)$ yields
\[
W_2(\mu,\mu_N^\varepsilon)
\le
W_2(\pi,\pi_N^\varepsilon).
\] Taking expectation implies the lower bound. \smallskip

Now assume $\mu=\text{Unif}([0,1]^m)\otimes\delta_0^{d-m}$.
Since $W_2\ge W_1$, we have
\[
W_2(\mu,\mu_N^\varepsilon)
\ge W_1(\mu,\mu_N^\varepsilon)
\ge W_1(\mu,\mu_N)-W_1(\mu_N,\mu_N^\varepsilon).
\]

To bound $W_1(\mu_N,\mu_N^\varepsilon)$, use the coupling
\[
\gamma:=\frac{1}{N}\sum_{i=1}^N \delta_{(X_i,X_i+\varepsilon Y_i)},
\]
which yields
\[
W_1(\mu_N,\mu_N^\varepsilon)
\le \varepsilon\Big(\frac{1}{N}\sum_{i=1}^N |Y_i|\Big).
\]
Taking expectation yields
\[
\mathbb{E}\,W_1(\mu_N,\mu_N^\varepsilon)
\le \varepsilon \mathbb{E}|Y|
\]

For the estimation term, $W_1(\mu,\mu_N)$, we use the dual formulation of $W_1$.
For any $\phi\in\text{Lip}_1(\mathbb{R}^d)$,
\[
\int_{\mathbb{R}^d}\phi\,d(\mu-\mu_N)
=
\int_{[0,1]^m}\phi(x,0)\,dx
-\frac{1}{N}\sum_{i=1}^N \phi(X_i).
\]
Writing $X_i=(U_i,0)$ with $U_i\sim\text{Unif}([0,1]^m)$ i.i.d. and defining
$\psi(u):=\phi(u,0)$, we have $\psi\in\text{Lip}_1(\mathbb{R}^m)$ and
\[
\int_{\mathbb{R}^d}\phi\,d(\mu-\mu_N)
=
\int_{\mathbb{R}^m}\psi\,d(\bar\mu-\bar\mu_N),
\quad
\bar\mu:=\text{Unif}([0,1]^m),\ 
\bar\mu_N:=\frac{1}{N}\sum_{i=1}^N\delta_{U_i}.
\]
Hence, from the dual formulation of the $W_1$ metric, $W_1(\mu,\mu_N)=W_1(\bar\mu,\bar\mu_N)$. Hence,
\citep[Theorem~2.14]{chewi2024statistical} implies
\[
\mathbb{E}\,W_1(\mu,\mu_N)
\ge \frac{1}{108m}\,N^{-1/m}.
\]
Combining the above inequalities completes the proof.

\end{proof}


\subsubsection{Lemma~\ref{lem:no_uniform_Lip}}\label{app:proof-no-uniform-lip}

\begin{lemma}[No $\varepsilon$-uniform local Lipschitz bound]\label{lem:no_uniform_Lip}
Assume quadratic cost and that the OT problem from $\mu$ to $\nu$ admits no optimal Monge map (with the support of $\nu$ being compact).
For $\varepsilon>0$ assume that $\mu_\varepsilon \ll \mathcal{L}^d$ and let $T_\varepsilon$ be the Monge map from $\mu_\varepsilon$ to $\nu$.
Then if $\mu_{\eps} \rightharpoonup \mu$, then there exists $R_0>0$ such that
\[
\sup_{0<\varepsilon\le 1}\ \|\nabla T_\varepsilon\|_{L^\infty(B_{R_0})}=+\infty.
\]
\end{lemma}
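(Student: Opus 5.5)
Argue by contradiction. Suppose that for every $R>0$ the maps are uniformly Lipschitz on $B_R$, i.e. $\sup_{0<\varepsilon\le1}\|\nabla T_\varepsilon\|_{L^\infty(B_R)}\le L_R<\infty$. We will extract a limiting map and show it is an optimal Monge map from $\mu$ to $\nu$, contradicting the hypothesis. If the bound failed for some $R_0$, that $R_0$ is the one in the statement. A diagonal argument over $R\in\mathbb N$ lets us take a single subsequence for all radii at once.

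\textbf{Step 1 (compactness).} Since $T_\varepsilon$ takes values in the compact set $\supp(\nu)$, the family $(T_\varepsilon)$ is uniformly bounded. By the assumed bound it is also equi-Lipschitz on each $B_R$; here we use a Lipschitz representative of each $T_\varepsilon$, since $\nabla T_\varepsilon\in L^\infty$ on a ball gives Lipschitz continuity there. Arzelà--Ascoli plus a diagonal argument give $\varepsilon_k\downarrow0$ and a continuous map $T_0:\R^d\to\supp(\nu)$ with $T_{\varepsilon_k}\to T_0$ locally uniformly, and $T_0$ is $L_R$-Lipschitz on $B_R$.

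\textbf{Step 2 (plan convergence).} Let $\pi_k:=(\Id,T_{\varepsilon_k})_\#\mu_{\varepsilon_k}$, the optimal plan from $\mu_{\varepsilon_k}$ to $\nu$. For bounded continuous $f$ on $\R^d\times\R^d$ write
\[
\int f\,d\pi_k=\int f(x,T_{\varepsilon_k}(x))\,d\mu_{\varepsilon_k}.
\]
Weak convergence gives tightness of $(\mu_{\varepsilon_k})$, so we may restrict to a large ball, where $f(x,T_{\varepsilon_k}(x))\to f(x,T_0(x))$ uniformly. Since $x\mapsto f(x,T_0(x))$ is bounded and continuous, we get $\int f\,d\pi_k\to\int f(x,T_0(x))\,d\mu$, i.e. $\pi_k\rightharpoonup(\Id,T_0)_\#\mu$. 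The second marginal of each $\pi_k$ is $\nu$, hence $(T_0)_\#\mu=\nu$.

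\textbf{Step 3 (optimality).} By the standard stability of optimal plans under weak convergence of marginals (\citet[Theorem 1.50/1.51]{santambrogio}, as used in Lemma~\ref{lem:graphical_convg}), weak limits of optimal plans are optimal. Equivalently, $\supp(\pi_k)$ is $c$-cyclically monotone and this property passes to the limit support. So $(\Id,T_0)_\#\mu$ is an optimal plan, which makes $T_0$ an optimal Monge map and gives the contradiction.

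The main obstacle is the stability step. The supports live in unbounded $\R^d$ and $\mu_\varepsilon$ need not be compactly supported, for instance under Gaussian smoothing. I would handle this using tightness together with the compactness of $\supp(\nu)$, checking $c$-cyclical monotonicity on the limit support directly so that no second-moment convergence is needed. The only remaining care point is using a Lipschitz representative of $T_\varepsilon$ so that the convergence is uniform rather than merely a.e.
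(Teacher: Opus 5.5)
Your proposal is correct and follows the same route as the paper's proof. Both argue by contradiction, use Arzel\`a--Ascoli to extract a locally uniform limit $T_0$, show $(T_0)_\#\mu=\nu$, and then use stability of optimality to conclude that $T_0$ is an optimal Monge map. For the last step the paper appeals to continuity of $W_2$, which is stated for compactly supported measures. Your $c$-cyclical monotonicity argument is slightly more careful: it needs only weak convergence of $\mu_\varepsilon$, tightness, and finiteness of the optimal cost between $\mu$ and $\nu$.
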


\begin{proof}
We argue by contradiction. Then for every $R > 0$, there exists an $L(R)<+\infty$ such that
\[
\sup_{0<\varepsilon\le 1}\ \|\nabla T_\varepsilon\|_{L^\infty(B_{R})}\leq L(R).
\]
Now as $T_{\varepsilon}$ transports $\mu_{\varepsilon}$ to $\nu$, we see that $T_{\varepsilon}$ is uniformly bounded because $\nu$ has bounded support.
Hence, by Arzel\`a--Ascoli, we have that along a subsequence, $T_{\varepsilon}  \rightarrow T_0$ locally uniformly.
This uniform convergence implies $(T_0)_{\#} \mu = \nu$.
Also by continuity of the Wasserstein-2 metric (see \citep[Theorem 1.51]{santambrogio}), we have that $T_0$ is an optimal transport map from $\mu$ to $\nu$, which is a contradiction.
\end{proof}

\subsection{A 1D toy example illustrating instability of Monge map gradients} 

Now we present a 1D toy example that illustrates the instability in Monge maps for small $\eps>0$, while the maps become more stable as $\eps$ gets larger.

\begin{example}[A stability proxy in 1D]\label{ex:1d_stability_proxy}

In the 1D toy case with $\mu=\delta_0$, Gaussian smoothing gives $\mu_\varepsilon=\mathcal N(0,\varepsilon^2)$.
Taking $\nu=\mathrm{Unif}([-1,1])$, the optimal map is
\[
T_\varepsilon(x)=2\Phi(x/\varepsilon)-1,
\]
where $\Phi$ is the standard normal CDF. Differentiating yields
\[
T_\varepsilon'(x)=\frac{2}{\varepsilon\sqrt{2\pi}}\exp\!\Big(-\frac{x^2}{2\varepsilon^2}\Big),
\quad\text{so}\quad
\|T_\varepsilon'\|_{L^\infty(\R)}=\frac{2}{\varepsilon\sqrt{2\pi}}.
\]
Thus, in this example, larger $\varepsilon$ produces a flatter (more stable) map, while $\varepsilon\downarrow 0$ produces an increasingly sharp map.
\end{example}

\subsubsection{Proof of Lemma~\ref{lem:theta_derivatives}}\label{app:proof-theta-derivatives}
Lemma~\ref{lem:theta_derivatives} is implied by the following lemma.

\begin{lemma}[Derivatives of the reduced semi-dual]\label{lem:theta_derivatives}
Let $c(x,y)=\tfrac12|x-y|^2$ and define
\[
\mathcal J(\theta):=\int V_\theta^{c}\,d\mu+\int V_\theta\,d\nu,
\qquad
V_\theta^{c}(x):=\inf_{y}\Big\{\tfrac12|x-y|^2-V_\theta(y)\Big\}.
\]
Assume the minimizer in the definition of $V_\theta^c$ exists and is unique and write it as $T_\theta(x)$; set
 and $G_\theta(x):=\nabla_x\nabla_\theta V_\theta(T_\theta(x))$.
If $V_\theta$ is $C^2$ and $A_\theta(x)$ is invertible $\mu$-a.e., then
\[
\nabla_\theta \mathcal J(\theta)=\int \nabla_\theta V_\theta\,d\nu-\int \nabla_\theta V_\theta(T_\theta(x))\,d\mu(x),
\]
and
\[
\nabla_{\theta\theta}^2 \mathcal J(\theta)
=\int \nabla_{\theta\theta}^2 V_\theta\,d\nu-\int \nabla_{\theta\theta}^2 V_\theta(T_\theta(x))\,d\mu(x)
-\int G_\theta(x)^\top \nabla_x T_{\theta}(x)G_\theta(x)\,d\mu(x).
\]
\end{lemma}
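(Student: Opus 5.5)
The plan is to differentiate the reduced objective $\mathcal J(\theta)=\int V_\theta^c\,d\mu+\int V_\theta\,d\nu$ twice under the integral sign. The first derivative comes from an envelope (Danskin) argument, and the second from implicit differentiation of the first-order optimality condition that defines $T_\theta$. Throughout I read $A_\theta(x)$ as the Hessian in $y$ of the inner objective at its minimizer,
\[
A_\theta(x):=I-\nabla^2_{yy}V_\theta\bigl(T_\theta(x)\bigr),
\]
and I view $G_\theta(x)=\nabla_y\nabla_\theta V_\theta(y)\big|_{y=T_\theta(x)}$ as a $d\times p$ matrix, where $p=\dim\theta$.

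\textbf{Step 1 (gradient).} Since $V_\theta^c(x)=\tfrac12|x-T_\theta(x)|^2-V_\theta(T_\theta(x))$ and $T_\theta(x)$ is the unique minimizer, Danskin's theorem gives $\nabla_\theta V_\theta^c(x)=-\nabla_\theta V_\theta(T_\theta(x))$. The $y$-variation drops out because the first-order condition kills it. The bound $|\nabla_\theta V_\theta^c|\le\sup_y|\nabla_\theta V_\theta(y)|$ on compact sets lets us differentiate under $\int\cdot\,d\mu$. Adding $\int\nabla_\theta V_\theta\,d\nu$ yields the stated formula for $\nabla_\theta\mathcal J$.

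\textbf{Step 2 (derivatives of $T_\theta$).} The first-order condition reads $F(\theta,x,y):=y-x-\nabla_yV_\theta(y)=0$ at $y=T_\theta(x)$. We have $\partial_yF=A_\theta(x)$, which is invertible $\mu$-a.e. by assumption, and $V_\theta\in C^2$. The implicit function theorem then shows that $T_\theta(x)$ is locally $C^1$ in $(\theta,x)$; uniqueness of the minimizer guarantees that the implicit branch is the minimizer. Differentiating $F=0$ in $x$ gives $A_\theta\nabla_xT_\theta=I$, so $\nabla_xT_\theta=A_\theta^{-1}$. Differentiating in $\theta$ gives $A_\theta\,\partial_\theta T_\theta=\nabla_y\nabla_\theta V_\theta(T_\theta)=G_\theta$. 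Hence
\[
\partial_\theta T_\theta(x)=\nabla_xT_\theta(x)\,G_\theta(x).
\]
This identity is the bridge to the amplification factor.

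\textbf{Step 3 (Hessian).} Differentiate $-\int\nabla_\theta V_\theta(T_\theta(x))\,d\mu$ in $\theta$ by the chain rule. The result is
\[
-\nabla^2_{\theta\theta}V_\theta(T_\theta)-(\nabla_y\nabla_\theta V_\theta(T_\theta))^\top\partial_\theta T_\theta
=-\nabla^2_{\theta\theta}V_\theta(T_\theta)-G_\theta^\top\nabla_xT_\theta\,G_\theta.
\]
Together with $\int\nabla^2_{\theta\theta}V_\theta\,d\nu$, this is the claimed formula. It also proves Theorem~\ref{lem:theta_derivatives1}: the blow-up of $\nabla_xT_\theta$ from Lemma~\ref{lem:no_uniform_Lip} enters $\nabla^2_{\theta\theta}\mathcal J$ exactly through this term.

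\textbf{Main obstacle.} The algebra is routine; the delicate part is justifying the interchange of $\nabla_\theta$ and $\int d\mu$ in Step 3. That interchange needs a $\mu$-integrable dominating bound on $G_\theta^\top A_\theta^{-1}G_\theta$ locally uniformly in $\theta$, but $A_\theta^{-1}$ may be large near points where invertibility degenerates. I would first try adding to the "mild regularity" hypotheses a local uniform lower bound on the smallest eigenvalue of $A_\theta$ on $\supp\mu$, together with compactness of $\mathcal X\times\mathcal Y$. This gives continuity of $(\theta,x)\mapsto\partial_\theta T_\theta(x)$ and dominated convergence. Failing that, I would state the formula in the weaker sense of a pointwise a.e. derivative of the integrand, integrated under an explicit integrability assumption.
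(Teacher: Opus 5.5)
Your proposal is correct and follows the paper's proof essentially step for step: the envelope (Danskin) identity $\nabla_\theta V_\theta^c(x)=-\nabla_\theta V_\theta(T_\theta(x))$ gives the gradient; differentiating the first-order condition $T_\theta(x)-x-\nabla V_\theta(T_\theta(x))=0$ in $x$ and in $\theta$ gives $\nabla_xT_\theta=A_\theta^{-1}$ and $\partial_\theta T_\theta=\nabla_xT_\theta\,G_\theta$; and the chain rule then produces the Hessian. Your readings of $A_\theta=I-\nabla^2_{yy}V_\theta(T_\theta)$ and of $G_\theta$ as $\nabla_y\nabla_\theta V_\theta$ evaluated at $T_\theta(x)$ are the ones the formula requires. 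The implicit-function and dominated-convergence justifications you add, including the extra uniform eigenvalue bound on $A_\theta$ needed to pass $\nabla_\theta$ under $\int d\mu$ in Step 3, are details the paper leaves implicit rather than a different route.
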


\begin{proof}
Let $T_\theta(x)$ be the unique minimizer in the $c$-transform. Then
\[
V_\theta^{c}(x)=\tfrac12|x-T_\theta(x)|^2 - V_\theta(T_\theta(x)).
\]
Observe that by differentiating the above equality we obtain
\[
\nabla_\theta V_\theta^{c}(x)=-\nabla_\theta V_\theta(T_\theta(x)),
\]
which gives the gradient formula after integrating.

For the Hessian, we first compute $A_{\theta}(x)$. By differentiating in $x$
the first order optimality condition
\begin{equation}
0=T_\theta(x)-x-\nabla V_\theta(T_\theta(x)). \label{first_order_optimality}
\end{equation} implies that
\[ \nabla_x T_{\theta} =  (I-\nabla_{xx}^2 V_{\theta}(T_{\theta}))^{-1}. \]
Differentiating this first order optimality condition in $\theta$ implies that
\[
\partial_\theta T_\theta(x)=(\nabla_x T_{\theta}) G_\theta(x).
\]
Thus $\partial_\theta T_\theta(x)=\nabla_x T_{\theta}(x)G_\theta(x)$. Plugging this into the chain rule for
$\partial_\theta\big(\nabla_\theta V_\theta(T_\theta(x))\big)$ yields
\[
\nabla_{\theta\theta}^2 \mathcal J(\theta)
=\int \nabla_{\theta\theta}^2 V_\theta\,d\nu-\int \nabla_{\theta\theta}^2 V_\theta(T_\theta)\,d\mu
-\int G_\theta^\top \nabla_x T_{\theta}  G_\theta\,d\mu.
\]
\end{proof}

\begin{remark} From the first order optimality condition \eqref{first_order_optimality} and Theorem~\ref{thm:recovery_characterization}, we see that if $V_{\theta}$ is an optimizer of \eqref{eq:opt_pair_V}, then $(V_{\theta},T_{\theta})$ is an optimal solution of \eqref{eq:snot_intro}. Hence, by Lemma~\ref{lem:full_recovery} under an absolute continuity assumptions on $\mu$, we have that $T_{\theta}$ is the unique Monge map from $\mu$ to $\nu$.
\end{remark}

\section{Implementation Details} \label{app:implementation_details}
This section provides the implementation details of the synthetic data experiments, including dataset constructions, model architectures, and the specific hyperparameters used for both the OTP and low-dimensional manifold benchmarks.

\subsection{Dataset Description}
For all datasets, let $x, y \in \mathbb{R}^d$ and define the manifold dimension $n = d/2$ for the OTP benchmark \citep{choi2025overcoming}. We decompose vectors as $x = (x_1, x_2)$ and $y = (y_1, y_2)$, where $x_1, y_1 \in \mathbb{R}^n$ and $x_2, y_2 \in \mathbb{R}^{d-n}$. Let $e_1 = (1, 0, \dots, 0) \in \mathbb{R}^n$.

\begin{itemize}
    \item \textbf{Perpendicular}:  $\mu$ is supported on $[-1, 1]^n \times \{0\}^{d-n}$ with $x_1 \sim \text{Unif}([-1, 1]^n)$. Similarly, $\nu$ is supported on $\{0\}^{d-n} \times [-1, 1]^n$ with $y_2 \sim \text{Unif}([-1, 1]^n)$.
    
    \item \textbf{One-to-Many}: $x \sim \mu$ as in Perpendicular. Target $y \sim \nu$ has $y_1 \sim \text{Unif}([-1, 1]^n)$, while $y_2$ is sampled from a discrete distribution $\text{Cat}(\{e_1, -e_1\}, \{0.5, 0.5\})$.
\end{itemize}

\subsection{Model Architecture and Training}
We maintain a consistent architecture across all synthetic experiments. Both the potential function $v_\phi$ and the transport map $T_\theta$ are parameterized as neural networks with a single hidden layer and ReLU activations. We use a hidden dimension of 256 for $d \in \{2, 4\}$ and 1024 for $d \ge 16$. We use the Adam optimizer with $(\beta_1, \beta_2) = (0, 0.9)$ and a learning rate of $10^{-4}$. Training runs for 20,000 iterations with a batch size of 128. To the optimization of the inner loop in the dual formulation, we perform $K_T = 20$ updates to $T_\theta$ for every single update to $v_\phi$.

\paragraph{Noise Annealing Implementation}
For smoothed models, we generate perturbed data $\hat{x} = x + \sigma z$, where $z \sim \mathcal{N}(\mathbf{0}, \mathbf{I})$.
\begin{itemize}
    \item \textbf{Baseline OTP}:  Follows a linear interpolation from $\sigma_{\text{max}} = 0.2$ to $\sigma_{\text{min}} = 0.05$. The noise level is updated every $P=2000$ iterations using a stepwise rule: $\sigma_k = (1-t) \sigma_{\text{max}} + t \sigma_{\text{min}}$, where $t = (\lfloor k/P \rfloor \cdot P + 1) / K$.
    \item Our Approach: Adopts the rate-optimal schedule $\varepsilon_{\text{eff}}(n) = \max(\varepsilon_{\text{stat}}(n), \varepsilon_{\text{min}})$ as defined in Eq. \ref{eq:estat}.
\end{itemize}

\subsection{Hyperparameters}
This section details the hyperparameters used for the numerical experiments. We denote $\tau$ as the cost intensity parameter in $c(x, y) = \tfrac{\tau}{2}\| x-y \|_{2}^{2}$ and $\lambda$ as the $R_1$ regularization weight \citep{which-converge}.

\subsubsection{Neural OT Plan accuracy evaluation}
For the low-dimensional manifold experiments ($d=256$), the original OTP baseline configurations ($\lambda=1$) were found to be unstable. To ensure a meaningful baseline, we increased the regularization weight to $\lambda=5$ for both OTM and OTP in that setting. Conversely, our approach remained stable with a single configuration across both benchmarks.

\begin{table}[h]
\centering
\caption{\textbf{Summary of Hyperparameters.} 
Here, $\tau$ denotes the cost intensity, $\lambda$ the $R_1$ regularization weight, and $\sigma_{\min}$ the terminal noise floor used in smoothing-based methods.}
\label{tab:hyperparams}
\small
\begin{tabular}{llcccc}
\toprule
\textbf{Benchmark} 
& \textbf{Model} 
& $\boldsymbol{\tau}$ 
& $\boldsymbol{\lambda}$ 
& $\boldsymbol{\sigma_{\min}}$ 
& \textbf{Remarks} \\
\midrule
\multirow{3}{*}{\shortstack[l]{Low-Dimensional Manifold \\ ($d = 256$)}} 
& OTM 
& 0.01 
& 1.0 
& -- 
& Matches OTP benchmark setting \\

& OTP 
& 0.01 
& 5.0 
& 0.05 
& Unstable for smaller $\lambda$ (e.g., $\lambda = 1$) \\

& Ours 
& 0.1 
& 3.0 
& 0.1 
& Consistent performance across dimensions \\

\midrule
\multirow{5}{*}{\shortstack[l]{High-Dimensional Benchmark \\ \citep{choi2025overcoming}}} 
& OTM 
& 1.0 
& 0.0 
& -- 
& Baseline configuration \\

\cmidrule{2-6}
& \multirow{2}{*}{OTP}
& 1.0 
& 0.0 
& 0.05 
& $d < 256$ as in \citep{choi2025overcoming} \\

& 
& 0.01 
& 1.0 
& 0.05 
& $d = 256$ as in \citep{choi2025overcoming} \\

\cmidrule{2-6}
& \multirow{2}{*}{Ours} 
& 1.0 
& 0.0 
& 0.1 / 0.05$^\dagger$ 
& Rate-optimal noise scheduling for $d<256$\\

& 
& 0.1 
& 3.0 
& 0.1 / 0.05$^\dagger$ 
& Rate-optimal noise scheduling for $d=256$ \\

\bottomrule
\multicolumn{6}{l}{\footnotesize $^\dagger$~$\sigma_{\min}=0.1$ for the Perpendicular setting and $0.05$ for the One-to-Many setting.}
\end{tabular}
\end{table}

\subsubsection{Validation of the terminal noise level}
The hyperparameters used to generate the results in Figure~\ref{fig:neural_map_fit} are summarized in Table~\ref{tab:hypers_map_fit}.

\begin{table}[h]
\centering
\caption{Summary of hyperparameters used for Figure~\ref{fig:neural_map_fit}}
\label{tab:hypers_map_fit}
\small
\begin{tabular}{lc}
\toprule
\textbf{Hyperparameter} & \textbf{Value} \\
\midrule
Ambient Dimension ($d$) & 10 \\
Manifold Dimension ($m$) & 10 \\
Sample size ($n$) & 20,000,000 \\
Hidden Dimension & 2048 \\
Cost Intensity ($\tau$) & 1.0 \\
Noise Level ($\varepsilon$) & $[0.005,\, 1]$ \\
Training Iterations & 5{,}000 \\
Map Regularization ($\lambda$) & 0.0 \\
$K_T$ & 1 \\
\bottomrule
\end{tabular}

\end{table}

\subsubsection{Regularization effect of Smoothing}
The hyperparameters used to generate the results in Figure~\ref{fig:mse_vs_n} are summarized in Table~\ref{tab:hypers_mse_vs_n}.

\begin{table}[h]
\centering
\caption{Summary of hyperparameters used for Figure~\ref{fig:mse_vs_n}}
\label{tab:hypers_mse_vs_n}
\small
\begin{tabular}{lc}
\toprule
\textbf{Hyperparameter} & \textbf{Value} \\
\midrule
Ambient Dimension ($d$) & 10 \\
Manifold Dimension ($m$) & 10 \\
Sample size ($n$) & 2,000,000 \\
Hidden Dimension & 2048 \\
Cost Intensity ($\tau$) & 1.0 \\
Training Iterations & 35,000 \\
Map Regularization ($\lambda$) & 0.0 \\
$K_T$ & 1 \\
\bottomrule
\end{tabular}

\end{table}

\section{Additional Experiments} \label{app:additional_experiment}
\subsection{Higher-dimensional Manifold Benchmarks} \label{app:standard_manifold}
We evaluate the numerical accuracy of the learned Neural OT models on synthetic datasets from \citep{choi2025overcoming} across various dimensions. Here, the intrinsic manifold dimension is set to half of the ambient dimension $m=d/2$ for $d \in \{2, 4, 16, 64, 256 \}$.  

The quantitative results are summarized in Table \ref{tab:accuracy_highdim_mfd}. In the Perpendicular Dataset, our rate-optimal strategy consistently achieves the highest accuracy. Specifically, in the high-dimensional $d=256$ case, our approach significantly outperforms the vanilla OTM and remains superior to the heuristic OTP. This confirms that grounding the noise schedule in intrinsic manifold rates improves map identifiability in complex ambient spaces.
In the One-to-Many Dataset, both our approach and the original OTP exhibit comparably strong performance, particularly for $d \ge 64$. Both smoothing-based methods successfully prevent the catastrophic "metric explosion" seen in the unregularized OTM (e.g., $d=16$), demonstrating that smoothing is essential for optimization stability in multi-modal transport tasks.

\begin{table*}[h]
    \centering
    \caption{\textbf{Quantitative comparison of numerical accuracy} on synthetic benchmarks from \citet{choi2025overcoming}. Models are evaluated on transport cost error $D_{cost} (\downarrow)$ and target distribution error $D_{target} (\downarrow)$. \textbf{Boldface} and \underline{underlined} values indicate the best and second-best performance, respectively. 
    }
    \label{tab:accuracy_highdim_mfd}
    \scalebox{0.8}{
    \begin{tabular}{c c c c c c c}
        \toprule
        \multirow{2}{*}{Ambient Dim ($d$)} & \multirow{2}{*}{Manifold Dim ($m$)} & \multirow{2}{*}{Model} & \multicolumn{2}{c}{Perpendicular} & \multicolumn{2}{c}{One-to-Many}  \\
        \cmidrule{4-5} \cmidrule{6-7}
        & & & $D_{cost}$ & $D_{target}$ & $D_{cost}$ & $D_{target}$ \\
        \midrule
        \multirow{3}{*}{$d=2$} & \multirow{3}{*}{$1$} & OTM & $0.129 \pm 0.113$ & $0.457 \pm 0.292$ & $0.141 \pm 0.109$ & $0.063 \pm 0.037$ \\
                                & & OTP & \underline{$0.034 \pm 0.021$} & $\underline{0.015} \pm 0.009$ & $\underline{0.018} \pm 0.011$ & $\underline{0.063} \pm 0.044$ \\
                                & & Ours & $\textbf{0.024} \pm 0.031$ & $\textbf{0.005} \pm 0.009$ & $\textbf{0.010} \pm 0.004$ & $\textbf{0.047} \pm 0.022$ \\
                                
        \midrule
        \multirow{3}{*}{$d=4$}  & \multirow{3}{*}{$2$} & OTM & $\underline{0.049} \pm 0.036$ & $0.029 \pm 0.012$ & $0.111 \pm 0.032$ & $0.060 \pm 0.040$ \\
                                & & OTP & $0.063 \pm 0.047$ & $\underline{0.026} \pm 0.024$ & $\textbf{0.017} \pm 0.007$ & $\underline{0.054} \pm 0.039$ \\
                                & & Ours & $\textbf{0.042} \pm 0.035$ & $\textbf{0.011} \pm 0.006$ & $\underline{0.018} \pm 0.008$ & $\textbf{0.045} \pm 0.025$ \\

        \midrule
        \multirow{3}{*}{$d=16$} & \multirow{3}{*}{$8$} & OTM & $0.265 \pm 0.143$ & $4.050 \pm 1.749$ & $34.454 \pm 34.442$ & $36.468 \pm 34.380$ \\
                                & & OTP  & $\underline{0.079} \pm 0.049$ & $\underline{0.618} \pm 0.020$ & $\textbf{0.097} \pm 0.026$ & $\textbf{0.693} \pm 0.059$ \\
                                & & Ours & $\textbf{0.066} \pm 0.032$ & $\textbf{0.616} \pm 0.018$ & $\underline{0.120} \pm 0.040$ & $\underline{0.763} \pm 0.073$ \\

        \midrule
        \multirow{3}{*}{$d=64$} & \multirow{3}{*}{$32$} & OTM & $\underline{2.175} \pm 1.115$ & $19.865 \pm 1.035$ & $0.287 \pm 0.154$ & $12.341 \pm 0.364$ \\
                                & & OTP & $2.435 \pm 3.031$ & $\underline{15.021} \pm 6.719$ & $\textbf{0.119} \pm 0.015$ & $\textbf{9.856} \pm 0.033$ \\
                                & & Ours  & $\textbf{0.401} \pm 0.237$ & $\textbf{10.367} \pm 0.482$ & $\underline{0.144} \pm 0.004$ & $\underline{9.883} \pm 0.036$ \\
                                
        \midrule
        \multirow{3}{*}{$d=256$} & \multirow{3}{*}{$128$} & OTM & $15.414 \pm 4.493$ & $80.841 \pm 2.630$ & $\textbf{0.211} \pm 0.049$ & $\underline{62.047} \pm 0.196$ \\
                                & & OTP & $\underline{6.198} \pm 3.376$ & $\underline{62.538} \pm 3.412$ & $\underline{0.363} \pm 0.028$ & $\textbf{61.377} \pm 0.083$ \\
                                & & Ours & $\textbf{5.609} \pm 0.274$ & $\textbf{60.656} \pm 0.237$ & $0.653 \pm 0.049$ & $61.595 \pm 0.090$ \\
                                
        \bottomrule
    \end{tabular}}
\end{table*}

\begin{figure}[h]
    \centering
    \includegraphics[width=0.4\textwidth]{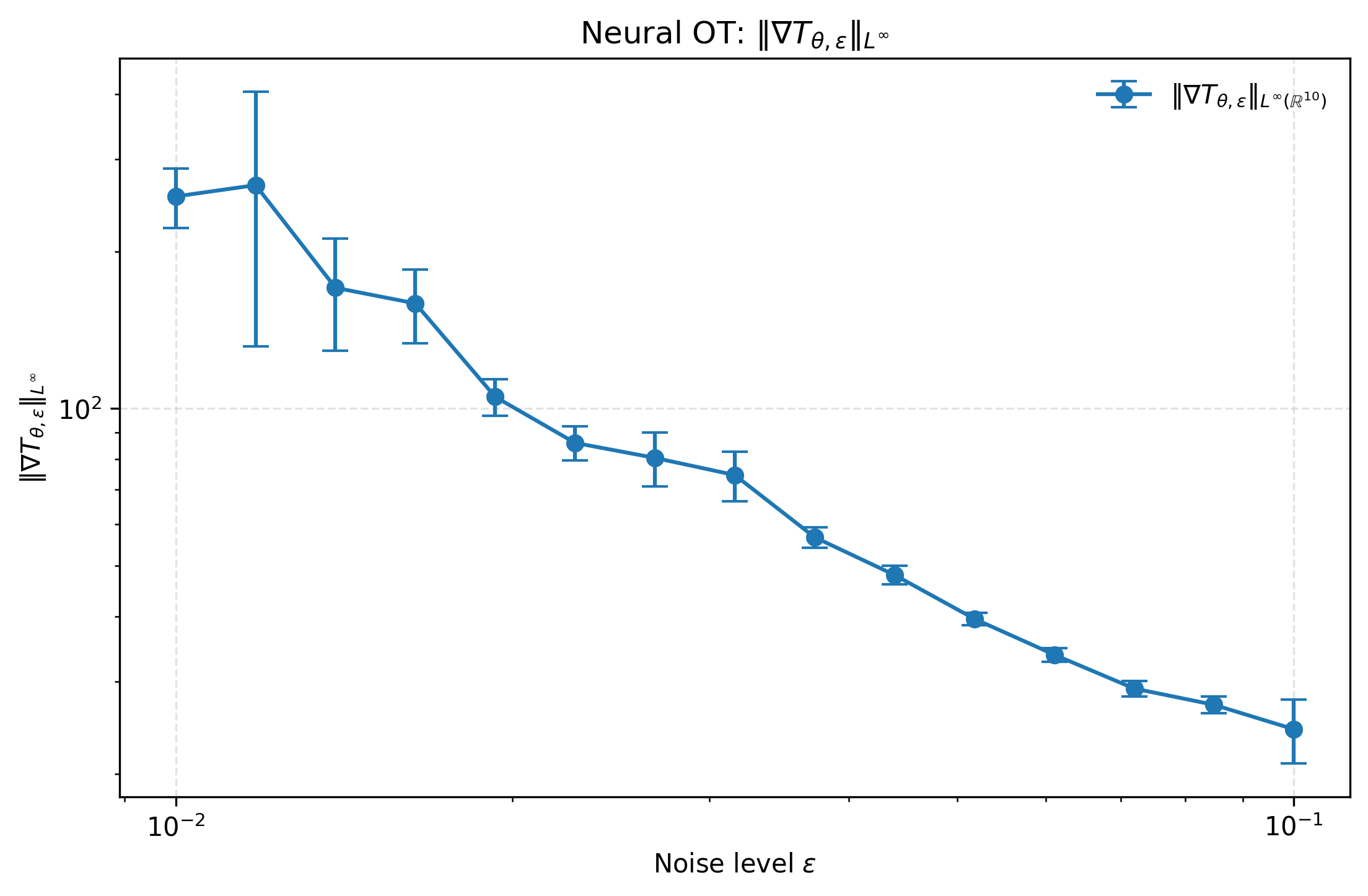}
    \caption{\textbf{Learned map gradient norm vs. $\varepsilon$.} As noise vanishes, $\|\nabla T_{\theta,\varepsilon}\|_{L^\infty}$ diverges, reflecting the singularity of the exact OT map. This gradient blow-up shows the ill-conditioning observed in training.} 
    \label{fig:map_gradient_blow_up}
    \vspace{-8pt}
\end{figure}

\subsection{Regularization effect of Smoothing}
We empirically validate the theoretical predictions established in Theorem \ref{lem:theta_derivatives1} and Lemma \ref{lem:no_uniform_Lip} regarding the behavior of Neural OT maps under vanishing smoothing. By considering the specific case where $\mu=\delta_0$ and $\nu=\mathcal{N}(0,I_{10})$, the smoothed source measure becomes $\mu_{\eps} = \mathcal{N}(0,\eps^2 I_{10})$. In this regime, the optimal Monge map is analytically given by $T_\varepsilon(x)=x/\varepsilon$, which possesses a Jacobian $\nabla T_\varepsilon=\varepsilon^{-1}I_{10}$ that diverges as $\varepsilon \to 0$.

The results in Figure \ref{fig:map_gradient_blow_up} confirm that the learned Neural OT map inherits this small-$\varepsilon$ gradient blow-up. As the noise level $\varepsilon$ vanishes, the supremum norm of the gradient $\|\nabla T_{\theta,\varepsilon}\|_{L^\infty}$ increases rapidly, mirroring the $\varepsilon^{-1}$ singularity of the exact transport map. Because Theorem \ref{lem:theta_derivatives1} explicitly links the Hessian of the training objective $\nabla^2_{\theta\theta}\mathcal{J}(\theta)$ to the Jacobian $\nabla T_\theta$, this gradient divergence provides a clear numerical mechanism for the ill-conditioning observed in training.

\section{Related Works} \label{app:related_works}
Neural Optimal Transport methods aim to learn the optimal transport maps between the source and target distributions using neural networks. Many approaches proposed methods based on the semi-dual formulation of the optimal transport problem \cite{otm, fanscalable, uotm, otmICNN, fanTMLR}. These approaches have been successfully applied to generative modeling \citep{otm, fanTMLR, uotm, sjko}, point cloud completion \citep{uot-upc}, and inverse problems \citep{gazdieva2025optimal}. 

Despite these successes, Semi-dual Neural Optimal Transport (SNOT) suffers from a fundamental limitation known as the \textbf{spurious solution problem}. The spurious solution problem refers to a failure mode of SNOT, where the solution to the max-min objective $\mathcal{L}(V, T)$ \eqref{eq:snot_intro} fails to recover the correct optimal transport map. Although the ground-truth Kantorovich potential and transport map $(V^{\star}, T^{\star})$ are a solution to the max-min formulation, the converse does not generally hold \citep{otm, fanTMLR}. In particular, 
even when the max-min objecive is solved exactly, the recovered map $T_{rec}$ may fail to push the source measure $\mu$ to the target $\nu$, leading to incorrect transport maps.

\citet{fanTMLR} showed that, when $\mu$ is atomless, a stronger condition of optimality, i.e., the saddle point solution $(V_{sad}, T_{sad})$ of $\mathcal{L}_{V, T}$, is equivalent to the Kantorovich optimal potential and optimal transport map. Specifically, a saddle point satisfies
\begin{equation} \label{eq:saddle_def}
        V_{sad} \in \arg\max_{V} \mathcal{L} (V, T_{sad}), \qquad
        T_{sad} \in \arg\min_{T} \mathcal{L} (V_{sad}, T).    
\end{equation} 
However, this recovery is not guaranteed for the generic max-min solution of $\mathcal{L}_{V, T}$.

More recently, \citet{choi2025overcoming} established a sufficient condition under which the max-min solution recovers the optimal transport map. Their condition requires that the source measure $\mu$ does not assign positive mass to measurable sets of Hausdorff dimension at most $d-1$, where $d$ denotes the ambient dimension. To enforce this condition in practice, they proposed smoothing regularization of the source measure and analyzed the resulting convergence of the learned optimal transport plan $\pi^{\star}$.

In this paper, we provide a sharper and more comprehensive theoretical analysis of the spurious solution problem within the SNOT framework. First, while \citet{choi2025overcoming} derived a sufficient condition to prevent spurious solutions, we characterize the degree to which the max-min solution remains accurate even when this condition is not satisfied (Theorems~\ref{thm:recovery_characterization} \& \ref{thm:tangential_recovery}). Second, while prior work established only weak-convergence results for the optimal transport plan $\pi^{\star}$, we prove map-level recovery guarantees for the transport map $T^{\star}$ (Theorem~\ref{thm:map_level_convg}). Third, we derive quantitative convergence rates for $\pi^{\star}$ (Theorem~\ref{thm:semi_discrete}), which are significantly stronger than previous qualitative results. Finally, based on these quantitative rates, we propose a principled noise annealing schedule $\varepsilon_{\textnormal{stat}}(N)$.

\end{document}